\title{Provably convergent acceleration in factored gradient descent \\ with applications in matrix sensing}
\author{Tayo Ajayi\textsuperscript{\rm 1}, David Mildebrath\textsuperscript{\rm 1}, Anastasios Kyrillidis\textsuperscript{\rm 1}\\ \Large \textbf{George Kollias\textsuperscript{\rm 2}, Shashanka Ubaru\textsuperscript{\rm 2}, Kris Bouchard\textsuperscript{\rm 3}} \\ 
\textsuperscript{\rm 1} Rice University \\ 
\textsuperscript{\rm 2} IBM T.J. Watson Research Center \\ 
\textsuperscript{\rm 3 }Lawrence Berkeley National Laboratory 
}
\theoremstyle{plain}
\newtheorem{corollary}{Corollary}
\newtheorem{definition}{Definition}
\newtheorem{remark}{Remark}
\DeclareMathOperator{\trace}{\textsc{Tr}}
\newcommand\RR{{\mathbb R}}
\newcommand{\tvec}{\mathtt{vec}}
\newcommand{\mbf}{\mathbf}
\newtheorem{theorem}{Theorem}
\newtheorem{lemma}{Lemma}
\begin{document}

\maketitle

\begin{abstract}
We present theoretical results on the convergence of \emph{non-convex} accelerated gradient descent in matrix factorization models with $\ell_2$-norm loss.
The purpose of this work is to study the effects of acceleration in non-convex settings, where provable convergence with acceleration should not be considered a \emph{de facto} property.
The technique is applied to matrix sensing problems, for the estimation of a rank $r$ optimal solution $X^\star \in \mathbb{R}^{n \times n}$. 
Our contributions can be summarized as follows.
$i)$ We show that acceleration in factored gradient descent converges at a linear rate; this fact is novel for non-convex matrix factorization settings, under common assumptions. 
$ii)$ Our proof technique requires the acceleration parameter to be carefully selected, based on the properties of the problem, such as the condition number of $X^\star$ and the condition number of objective function. 
$iii)$ Currently, our proof leads to the same dependence on the condition number(s) in the contraction parameter, similar to recent results on non-accelerated algorithms.
$iv)$ Acceleration is observed in practice, both in synthetic examples and in two real applications: neuronal multi-unit activities recovery from single electrode recordings, and quantum state tomography on quantum computing simulators.
\end{abstract}

Accelerated versions of gradient descent (GD), inspired by Polyak~\cite{polyak1964some} and Nesterov \cite{nesterov1983method}, are the methods  of choice in various optimization tasks, including training  deep neural networks \cite{sutskever2013importance,szegedy2015going}. 
Acceleration is based on \emph{momentum}:  as long as the iterates point to (approximately) the same direction, momentum favors the sequence of future estimates along that path.
In this way, momentum leads to empirically faster decrease in the objective function~\cite{nesterov1983method}.

Despite its widespread use \cite{kingma2014adam,tieleman2012lecture,kingma2014adam}, theoretical results on why momentum works well
are mostly restricted to the convex case, where it provably begets significant gains with respect to convergence rate \cite{beck2009fast,o2015adaptive,bubeck2015geometric,goh2017why}.
Exceptions include $i)$ settings that involve non-convex (and structured) constraint sets \cite{kyrillidis2011recipes,kyrillidis2014matrix,khanna2017iht,xu2018accelerated}, but with a convex objective;
and $ii)$ papers that consider generic non-convex settings, but do not focus on finding the global solution: they study whether acceleration leads to fast convergence to a critical point --saddle point or local minimum \cite{ghadimi2013stochastic,lee2016gradient,carmon2016accelerated,agarwal2016finding}.
In the latter case, we also observe the difficulty of achieving acceleration in theory: the gains in theory are restricted to obtaining an improved rate from $O\left(\tfrac{1}{\varepsilon^2}\right)$ to $O\left(\tfrac{1}{\varepsilon^{c}}\right)$, where $c$ is less but close to 2.

In this work, we study momentum theoretically in the context of shallow, linear neural networks, using low-rank factorization for the matrix sensing problem as our test case. 
Such simplifications have been followed in other recent works in machine learning and theoretical computer science, such as the cases of convolutional neural networks \cite{du2017gradient},
the effect of over-parameterization in training \cite{li2017algorithmic}, and landscape characterization of generic objectives \cite{boob2017theoretical,safran2017spurious}.
Our work can be seen as a first step towards understanding momentum in general non-linear models, whose training objectives are more involved and complex.

Our contributions can be summarized as follows:
\begin{itemize}[leftmargin=0.5cm]
\item For matrix sensing, we prove that a heavy-ball-like method, operating on the low-rank factors, converges linearly to the optimal solution, \emph{up to some error level that depends on the acceleration parameter}. 
Our proof technique carefully adapts ideas from the two-step momentum approach in the convex setting to the non-convex setting and requires careful selection of the step size and momentum parameters.
The convergence proof differs from that of non-accelerated methods, due to the inclusion of history of estimates per iteration. 
Our theory requires assumptions on the condition number of the objective and that of the optimal solution. 
This expands the recent results on the favorable performance of non-convex algorithms over convex methods.
\item We provide empirical evidence of the convergence of accelerated gradient descent in non-convex settings, both for synthetic and actual engineering problem settings. 
For the latter, we focus on two applications. 
$i)$ The task of identifying neuronal activities located at different depths of the brain: the idea is that the observations at the surface of the brain are linear combinations of these neural activities, lowpass filtered and attenuated as per their depths. 
$ii)$ The task of quantum state tomography (QST) using quantum computing simulators \cite{IBMQiskit}: pure quantum states are naturally represented as low-rank density matrices, and the QST becomes computationally prohibitive as the number of qubits increases. 
\end{itemize}
We identify several remaining open questions that need to be resolved. 
First, the range of values for the momentum parameter that comply with our theory is conservative (though even in this conservative regime, we still observe empirical speed-up over non-accelerated methods). We also require problem instances with fairly well-conditioned optimal solutions. 
Moreover, while we prove linear convergence when acceleration is used, the same dependence on the condition number(s) is observed as in \cite{tu2016low}; whether this can be improved is an open question.
It is also not clear whether the additive error (the $O(\mu)$ term in Theorem~\ref{thm:00}) in our proof is necessary, or is an artifact of our proof technique.
Finally, the analysis of the matrix sensing with noise is left for future work.

\section{Setup}
The matrix sensing problem has been studied  extensively in the literature; see \cite{recht2010guaranteed} and references to it.
It has numerous applications including video background subtraction~\cite{waters2011sparcs}, system approximation~\cite{fazel2001rank} and identification~\cite{liu2009interior}, robust PCA~\cite{candes2011robust}, and quantum state tomography~\cite{flammia2012quantum}.

For clarity, we consider the matrix sensing problem for square matrices $X \in \mathbb{R}^{n \times n}$, under both low rank and positive semi-definite (PSD) constraints \cite{bhojanapalli2016dropping,bhojanapalli2016global,li2017algorithmic}:\footnote{The rectangular case can be derived, after proper transformations, using ideas from \cite{park2016finding}.}
\begin{equation}\label{eq:obj}
\begin{aligned}
& \min_{X \in \mathbb{R}^{n \times n}}
& & f(X) := \tfrac{1}{2} \|\mathcal{A}(X) - y\|_2^2 \\
& \text{subject to}
& & X \succeq 0, ~\texttt{rank}(X) \leq r.
\end{aligned}
\end{equation}
$X$ is the decision variable that lives at the intersection of low-rank and PSD constraints; $y \in \mathbb{R}^m$ is the set of observations; and $\mathcal{A}(\cdot): \mathbb{R}^{n \times n} \rightarrow \mathbb{R}^m$ is the linear sensing map, where $m \ll n^2$.
We take $\mathcal{A}$ to be the trace operator, given by $\left(\mathcal{A}(X) \right)_i = \texttt{Tr}(A_i^\top X)$, for symmetric random $A_i \in \mathbb{R}^{n \times n}$ and $i=1,\dots,m$. 

A pivotal assumption is that $\mathcal{A}$ satisfies the \emph{restricted isometry property}:
\begin{definition}[Restricted Isometry Property (RIP) \cite{recht2010guaranteed}]\label{def:rip}
A linear operator $\mathcal{A} :~\mathbb{R}^{n \times n} \rightarrow \mathbb{R}^m$ satisfies the RIP on rank-$r$ matrices, with parameter  $\delta_{r} \in (0, 1)$, if the following holds for all rank-$r$ $X$:
\begin{align*}
(1 - \delta_r) \cdot \|X\|_F^2 \leq \|\mathcal{A}(X)\|_2^2 \leq (1 + \delta_r) 
\cdot \|X\|_F^2.
\end{align*}
\end{definition}

Recent works focus on the factorized version of the problem, due to time/space complexity savings:
\begin{equation}\label{eq:factobj}
\min_{U \in \mathbb{R}^{n \times r}} ~\tfrac{1}{2} \|\mathcal{A}(UU^\top) - y\|_2^2;
\end{equation}
see also Section \ref{sec:related} for a subset of references on the subject.
Observe that any matrix $X \succeq 0$ with $\texttt{rank}(X) \leq r$, can be written as $X = UU^\top$, for $U \in \mathbb{R}^{n \times r}$; this re-parameterization encapsulates both constraints in \eqref{eq:obj} leading to the non-convex formulation \eqref{eq:factobj}.
A common approach to solve \eqref{eq:factobj} is to use gradient descent, with iterates generated by the rule:\footnote{We assume cases where $\nabla f(\cdot) = \nabla f(\cdot)^\top$. 
If this does not hold, the theory goes through by carrying around 
$\nabla f(\cdot) + \nabla f(\cdot)^\top$ instead of just $\nabla f(\cdot)$,
after proper scaling.}
\begin{align*}
U_{i+1} &= U_{i} - \eta \nabla f(U_i U_i^\top) \cdot U_i \\ &= U_{i} - \eta \mathcal{A}^\dagger \left(\mathcal{A}(U_i U_i^\top) - y\right) \cdot U_i.
\end{align*}
The operator $\mathcal{A}^\dagger :~\mathbb{R}^{m} \rightarrow \mathbb{R}^{n \times n}$
is the adjoint of $\mathcal{A}$, defined as $\mathcal{A}^\dagger(x) = \sum_{i = 1}^m x_i A_i$, for $x \in \mathbb{R}^m$. The paramter $\eta>0$ is a step size.
This algorithm has been studied in \cite{bhojanapalli2016dropping,zheng2015convergent,tu2016low,park2016non,ge2017no,hsieh2017non}. 
We will refer to the above recursion as the \emph{Procrustes Flow} algorithm, as in \cite{tu2016low}. 
None of the above works have considered momentum acceleration on $U$.
In this work, we will study the accelerated version of Procrustes Flow. 

\section{Accelerated Procrustes Flow and Main Results}
We consider the following two-step variant of Procrustes flow:
\begin{align*}
U_{i+1} &= Z_{i} - \eta \mathcal{A}^\dagger \left(\mathcal{A}(Z_i Z_i^\top) - y\right) \cdot Z_i, \\
Z_{i+1} &= U_{i+1} + \mu \left(U_{i+1} - U_i\right).
\end{align*}
Here, $Z_i$ is an auxiliary variable that accumulates the ``momentum" of the iterates $U_i$; the dimensions are apparent from the context. 
$\mu$ is the momentum parameter that weighs how the previous estimates $U_i$ will be mixed with the current estimate $U_{i+1}$ to generate $Z_{i+1}$.

The above recursion is an adaptation of Nesterov's accelerated first-order method for convex problems \cite{nesterov1983method}:
briefly, consider the generic convex optimization problem
$\min_{x \in \mathbb{R}^d} g(x)$,
where $g$ is a convex function that satisfies standard Lipschitz gradient continuity assumptions with Lipschitz constant $L$.
Nesterov's accelerated method is given by the recursion:
$x_{i+1} = y_i - \tfrac{1}{L} \nabla g(y_i), ~ \text{and} ~ y_{i+1} = x_{i+1} + \mu_{i} (x_{i+1} - x_i)$,
where the parameters $\mu_i$ are chosen to obey specific rules (see \cite{nesterov2013introductory} for more details).
We borrow this momentum formulation, but we study \emph{how constant $\mu$ selections behave in non-convex problem formulations}, such as in \eqref{eq:factobj}.
We note that the theory and algorithmic configurations in \cite{nesterov1983method} do not trivially generalize to non-convex problems.

\medskip
\noindent \textbf{Preliminaries.} 
An important observation for any factorization $X = UU^\top$ is that it is not unique.
That is, if $U^\star$ is an optimal solution for~\eqref{eq:factobj}, then for any matrix $R\in\mathbb{R}^{r\times{r}}$ satisfying $R^\top{R}=I$, the matrix $\widehat{U}=U^\star{R}$ is also optimal for \eqref{eq:factobj}, because $\widehat{U} \widehat{U}^\top = U^\star R \cdot R^{\top} U^{\star \top} = U^\star U^{\star \top}$.
To resolve this ambiguity, we define the distance between a pair of matrices as the minimum distance $\min_{R \in \mathcal{O}} \left \| U  - U^\star R \right\|_F$, where $\mathcal{O}=\{R\in\mathbb{R}^{r\times{r}}\;|\;R^\top{R}=I\}$.
\begin{algorithm}
\begin{algorithmic}
   \STATE {\bfseries Input:} $\mathcal{A}$, $y$,  $r$, $\mu$, and  $\#$ iterations $J$.
  \STATE Set $\eta$ as in \eqref{eq:step}.
  \STATE Set $U_0$ randomly or according to Lemma \ref{lem:init}, and $Z_0=U_0$.
\FOR{$i=0$ to $J-1$}
\STATE  $U_{i+1} = Z_i  - \eta  \mathcal{A}^\dagger \left( \mathcal{A}(ZZ^\top) - y \right) \cdot Z_i$ \\
\STATE   $Z_{i+1} = U_{i+1} + \mu \left(U_{i+1} - U_i\right)$
\ENDFOR
 \STATE {\bfseries Output:} $X=U_{J} U_{J}^\top$
\end{algorithmic}
\caption{Accelerated Procrutes Flow} \label{alg:algo1}
\vspace{-0.1cm}
\end{algorithm}

\noindent \textbf{The algorithm.}
Algorithm \ref{alg:algo1} contains the details of the Accelerated Procrustes Flow.
The algorithm requires as input the target rank\footnote{In this work, we assume we know the target rank \emph{a priori}. For the cases where we undershoot the rank, the theory from \cite{park2016finding} can be used to extend our theory.} $r$, the number of iterations $J$, and the momentum parameter $\mu$. 
For \emph{our theory} to hold, we make the following selections.
$i)$ The $\mu$ selection is conservative as we show next, but more aggressive $\mu$ choices lead to different requirements in our theory.
$ii)$ We use step size:
\begin{align}
\eta = \tfrac{1}{4 \left( (1 + \delta_{2r})\|Z_0Z_0^\top\|_2 +\|\mathcal{A}^\dagger \left(\mathcal{A}(Z_0Z_0^\top) - y \right)\|_2 \right)}, \label{eq:step}
\end{align}
where $Z_0 = U_0$. 
Observe that $\eta$ remains constant throughout the execution, and requires two top-eigenvalue computations: that of $Z_{0}Z_{0}^{T}$ and $\mathcal{A}^\dagger \left( \mathcal{A}(Z_0 Z_0^\top - y\right)$. 
Our experiments show that this step can be efficiently implemented by any off-the-shelf eigenvalue solver, such as the Power Method or the Lanczos method.
$iii)$ The initial point $U_0$ is either randomly selected \cite{bhojanapalli2016global, park2016non}, or set according to the following Lemma:

\begin{lemma}[\cite{kyrillidis2017provable}]{\label{lem:init}}
Let $U_0$ such that $X_0 = U_0 U_0^\top = \Pi_{\mathcal{C}}\big(\tfrac{-1}{1 + \delta_{2r}} \cdot \nabla f(0_{n \times n}) \big)$, where $\Pi_{\mathcal{C}}(\cdot)$ is the projection onto the set of PSD matrices.
Consider the matrix sensing problem with RIP for some constant $\delta_{2r} \in (0, 1)$. 
Further, assume the optimal point $X^\star$ satisfies $\text{rank}(X^\star) = r$. 
Then
the initial point $U_0$ satisfies:
\begin{align*}
\min_{R \in \mathcal{O}} \|U_0 - U^\star R\|_F \leq \gamma' \cdot \sigma_r(U^\star),
\end{align*} 
where $\gamma' = \sqrt{\tfrac{1 - \tfrac{1- \delta_{2r}}{1 + \delta_{2r}}}{2(\sqrt{2}-1)}} \cdot \tau(X^\star) \cdot \sqrt{\texttt{srank}(X^\star)}$, $\tau(X^\star) = \tfrac{\sigma_1(X^\star)}{\sigma_r(X^\star)}$, and $\texttt{srank}(X) = \tfrac{\|X\|_F}{\sigma_1(X)}$.
\end{lemma}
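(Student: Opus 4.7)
The plan is to treat Lemma~\ref{lem:init} as a classical ``spectral initialization'' result and unwind it in three stages: first obtain a Frobenius bound on $X_0 - X^\star$ in matrix space, then descend to the factor space via a standard factored-distance inequality, and finally collect constants. Throughout I may use that $y = \mathcal{A}(X^\star)$, so that $-\nabla f(0_{n\times n}) = \mathcal{A}^\dagger(y) = \mathcal{A}^\dagger\mathcal{A}(X^\star)$, and thus $X_0 = \Pi_\mathcal{C}(W)$ where $W := \tfrac{1}{1+\delta_{2r}}\mathcal{A}^\dagger\mathcal{A}(X^\star)$.

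First I would exploit the fact that $X^\star$ is itself a rank-$r$ PSD matrix, hence feasible for the projection defining $X_0$. By the optimality of the (truncated-eigendecomposition) projection in both the Frobenius and operator norms (Eckart--Young for PSD matrices), one has $\|X_0 - W\|_F \leq \|X^\star - W\|_F$, and the triangle inequality then gives $\|X_0 - X^\star\|_F \leq 2\|W - X^\star\|_F$. The next step is to control $\|W - X^\star\|_F$ by RIP. Writing $(1+\delta_{2r})(W - X^\star) = (\mathcal{A}^\dagger\mathcal{A} - I)(X^\star) - \delta_{2r} X^\star$ and using the standard consequence of Definition~\ref{def:rip} that $\|\mathcal{A}^\dagger\mathcal{A}(M) - M\|_2 \leq \delta_{2r}\|M\|_2$ for rank-$r$ $M$, I would obtain a bound of the form $\|W - X^\star\|_2 \leq \tfrac{\delta_{2r}}{1+\delta_{2r}}\|X^\star\|_2$, matching the factor $1 - \tfrac{1-\delta_{2r}}{1+\delta_{2r}}$ that appears inside $\gamma'$.

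Next I would convert the operator-norm bound into a Frobenius bound using that $X_0 - X^\star$ has rank at most $2r$; more sharply, $\|X_0 - X^\star\|_F \leq \sqrt{2\,\texttt{srank}(X^\star)} \cdot \|X_0 - X^\star\|_2$ up to a constant, which explains the $\sqrt{\texttt{srank}(X^\star)}$ factor in $\gamma'$. Combined with $\|X^\star\|_2 = \sigma_1(X^\star) = \tau(X^\star)\sigma_r(X^\star)$, this produces a Frobenius bound of the form
\begin{equation*}
\|X_0 - X^\star\|_F \;\leq\; C\cdot\sqrt{\tfrac{\delta_{2r}}{1+\delta_{2r}}}\cdot\tau(X^\star)\cdot\sqrt{\texttt{srank}(X^\star)}\cdot \sigma_r(X^\star),
\end{equation*}
for an absolute constant $C$.

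Finally I would invoke the standard factored-distance inequality (as in Tu et al., which is where the $\tfrac{1}{2(\sqrt{2}-1)}$ factor comes from): for any PSD pair $X = UU^\top$, $X^\star = U^\star U^{\star\top}$ of rank $r$,
\begin{equation*}
\min_{R\in\mathcal{O}}\|U - U^\star R\|_F^2 \;\leq\; \tfrac{1}{2(\sqrt{2}-1)\,\sigma_r(X^\star)}\,\|X - X^\star\|_F^2.
\end{equation*}
Plugging in the bound on $\|X_0 - X^\star\|_F$ and using $\sigma_r(X^\star) = \sigma_r(U^\star)^2$ yields the desired inequality with the claimed $\gamma'$. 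The main obstacle I expect is bookkeeping the RIP-based operator-norm bound sharply enough so that the $\tfrac{\delta_{2r}}{1+\delta_{2r}}$ ratio emerges cleanly rather than a weaker $\delta_{2r}$-dependence; a naive use of RIP (bounding each term in the decomposition of $(\mathcal{A}^\dagger\mathcal{A} - I)(X^\star) - \delta_{2r}X^\star$ separately) would lose the $1/(1+\delta_{2r})$ normalization and therefore break the match with $\gamma'$. Everything else is essentially triangle inequality and the two imported lemmas (Eckart--Young for the projection, Tu et al.~for the factored distance).
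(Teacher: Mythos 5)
The paper itself does not prove Lemma~\ref{lem:init}: it is imported by citation from \cite{kyrillidis2017provable}, so there is no in-paper argument to compare against and your proposal has to stand on its own. Its endpoints are right --- starting from projection optimality (since $X^\star$ is feasible for $\Pi_{\mathcal{C}}$) and finishing with the Tu et al.\ factored-distance inequality (Lemma~\ref{lem:tu} in the appendix), which is indeed where the $2(\sqrt{2}-1)$ comes from --- but the middle of the chain contains two steps that do not hold.

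First, the RIP consequence you invoke, $\|\mathcal{A}^\dagger\mathcal{A}(M) - M\|_2 \leq \delta_{2r}\|M\|_2$ for rank-$r$ $M$, is false in general. What RIP gives (by polarization and testing against rank-one matrices of unit Frobenius norm) is $\|\mathcal{A}^\dagger\mathcal{A}(M) - M\|_2 \leq \delta_{2r}\|M\|_F$, with the \emph{Frobenius} norm on the right; the two differ by a factor of up to $\sqrt{r}$, which you cannot afford. Second, the conversion $\|X_0 - X^\star\|_F \lesssim \sqrt{\texttt{srank}(X^\star)}\cdot\|X_0 - X^\star\|_2$ has no justification: the stable rank of $X^\star$ constrains the spectrum of $X^\star$, not that of the error matrix $X_0 - X^\star$, whose singular values can be spread evenly over $2r$ directions (or more, if $\Pi_{\mathcal{C}}$ is a pure PSD projection) even when $\texttt{srank}(X^\star)=1$. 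The only generic bound is $\|X_0-X^\star\|_F\leq\sqrt{\rank(X_0-X^\star)}\cdot\|X_0-X^\star\|_2$, which reintroduces the $\sqrt{r}$ lost in the first step, so the detour through the operator norm does not close. There is also a smaller quantitative issue: your triangle inequality costs a factor of $2$ (hence $4$ after squaring) that is not present in $\gamma'$.

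The way such initialization lemmas are actually proved (and, from the shape of $\gamma'$, essentially the route in \cite{kyrillidis2017provable} and the FGD lemma it descends from) is to never leave Frobenius norms and function values: view $X_0$ as one step of projected gradient descent from $0_{n\times n}$ with step size $\tfrac{1}{1+\delta_{2r}}$; use projection optimality to bound $\langle\nabla f(0),X_0\rangle + \tfrac{1+\delta_{2r}}{2}\|X_0\|_F^2$ by the same expression evaluated at $X^\star$; combine with restricted smoothness at $X_0$ and restricted strong convexity at $X^\star$ (both supplied by RIP on rank-$2r$ differences, with constants $1+\delta_{2r}$ and $1-\delta_{2r}$) to get $f(X_0)\leq \tfrac{(1+\delta_{2r})-(1-\delta_{2r})}{2}\|X^\star\|_F^2$; and then lower-bound $f(X_0)$ by $\tfrac{1-\delta_{2r}}{2}\|X_0-X^\star\|_F^2$. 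This is what produces the $1-\tfrac{1-\delta_{2r}}{1+\delta_{2r}}$ and $\texttt{srank}$ factors cleanly; the final passage to $\min_{R\in\mathcal{O}}\|U_0-U^\star R\|_F$ via Lemma~\ref{lem:tu} is then exactly as you describe.
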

\noindent Since computing the RIP constants is NP-hard, in practice we compute $X_0 = \Pi_{\mathcal{C}}\big(-1 / \widehat{L} \cdot \nabla f(0) \big)$, where $\widehat{L} \in (1,2)$. 
We do not know \emph{a priori} $\tau(X^\star)$ and $\texttt{srank}(X^\star)$ to compute $\gamma'$, but they can be approximated depending on the problem at hand; \emph{e.g.}, in the quantum state tomography case, the rank could be $r=1$ (for pure quantum states), and we know apriori that $\tau(X^\star) = \texttt{srank}(X^\star) = 1$, by construction.
Compared to randomly selecting $U_0$, Lemma \ref{lem:init} involves a gradient descent computation and a top-$r$ eigenvalue calculation.
While randomly selecting $U_0$ guarantees convergence \cite{bhojanapalli2016global,park2016non}, Lemma \ref{lem:init} provides the initial conditions for our theory also leads to convergence rate guarantees.

\medskip
\noindent \textbf{Main theorem.}
The following theorem proves the convergence of accelerated Procrustes Flow under assumptions on $\mu$, the RIP constant $\delta_{2r}$, the condition number of the objective $\kappa$ (defined below) and that of the optimal solution $\tau(X^\star)$, using the initialization in Lemma \ref{lem:init}.
We note that such assumptions are needed in order to provide a concrete and qualitative convergence result.

\medskip
\begin{theorem}[Iteration invariant and convergence rate]{\label{thm:00}}
Assume that $\mathcal{A}$ satisfies the RIP with constant $\delta_{2r} \leq \sfrac{1}{10}$.
Let $U_0$ be such that \begin{small}$\min_{R \in \mathcal{O}} \|U_0 -U^\star R\|_F \leq \tfrac{\sigma_r(X^\star)^{1/2}}{10^3 \sqrt{\kappa\tau(X^\star)}}$\end{small},
where \begin{small}$\kappa := \tfrac{1 + \delta_{2r}}{1 - \delta_{2r}}$\end{small} and \begin{small}$\tau(X) := \tfrac{\sigma_1(X)}{\sigma_r(X)}$\end{small} for rank-$r$ $X$.
Set $\eta$ according to \eqref{eq:step} and the momentum parameter \begin{small}$\mu = \frac{\sigma_r(X^\star)^{1/2}}{10^3 \sqrt{\kappa\tau(X^\star)}} \cdot \frac{\varepsilon}{2 \cdot \sigma_1(X^\star)^{1/2} \cdot r}$\end{small}, for $\varepsilon \in (0, 1]$.
For $y = \mathcal{A}(X^\star)$, where \texttt{rank}$(X^\star)=r$, if $\tau(X^\star)\leq{50}$, then Algorithm~\ref{alg:algo1} returns a solution such that
\begin{small}
\begin{align*}
\min_{R \in \mathcal{O}} \|U_{J+1} - U^\star R\|_F \leq \alpha c^{J+1} \cdot \min_{R \in \mathcal{O}} \|U_0 - U^\star R\|_F + O(\mu).
\end{align*}
\end{small}
where
$c := |\lambda_1| < 1$, $\alpha := \tfrac{4}{|\lambda_1| - |\lambda_2|} \cdot \big(1 + \tfrac{\xi^2 (1 + \tfrac{1}{10^3})}{1 - \xi(1 + \tfrac{1}{10^3})} \cdot \tfrac{\varepsilon}{2}\big)$, $\lambda_i$ are the eigenvalues of $A = \begin{small}\begin{bmatrix} \xi |1 + \mu| & \xi |\mu| \\ 1 & 0 \end{bmatrix}\end{small}$, and $\xi := \sqrt{1 - \tfrac{0.393}{\kappa \tau(X^\star)}}$.
That is, the algorithm has a linear convergence rate in iterate distances (first term on RHS), up to a constant proportional to the the momentum hyperparameter $\mu$ (second term on RHS). 

Further, $U_i$ satisfies \begin{small}$\min_{R \in \mathcal{O}} \|U_i - U^\star R\|_F \leq \tfrac{\sigma_r(X^\star)^{1/2}}{10^3 \sqrt{\kappa\tau(X^\star)}}$\end{small}, for each $i \geq 1$.
\end{theorem}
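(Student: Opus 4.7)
The plan is to adapt the standard one-step distance contraction for non-convex factored gradient descent to the momentum update, and then treat the resulting inequality as a linear recurrence whose characteristic matrix is exactly the $A$ in the theorem. Setting $D_i := \min_{R \in \mathcal{O}} \|U_i - U^\star R\|_F$ and $\widetilde{D}_i := \min_{R \in \mathcal{O}} \|Z_i - U^\star R\|_F$, the first task is to prove a single-step bound of the form $D_{i+1} \leq \xi \cdot \widetilde{D}_i + (\text{small slack})$, with $\xi := \sqrt{1 - 0.393/(\kappa\tau(X^\star))}$. This is the usual Procrustes-Flow descent applied at the momentum point $Z_i$ rather than at $U_i$, and would follow the template of Bhojanapalli et al.\ and Tu et al.: expand $U_{i+1} - U^\star R = (Z_i - U^\star R) - \eta\, \mathcal{A}^\dagger(\mathcal{A}(Z_i Z_i^\top) - y)\, Z_i$, choose $R$ to be the optimal Procrustes rotation for $Z_i$, apply RIP to relate $\|\mathcal{A}(\cdot)\|_2$ and $\|\cdot\|_F$ on the rank-$2r$ error matrix, and invoke the closeness-to-optimum invariant so that $\sigma_r(Z_i) \gtrsim \sigma_r(U^\star)$ and the cross-terms combine into the claimed $\xi$.

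Next I would lift this one-step bound to a two-term recurrence by controlling $\widetilde{D}_i$ in terms of $D_i$ and $D_{i-1}$. Since $Z_i = (1+\mu) U_i - \mu U_{i-1}$, the triangle inequality (with rotations chosen carefully, and after arguing that the same rotation is near-optimal for both $U_i$ and $U_{i-1}$, using the closeness of consecutive iterates) yields $\widetilde{D}_i \leq |1+\mu|\, D_i + |\mu|\, D_{i-1}$. Combining with the descent step gives
\begin{equation*}
D_{i+1} \;\leq\; \xi |1+\mu|\, D_i \;+\; \xi|\mu|\, D_{i-1} \;+\; \varepsilon_i,
\end{equation*}
where the residual $\varepsilon_i$ of order $\mu\cdot \sigma_1(X^\star)^{1/2}$ comes from the mismatch-terms in the expansion of $\|Z_iZ_i^\top - X^\star\|_F$ that do not fully cancel because of the momentum displacement between $Z_i$ and $U_i$.

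The third step is to solve this scalar linear recurrence. Writing $v_i := (D_{i+1}, D_i)^\top$, we obtain $v_i \leq A\, v_{i-1} + \varepsilon_i \, \mathbf{e}_1$ componentwise, with $A$ exactly the matrix in the theorem statement. Diagonalizing $A = P\, \operatorname{diag}(\lambda_1,\lambda_2)\, P^{-1}$ and unrolling gives $D_{J+1} \leq \alpha |\lambda_1|^{J+1} D_0 + \sum_{i \leq J} |\lambda_1|^{J-i} \varepsilon_i$; the sum telescopes to $O(\varepsilon_{\max}/(1-|\lambda_1|)) = O(\mu)$, which accounts for the additive term. The constants $\alpha$ and the prefactor $4/(|\lambda_1|-|\lambda_2|)$ fall out of the closed-form $2\times 2$ eigenbasis inversion. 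To guarantee $|\lambda_1| < 1$, one solves the characteristic equation $\lambda^2 - \xi|1+\mu|\lambda - \xi|\mu| = 0$ and uses the hypotheses $\delta_{2r} \leq 1/10$ and $\tau(X^\star) \leq 50$ to bound $\xi$ strictly below one with enough slack that the $\mu$-perturbation does not push $\lambda_1$ out of the unit disk; this is where the specific numerical constants are pinned down.

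The invariant $D_i \leq \sigma_r(X^\star)^{1/2}/(10^3\sqrt{\kappa\tau(X^\star)})$ is then verified by induction: given the bound up to step $i$, the explicit choice $\mu \propto \varepsilon \sigma_r(X^\star)^{1/2}/(\sigma_1(X^\star)^{1/2} r \sqrt{\kappa\tau(X^\star)})$ ensures the contraction keeps $D_{i+1}$ in the same ball. I expect the main obstacle to be precisely the need to control the momentum iterate $Z_i$: the hypothesis only guarantees $U_i$ is close to $U^\star$, not $Z_i$, so one must argue that the invariant on $D_{i-1}, D_i$ (combined with the smallness of $\mu$) forces $Z_i$ to lie in the basin where $\sigma_r(Z_i)$ is still $\Theta(\sigma_r(U^\star))$ and the one-step descent with rate $\xi$ applies. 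Coupling this geometric control of $Z_i$ with the two-term recurrence, while simultaneously tracking the $O(\mu)$ slack carefully enough that it does not destroy the contraction, is what dictates the conservative range for $\mu$ and the somewhat restrictive $\tau(X^\star) \leq 50$ assumption.
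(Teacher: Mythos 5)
Your proposal follows essentially the same route as the paper: a one-step contraction $D_{i+1}\le \xi\,\widetilde{D}_i$ at the momentum point via the standard Procrustes-Flow inner-product decomposition and RIP-based strong convexity/smoothness, a lift to the two-term recurrence governed by the matrix $A$, the $2\times 2$ eigenvalue analysis with a geometric sum absorbing the $O(\mu)$ slack, and an induction to maintain the iterate invariant. The only cosmetic difference is the bookkeeping of the additive $O(\mu\,\sigma_1(X^\star)^{1/2} r)$ residual: the paper extracts it from the mismatch between the optimal Procrustes rotations of $U_i$ and $U_{i-1}$ when bounding $\min_{R\in\mathcal{O}}\|Z_i-U^\star R\|_F$ (rather than from the gradient step itself), but the resulting recurrence and its unrolling are identical to yours.
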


The intuition is that the right hand side of the recursion: $i)$ depends on the initial distance $\min_{R \in \mathcal{O}} \|U_0 - U^\star R\|_2$, as in convex optimization, $ii)$ there are two parameters $c, \alpha$ (which are both constants) that appear as contraction constants, and $iii)$ $c$ drops exponentially fast to zero, since $|\lambda_1| < 1$, which means that that the product $\alpha c^{J+1}$ goes exponentially fast to zero.

$\alpha$ depends on the spectral gap of the contraction matrix $A$, as well as on $\xi$, where the latter depends on the condition number of the objective and the condition number of $X^\star$.

Finally, the theory generates an additional term $O(\mu)$.
Similar results exist in the literature (see, for example, the constant step size convergence of convex SGD, where one achieves linear convergence up to an error level that depends on the step size).  

The detailed proof is provided in the supplementary material.
To the best of our knowledge, this is the first proof for accelerated factored gradient descent, under common assumptions, both regarding the problem setting, and the assumptions made for its completion. 
The proof differs from state of the art proofs for non-accelerated factored gradient descent: due to the inclusion of the memory term, three different terms --$U_{i+1}, U_i, U_{i-1}$-- need to be handled simultaneously. 
Further, the proof differs from recent proofs on non-convex, but non-factored, gradient descent methods, as in \cite{khanna2017iht}: the distance metric over rotations $\min_{R \in \mathcal{O}} \|Z_i - U^\star R\|_F$, where $Z_i$ includes estimates from two steps in history, is not amenable to simple triangle inequality bounds, and a careful analysis is required.
Finally, the analysis requires the design of two-dimensional dynamical systems, where we require to characterize and bound the eigenvalues of a $2\times 2$ contraction matrix.

\begin{remark}
Tighter analysis w.r.t. constants would result into milder assumptions in the theorem; the choice of constants is made just for proof of concept.  We note that even under the conservative assumptions of Theorem \ref{thm:00}, we observe empirical speedup over non-accelerated methods.
\end{remark}

\begin{remark}
Observe that the main results depend on $\tau(X^\star)$ and $\kappa$;\footnote{This is not obvious from this result, but it is shown in the full proof, where we bound $\kappa$ using $\delta_{2r} \leq \sfrac{1}{10}$.} classic results from convex optimization depend on the $\sqrt{\kappa}$. 
It remains an open question whether provable acceleration with dependence on at least $\sqrt{\kappa}$ can be achieved using our method. 
On the other hand, we also note that it is not generally known whether the acceleration guarantee generalizes to all functions in convex optimization \cite{lessard2016analysis}. 
However, acceleration is observed in practice.
\end{remark}

\begin{remark}
Different choices of ($\tau(X^\star)$, $\mu$, $\delta_{2r}$) lead to different interdependences.
The current proof requires a strong assumption on $\mu$, which depends on quantities that might not be known a priori. 
However, as we note above, there are applications where knowing or approximating $\tau(X^\star), \texttt{srank}(X^\star),$ and $\kappa$ can help setting $\mu$. 
Further, as we show in the computational experiments, the theory is conservative; a much larger $\mu$ leads to stable, improved performance.
\end{remark}

\section{Related Work}\label{sec:related}
Matrix sensing was first studied in the convex setting using nuclear norm minimization~\cite{recht2010guaranteed,lee2009guaranteed,liu2009interior}.
Non-convex approaches involving rank-constraints have been proposed in \cite{jain2010guaranteed,lee2010admira,kyrillidis2014matrix}. 
In both cases, the algorithms involve a full or at least a truncated SVD per iteration. 
General low-rank minimization problems using the non-convex factorized formulation have been studied recently, due to  computational and space complexity 
advantages~\cite{jain2013provable,chen2015fast,zhao2015nonconvex,park2016provable,park2016non,sun2016guaranteed,bhojanapalli2016dropping,bhojanapalli2016global,park2016finding,ge2017no,hsieh2017non,kyrillidis2017provable}.
The factorized version was popularized in solving semi-definite programming~\cite{burer2003nonlinear}. 
Using factorization in matrix sensing, the \emph{Procrustes Flow} approach was studied in~\cite{tu2016low,zheng2015convergent}, with certain initializations techniques, different from the current work: we rely on a unique top-$r$ SVD computation, instead of multiple ones. 
\emph{None of the works above consider the analysis of acceleration in the proposed methods.}

\section{Experiments}
\subsection{Synthetic experiments}{\label{sec:synthetic}}
In this set of experiments, we compare Accelerated Procrustes Flow with 
$i)$ the Matrix ALPS framework \cite{kyrillidis2014matrix}, \emph{a projected gradient descent algorithm}---an optimized version of matrix IHT--- operating on the full matrix variable $X$, with adaptive step size $\eta$ (we note that this algorithm has outperformed most of the schemes that work on the original space $X$)
$ii)$ the plain Procrustes Flow algorithm \cite{tu2016low}, where we use the step size as reported in \cite{bhojanapalli2016dropping}, since the later has reported better performance than vanilla Procrustes Flow.
\emph{We note that the Procrustes Flow algorithm is the same as our algorithm without acceleration.}
Further, the original Procrustes Flow algorithm relies on performing many iterations in the original space $X$ as an initialization scheme, which is often prohibitive as the problem dimensions grow. 
Both for our algorithm and the plain Procrustes Flow scheme, we use both random initializations, as well as specific initializations according to Lemma \ref{lem:init}; this is also supported by the work \cite{bhojanapalli2016dropping,park2016finding}. 
In that case Procrustes Flow algorithm is identical to the Factored Gradient Descent (\texttt{FGD}) algorithm in \cite{bhojanapalli2016dropping,park2016finding}.

To properly compare the algorithms in the above list, we pre-select a common set of problem parameters.
We fix the dimension $n = 4096$ and the rank of the optimal matrix $X^\star \in \mathbb{R}^{n \times n}$ to be $r = 10$; similar behavior has been observed for other values of $r$, and are omitted.
We fix the number of observables $m$ to be $m = c \cdot n \cdot r$, where $c \in \{3, 5\}$.
In all algorithms, we fix the maximum number of iterations to 4000, and we use the same stopping criterion: \begin{scriptsize}$\|X_{i+1} - X_i\|_F / \|X_i\|_F \leq \texttt{tol}=10^{-3}$\end{scriptsize}. 

\begin{figure*}[ht]
\centering
\includegraphics[width=0.31\textwidth]{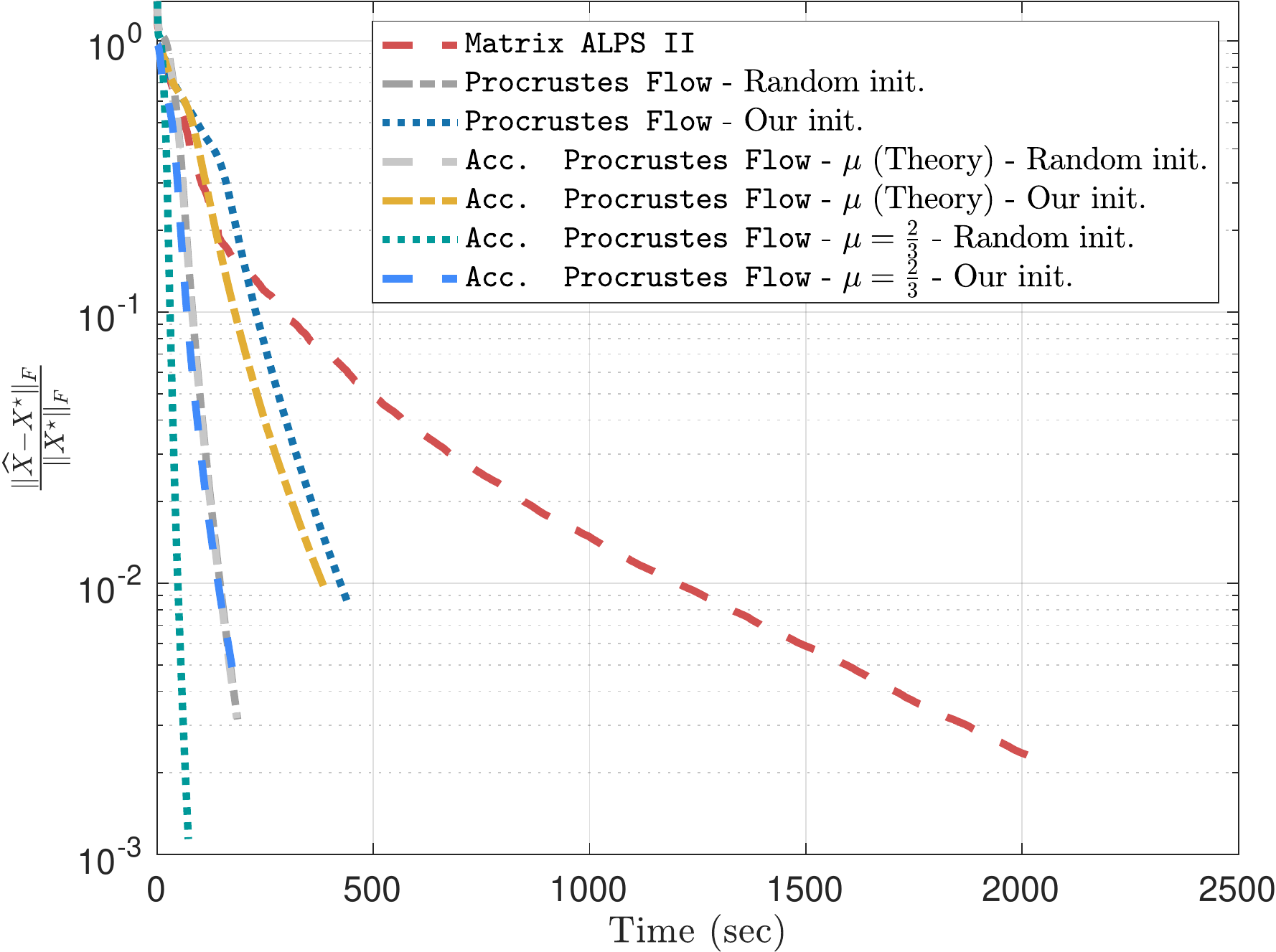} \includegraphics[width=0.32\textwidth]{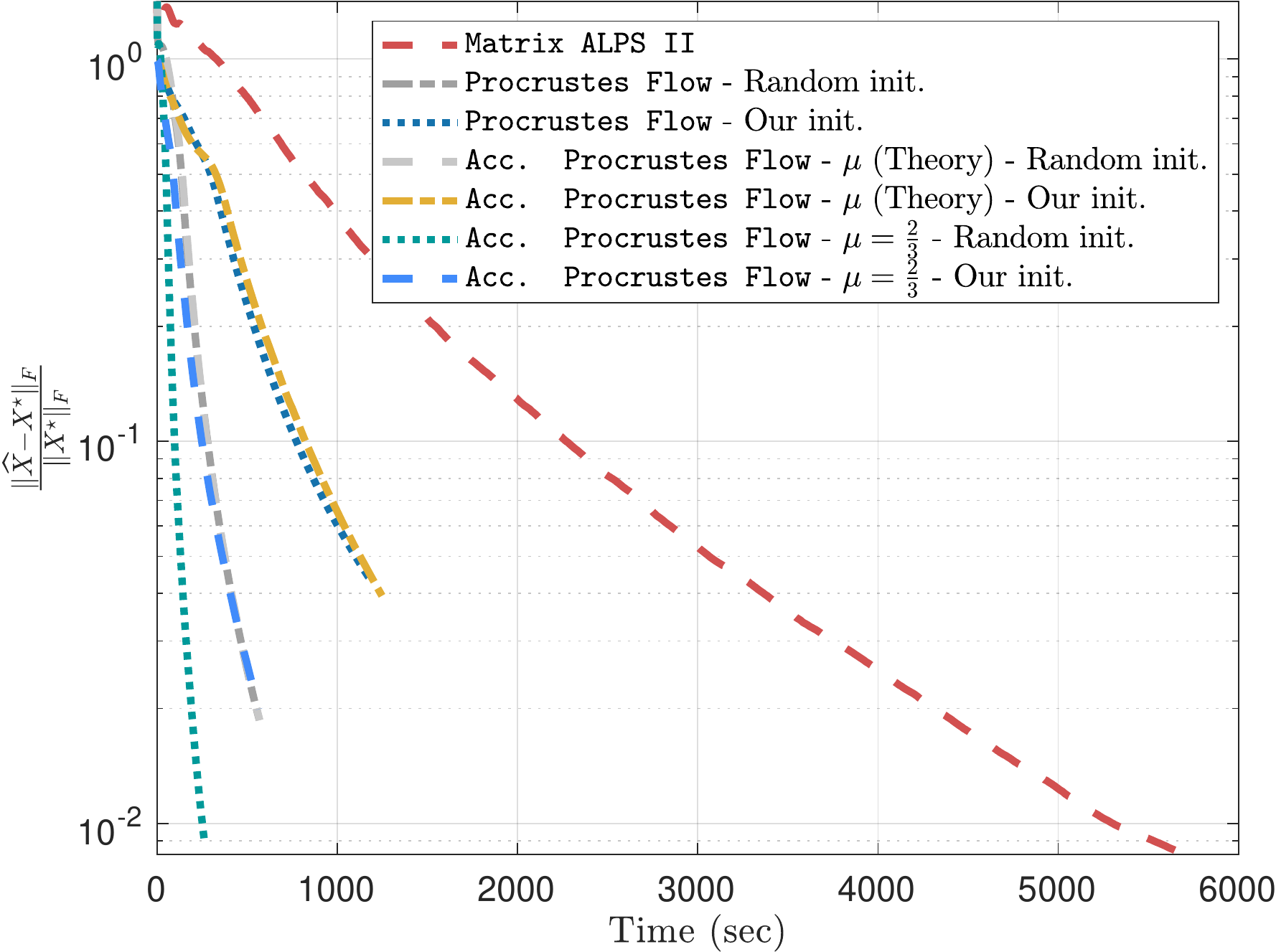} \includegraphics[width=0.32\textwidth]{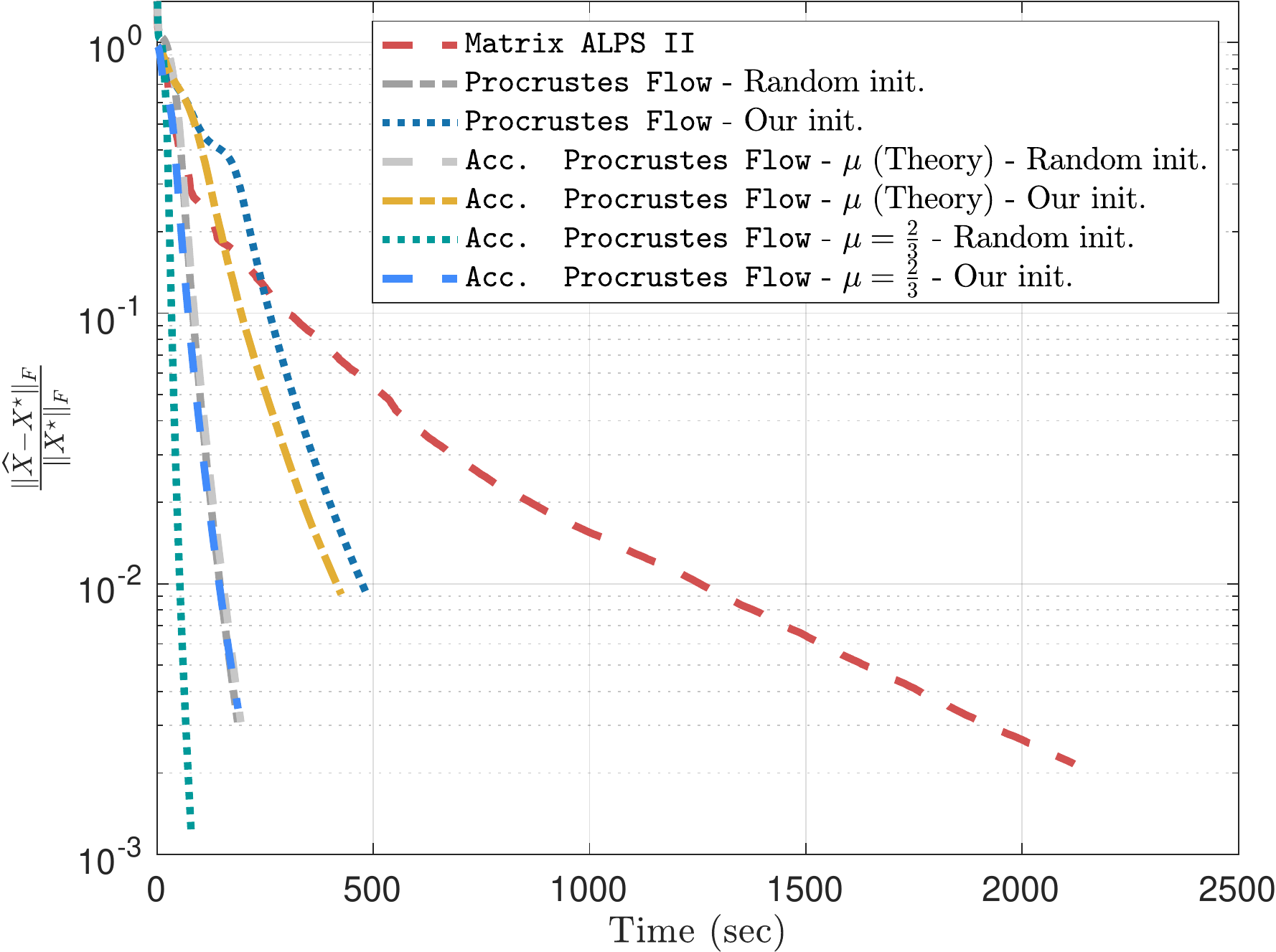}  \\
\includegraphics[width=0.31\textwidth]{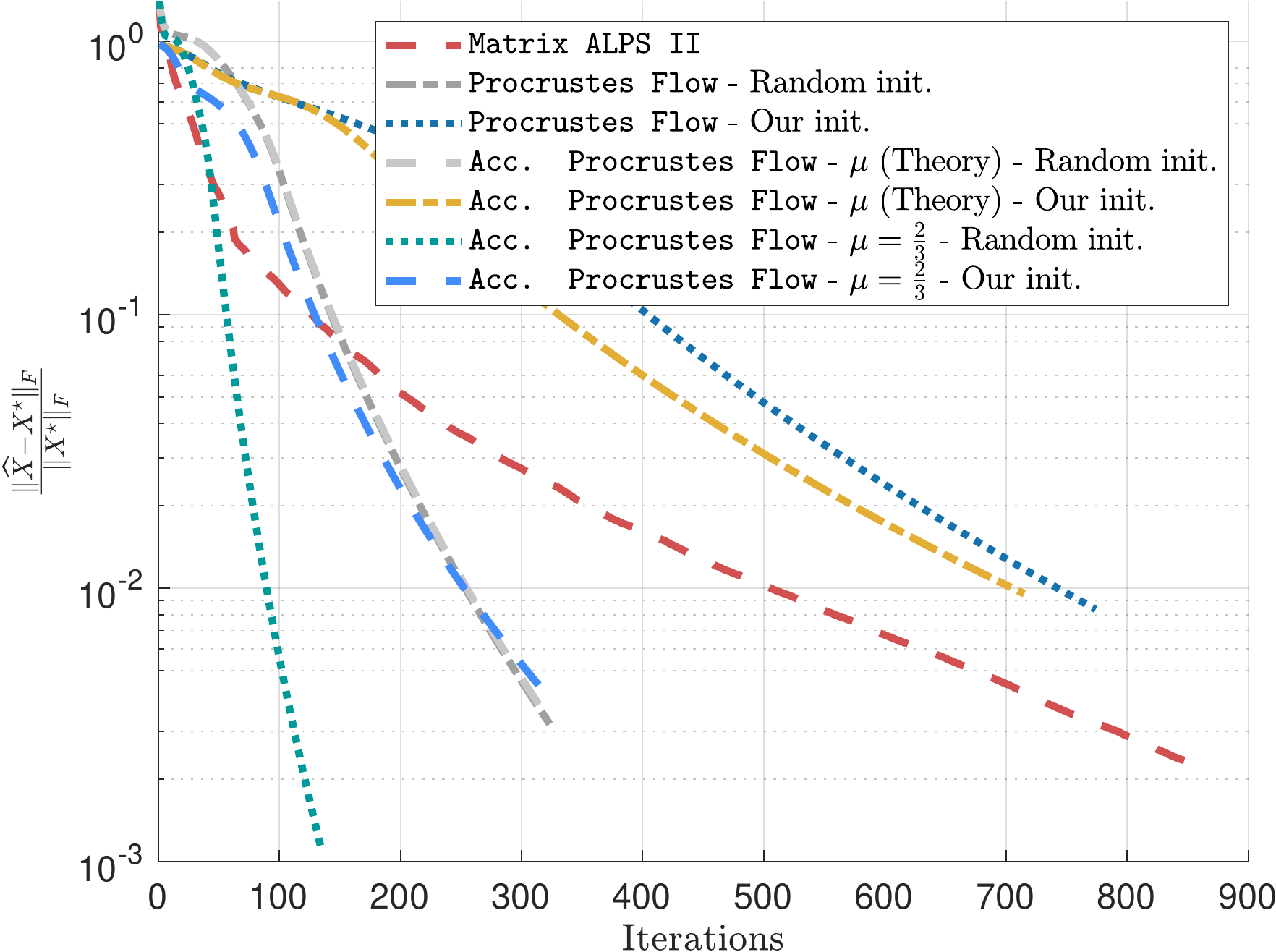} \includegraphics[width=0.32\textwidth]{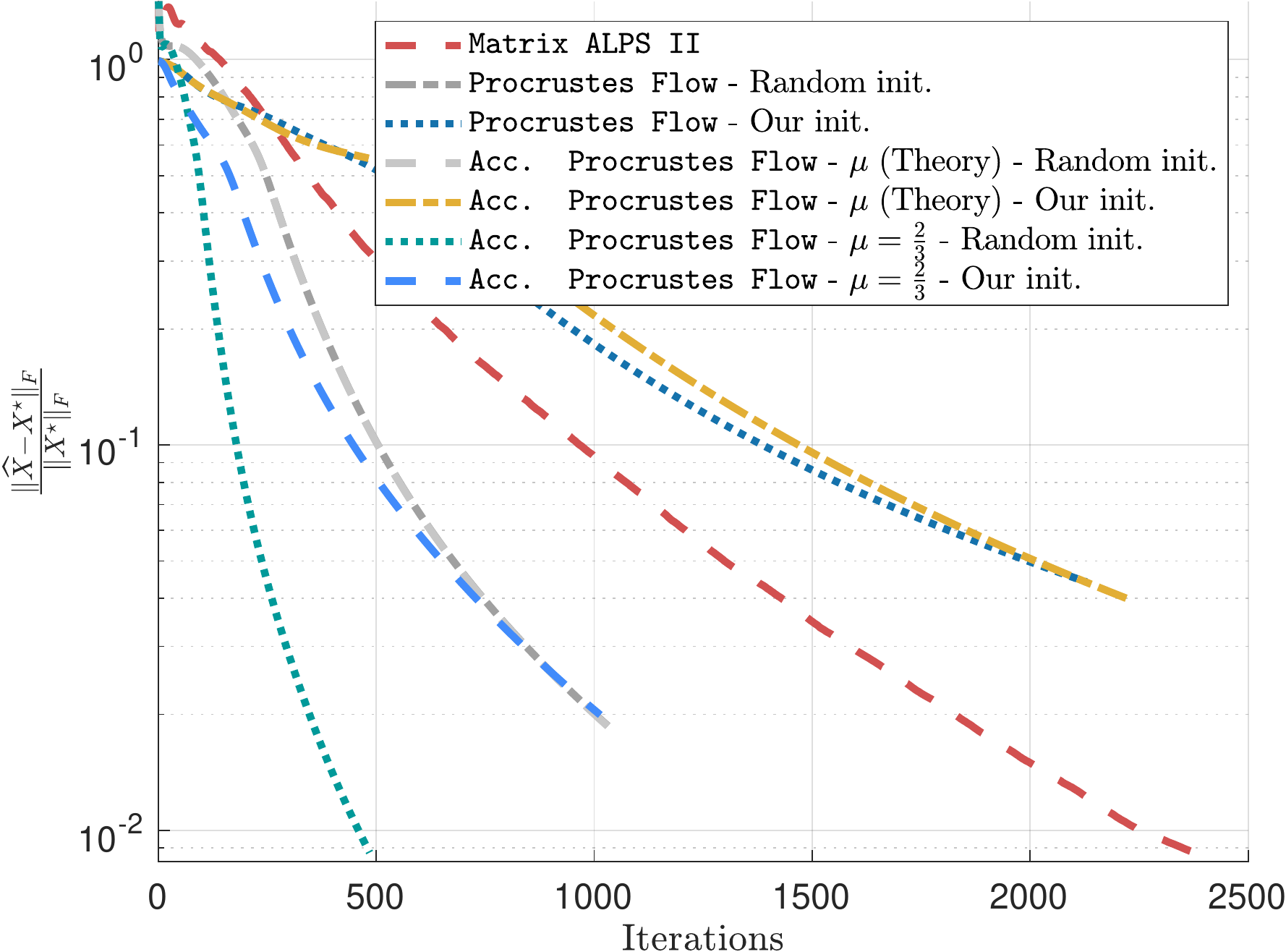} \includegraphics[width=0.32\textwidth]{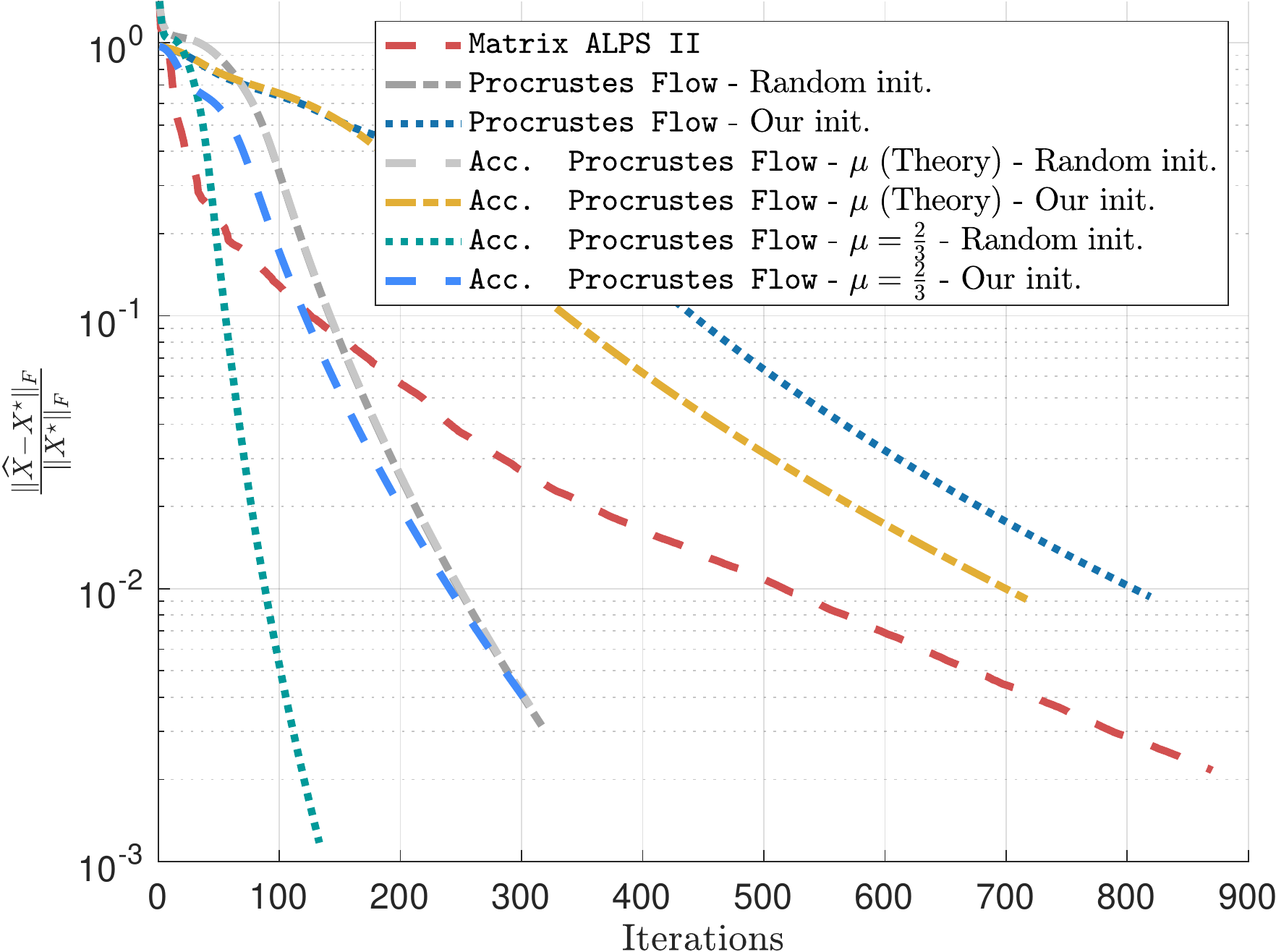} 
 \caption{Synthetic example results on low-rank matrix sensing. \emph{Top row}: Convergence behavior vs. time elapsed. \emph{Bottom row:} Convergence behavior vs. number of iterations. \emph{Left panel}: $C = 5$, noiseless case; \emph{Center panel:} $C = 3$, noiseless case; \emph{Right panel:} $C = 5$, noisy case, $\|w\|_2 = 0.01$.}
 \label{fig:00}
\end{figure*}

For the implementation of the Accelerated Procrustes Flow, we have used the momentum paramter $\mu$ given in Theorem \ref{thm:00},
wiith $\varepsilon = 1$, and a more ``aggressive'' selection, $\mu = \tfrac{2}{3}$. 
The latter value is larger than what our theory dictates, but as we have conjectured, our theory holds for different configurations of $(\mu, \delta_{2r})$; proving our theory for a less strict feasible values of these parameters remains an open problem.
Moreover, we have observed that various values of $\mu$ still lead to acceleration; in our experiments, we selected $\mu$ using grid search over the set $\{ \tfrac{1}{4}, \tfrac{1}{2}, \tfrac{2}{3}, \tfrac{3}{4} \}$ (details omitted).

The observations $y$ are set to
$y = \mathcal{A}\left(X^\star\right) + w$ for some noise vector $w$;
while the theory holds for the noiseless case, we show empirically that noisy cases are nicely handled by the same algorithm.
We use permuted and subsampled noiselets for the linear operator $\mathcal{A}$ \cite{waters2011sparcs}. 
The optimal matrix $X^\star$ is generated as the multiplication of a tall matrix $U^\star \in \mathbb{R}^{n \times r}$ such that $X^\star = U^\star U^{\star \top}$, and $\|X^\star \|_F = 1$, without loss of generality. 
The entries of $U^\star$ are drawn i.i.d. from a Gaussian distribution with zero mean and unit variance.
In the noisy case, $w$ has the same dimensions with $y$, its entries are drawn from a zero mean Gaussian distribution with norm $\|w\|_2 = 0.01$.
The random initialization is defined as $U_0$ drawn i.i.d. from a Gaussian distribution with zero mean and unit variance; the specific initialization is computed as the rank-$r$ approximation of the gradient at the zero point, according to Lemma \ref{lem:init}.

The results are shown in Figure \ref{fig:00}.
Some notable remarks: 
$i)$ While factorization techniques might take more iterations to converge compared to non-factorized algorithms, the per iteration time complexity is much less, such that overall, factorized gradient descent converges more quickly in terms of total execution time.
$ii)$ Our proposed algorithm, \emph{even under the restrictive assumptions on acceleration parameter $\mu$}, performs better than the non-accelerated factored gradient descent algorithms, such as Procrustes Flow.
$iii)$ Our theory is conservative: using a much larger $\mu$ we obtain a faster convergence; the proof for less strict assumptions for $\mu$ is an interesting future direction.
In all cases, our findings illustrate the effectiveness of the proposed schemes on different problem configurations.

\subsection{Neuron activity recovery from  $\mu$ECoG}\label{sec:Neuro}
A major hurdle in understanding the human brain is inferring the activities 
of individual neurons from  meso-scales cortical surface electrical potentials,
recorded by  electrocorticography (ECoG).
Here, we present a  novel application for low rank matrix sensing to recover single-neuron 
activities from  a single $\mu$ECoG  electrode. We simulated recordings of stimulus evoked cortical
surface electrical potentials
using a 
single $\mu$ECoG electrode over a short time period; we observe a vector
of electrical potentials $y\in\RR^m$ for $m$ time instances.
The membrane potentials of a set of 
neurons over this time period which we wish to recover can be 
viewed as a matrix $X\in\RR^{n\times m}$, where each row is the 
membrane potential for a neuron at $m$ time instances.
This matrix will be low rank since neurons are excited for a very short period, 
after the onset of the stimuli.

Based on~\cite{bouchard2018structure}, the activity of neurons are low-pass filtered with a cut-off frequency that depends on the distance of the neuron from the surface [$f_c(d)$], amplitude attenuated according to
distance [$A(d)$], producing distance dependent extracellular potentials.
The $\mu$ECoG recordings $y$ are the summation of these extracellular potentials. 
The distance dependent amplitude attenuation and  cut-off frequency are given by: $A(d)=\frac{1}{d^\alpha}\text{ and } f_c(d)=\frac{f_{\max}}{d^\alpha}$,
 $\alpha= \Delta_1,$ for $d<h$ and $\alpha= \Delta_2$ for $d\geq h$. Here $h$ is in units of distance from the brain surface to allow for potential piecewise linear.
 
We can view the $\mu$ECoG recordings $y$ as linear mapping of the membrane potentials of the neurons:
$y=\mathcal{A}(X)=\mbf{A}\tvec(X)$, where $\mbf{A}\in\RR^{m\times nm}$ is a banded matrix
which accounts for the distance dependent lowpass filtering and amplitude attenuation 
of the membrane potentials of neurons.
Experiments can lead to hours of  $\mu$ECoG recordings ($m$ in millions) and 
neural activities are defined for $n\approx31000$ neurons in a neocortical volume,
see; \cite{markram2015reconstruction}. Hence, the use of factorized algorithms, and their accelerations,
in these applications is necessary.

 \begin{figure}[tb!]
 \begin{center}
\includegraphics[width=1\columnwidth,,trim={3.0cm 0cm  3.0cm 0cm},clip]{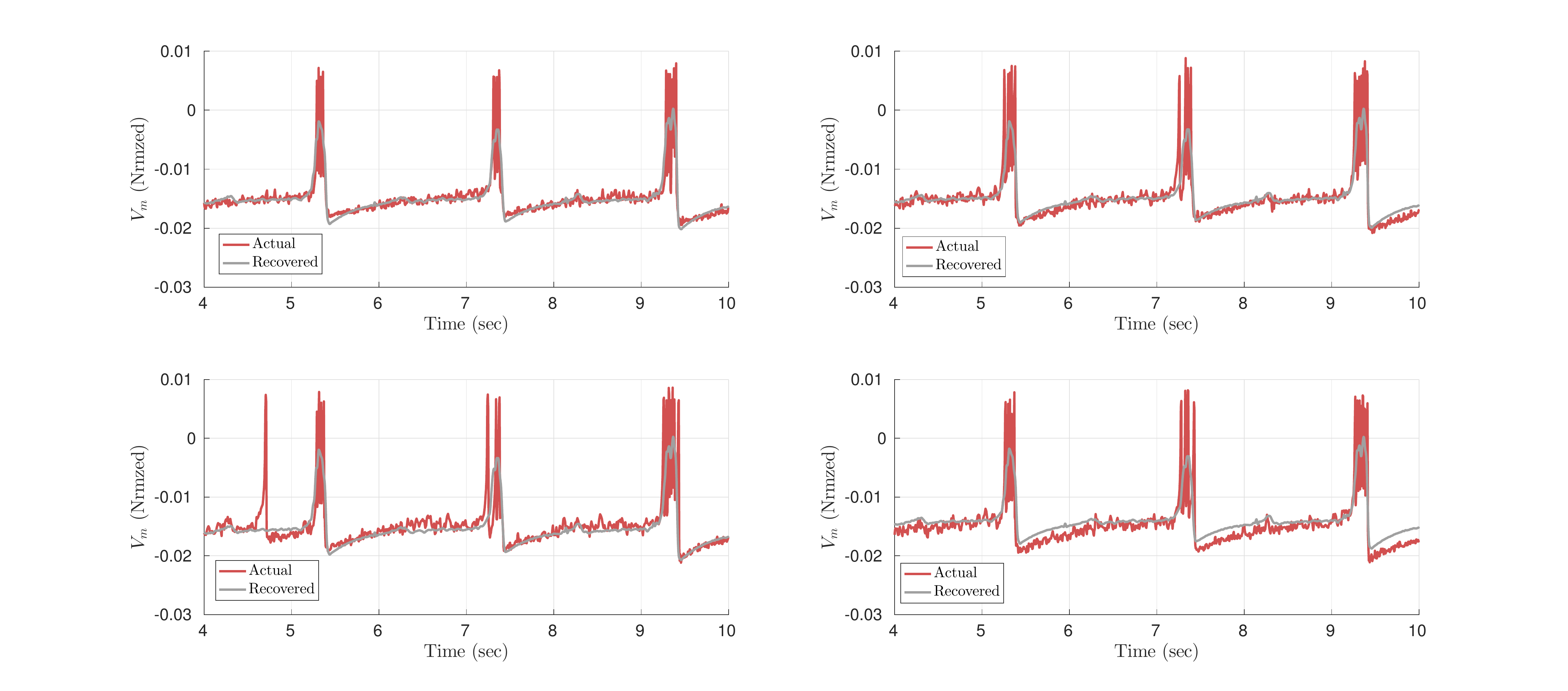}
\vskip -0.2in
 \caption{Neuronal activity recovery: The actual membrane potentials between 4 to 10 secs of simulation
 and the recovered potentials for four neurons at different distances.
 }\label{fig:Neuro1}
 \end{center}
\end{figure}

 
Here, we use
the low-rank matrix sensing model and a rectangular version of accelerated Procrustes flow (\emph{i.e.}, $X$ as $UV^\top$) to recover the 
neuronal activity $X$; we defer the reader to \cite[Section 3.1]{park2016finding} for an equivalent transformation between rectangular and square matrix factorizations.
In Figure~\ref{fig:Neuro1}, we present results for recovering neuronal potentials from
a $20secs$ ($m=4200$)  $\mu$ECoG simulation $y$; see also the appendix.
We note that our model recovers the potentials of individual neurons very well,
though not at single-action potential resolution.
\emph{To the best of our knowledge, such recovery results have not been demonstrated before}: they open the possibility of recovering the activities of individual neurons (the `microscopic units') from meso-scale signals recorded in humans (ECoG).

\section{Quantum state tomography (QST)}
We focus on  QST of a low-rank $q$-qubit state, $X^\star$, from measuring expectation values of $q$-qubit Pauli observables $\{A_i\}_{i=1}^m$, where $A_i = \otimes_{j=1}^q s_j$ and $\otimes$ denotes the Kronecker product.
Each $s_j$ is a $2 \times 2$ matrix from the set:
\begin{small}
\begin{align*}
\sigma_I = \begin{bmatrix} 
1 & 0 \\
0 & 1
\end{bmatrix}, \;
\sigma_x = \begin{bmatrix} 
0 & 1 \\
1 & 0
\end{bmatrix}, \;
\sigma_y = \begin{bmatrix} 
0 & -j \\
j & 0
\end{bmatrix}, \;
\sigma_z = \begin{bmatrix} 
1 & 0 \\
0 & -1
\end{bmatrix}, 
\end{align*}
\end{small}
where $j$, here, denotes the imaginary number.
$y \in \mathbb{R}^m$ is the measurement vector with elements $y_i = \text{Tr}(A_i \cdot X^\star)+e_i,~i = 1, \dots, m$, for some error $e_i$. 
Pauli measurements satisfy RIP, as follows:
\begin{lemma}[RIP for Pauli measurements \cite{liu2011universal}]\label{def:RIP1}
Let $\mathcal{A} : \mathbb{C}^{2^q \times 2^q} \rightarrow \mathbb{R}^m$ be such that \begin{small}$(\mathcal{A}(X^\star))_i = \tfrac{2^q}{\sqrt{m}} \emph{\text{Tr}}(A_i \cdot X^\star)$\end{small}, where $A_i$ are Pauli operators.
Then, with high probability over the choice of \begin{small}$m = \tfrac{c}{\delta_r^2}\cdot (r 2^q q^6)$\end{small} $A_i$'s, where $c > 0$ is a constant, $\mathcal{A}$ satisfies the $r$-RIP with constant $\delta_r$, $ 0 \leq \delta_r < 1$. 
\end{lemma}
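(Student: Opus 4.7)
The plan is to combine three classical ingredients: orthonormality and incoherence of the Pauli basis, a matrix Bernstein-type pointwise concentration bound, and a noncommutative chaining argument that upgrades pointwise concentration to a uniform RIP statement over the rank-$r$ manifold.

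First, the $4^q$ tensor-product Pauli matrices $\{P_\alpha\}$ form (after rescaling by $2^{-q/2}$) an orthonormal basis of the $4^q$-dimensional real Hilbert space of $2^q \times 2^q$ Hermitian matrices under the Hilbert-Schmidt inner product, so $\sum_\alpha |\operatorname{Tr}(P_\alpha X)|^2 = 2^q \|X\|_F^2$. Moreover $\|P_\alpha\|_{\mathrm{op}} = 1$, so the rescaled basis is maximally incoherent: every basis element has operator norm $2^{-q/2}$. This incoherence caps the ``spikiness'' of individual Pauli measurements and is what makes them RIP-friendly. A direct computation using this completeness relation then yields $\mathbb{E}[\mathcal{A}^\dagger \mathcal{A}] = \mathcal{I}$ as superoperators on Hermitian matrices---the $2^q/\sqrt{m}$ normalization in the statement is calibrated exactly for this---so $\mathbb{E}\|\mathcal{A}(X)\|_2^2 = \|X\|_F^2$ for any Hermitian $X$.

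Second, I would fix a rank-$2r$ unit-Frobenius Hermitian $X$ (the factor $2$ absorbing the standard $X = X_1 - X_2$ reduction to rank-$r$ differences) and apply the scalar Bernstein inequality to the i.i.d.\ sum $\|\mathcal{A}(X)\|_2^2 - \|X\|_F^2$. The key a priori estimate is $|\operatorname{Tr}(P_\alpha X)| \leq \|P_\alpha\|_{\mathrm{op}} \cdot \|X\|_* \leq \sqrt{2r}\,\|X\|_F = \sqrt{2r}$, which converts each summand's range and variance into explicit functions of $r$ and $2^q$. Bernstein then gives pointwise control $|\|\mathcal{A}(X)\|_2^2 - \|X\|_F^2| \leq \delta_r/2$ once $m \gtrsim r 2^q / \delta_r^2$, with failure probability exponentially small in this ratio.

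The main obstacle, and the source of the polylogarithmic $q^6$ factor in the sample complexity, is upgrading this pointwise concentration into a bound uniform over the entire rank-$2r$ manifold. A crude $\varepsilon$-net on the rank-$r$ unit-Frobenius ball has metric entropy of order $r n \log(1/\varepsilon) = O(r \cdot 2^q \cdot q)$, already costing one factor of $q$; further losses arise because the discrete Pauli index set of size $4^q$ must itself be chained against this geometry. Following Liu \cite{liu2011universal}, I would replace the naive union bound with a noncommutative Dudley/chaining inequality in the spirit of Rudelson and Vershynin, applied to the matrix-valued Bernoulli process indexed by the rank-$r$ unit-Frobenius ball, and carefully bound the entropy numbers of that set under both the operator norm and the Pauli-induced seminorms. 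The polylogarithmic losses from each layer of the chain accumulate precisely to $q^6 = (\log 2^q)^6$, yielding $r$-RIP with constant $\delta_r$ whenever $m \geq c r 2^q q^6 / \delta_r^2$, as claimed.
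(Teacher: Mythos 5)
The paper does not prove this lemma; it is imported verbatim from \cite{liu2011universal}, so there is no in-paper argument to compare against. Your outline correctly reconstructs the structure of Liu's proof---orthonormality and maximal incoherence of the rescaled Pauli basis giving $\mathbb{E}[\mathcal{A}^\dagger\mathcal{A}]=\mathcal{I}$, a Bernstein-type pointwise bound using $|\mathrm{Tr}(P_\alpha X)| \leq \|P_\alpha\|_{\mathrm{op}}\|X\|_*$, and a Dudley/entropy (Rudelson--Vershynin style) argument over the nuclear-norm ball to obtain uniformity, which is where the $q^6$ polylogarithmic factor originates. Note only that the decisive technical content lives in the step you gesture at last, namely the covering-number estimates for the set $\{X : \|X\|_* \leq \sqrt{r},\ \|X\|_F \leq 1\}$ under the Pauli-induced seminorm (via a dual Sudakov or Maurey empirical-method bound); as written your text asserts rather than establishes that the accumulated losses are exactly $(\log 2^q)^6$, so the proposal is a faithful sketch rather than a self-contained proof.
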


\begin{figure}[h]
  \begin{center}
    \includegraphics[width=0.48\columnwidth]{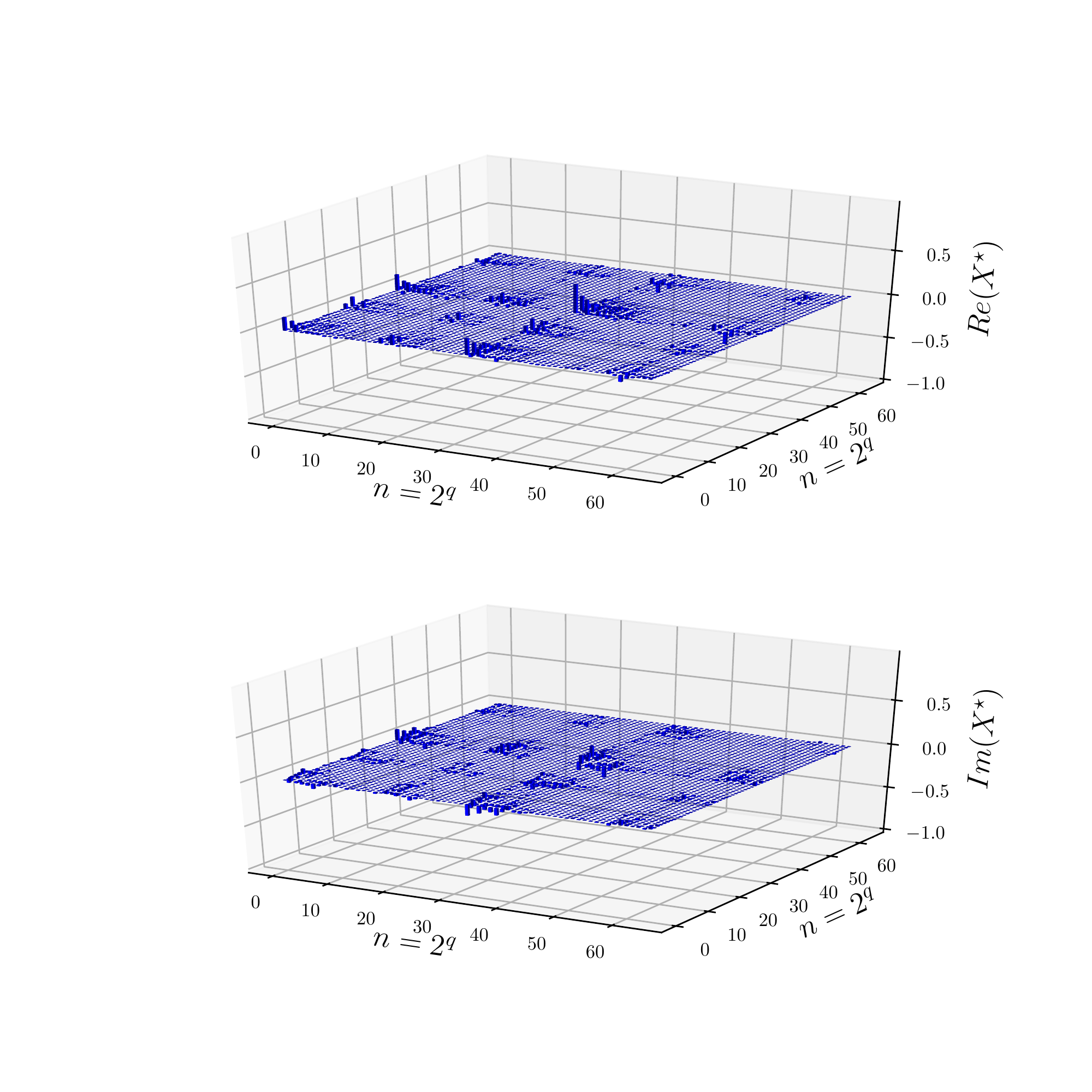}
    \includegraphics[width=0.48\columnwidth]{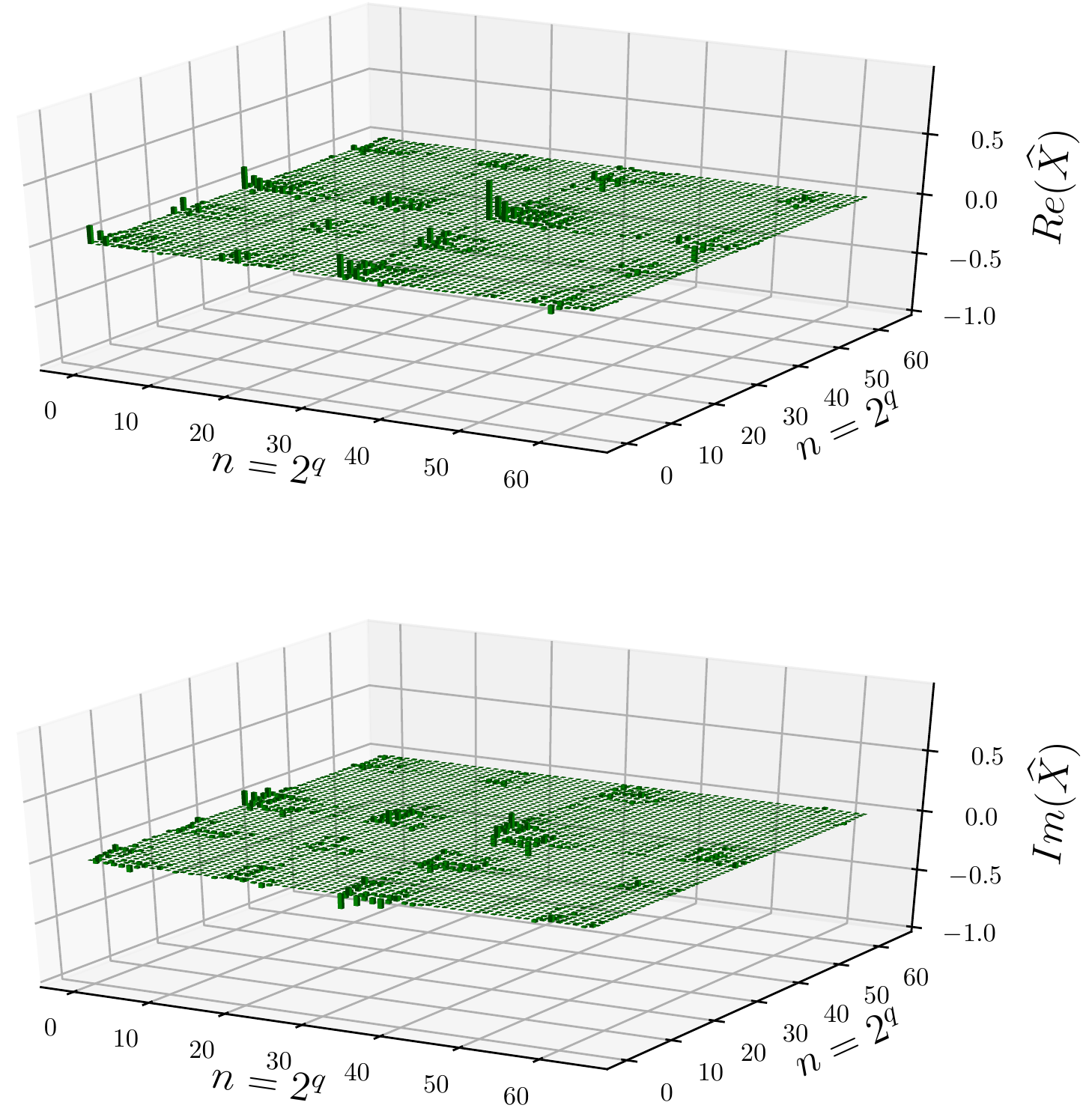}
    \vskip -0.1in
    \caption{Real and imaginary parts of $X^\star$ (Left) and its reconstruction $\widehat{X}$ for $\mu=\sfrac{3}{4}$.}
      \label{fig:qst_random}
  \end{center}
\end{figure}

\begin{figure*}[!h]
  \begin{center}
    \includegraphics[width=0.33\textwidth]{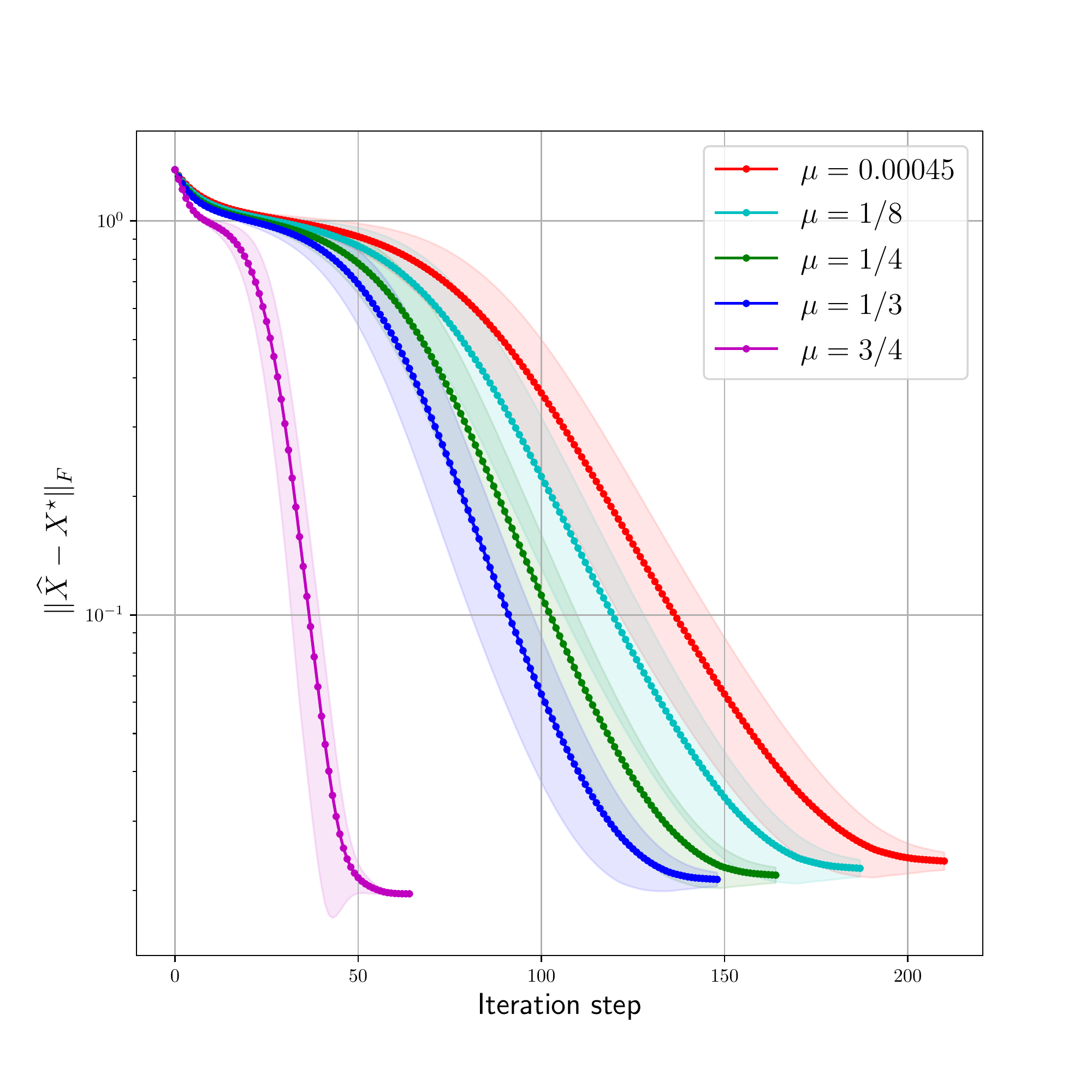}  
        \includegraphics[width=0.33\textwidth]{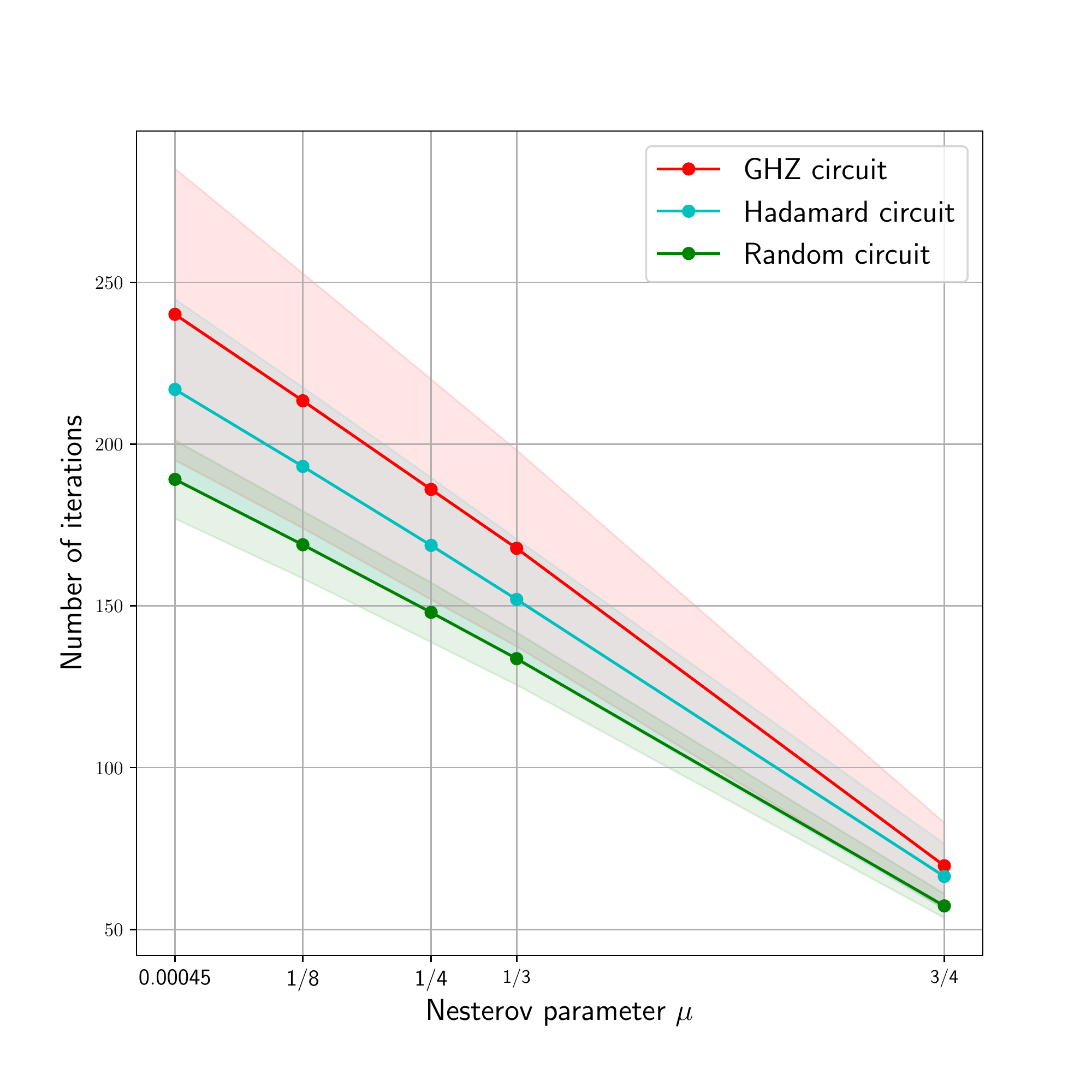}
    \includegraphics[width=0.33\textwidth]{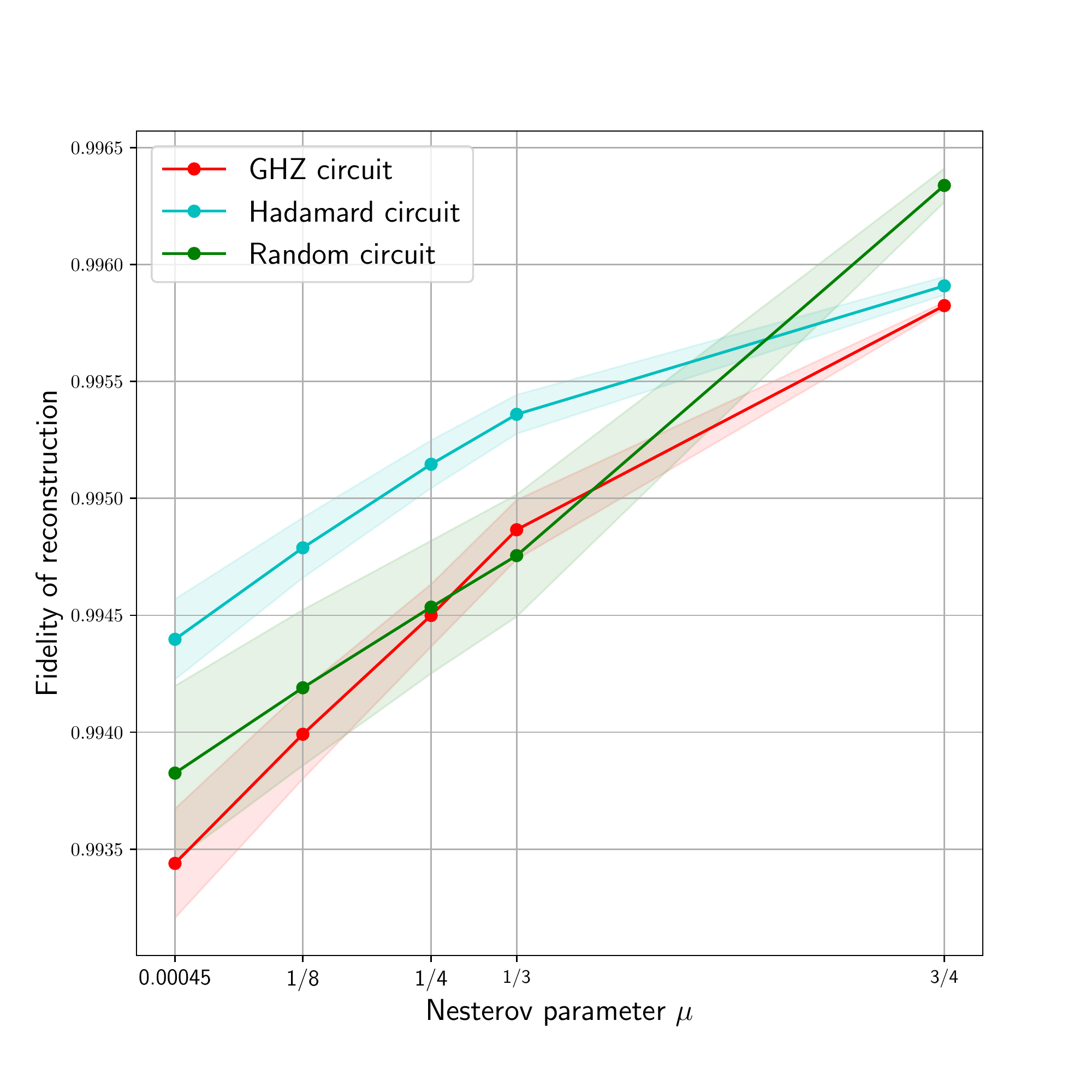}
    \vskip -0.1in
    \caption{
      \emph{Left panel:} $\| \widehat{X} - X^\star \|_F$ vs. \# of iterations for different $\mu$;
      \emph{Middle panel:} \# of iterations to reach $\texttt{reltol}$ vs. $\mu$ and for different circuits $(X^\star)$;
      \emph{Right panel:} Fidelity of $\widehat{X}$, defined as $\text{Tr}\big(\sqrt{ \sqrt{X^\star} \widehat{X} \sqrt{X^\star}}\big)^2$, vs. $\mu$ parameters and for different circuits $(X^\star)$. Shaded area denotes standard deviation around the mean over repeated runs in all cases.}
    \label{fig:qst_error_convergence_fidelity}
  \end{center}
\end{figure*}

We report on the reconstruction of the density matrix $X^\star$ of quantum circuits consisting of $q=6$ qubits; \emph{i.e.}, $n = 2^q = 64$.
Higher dimension experiments are left for future work.
We design $X^\star$ as a pure, rank-1 density matrix, such that $X \succeq 0$.
It is synthesized by the application of \texttt{CNOT} gates on pairs of qubits (GHZ circuit), of \texttt{Hadamard} gates on all qubits (Hadamard circuit) and of a random selection of \texttt{CNOT} and \texttt{U3} gates on pairs of and individual qubits, respectively, organized in a structure that is $40$ levels deep (Random circuit). 
We use IBM Quantum Information Software Kit (Qiskit) \cite{qiskit} for preparing the circuits and presenting them to QASM Simulator, which is a high performance quantum circuit simulator written in C++ that includes a variety of realistic circuit level noise models. 

We randomly chose informationally incomplete subsets of the measurements to generate $y$, where $m = \lceil{0.6 \cdot n^2} \rceil$; for each measurement we simulated $8192$ pulses and converted the measurements to single-number correlation measurements. 
We test the behavior of our algorithm for different values of the momentum parameter: \begin{small}$\mu = \frac{\sigma_r(X^\star)^{1/2}}{10^3 \sqrt{\kappa\tau(X^\star)}} \cdot \frac{\varepsilon}{2 \cdot \sigma_1(X^\star)^{1/2} \cdot r}$\end{small}, and constant $\mu \in \{\frac{1}{8}, \frac{1}{4}, \frac{1}{3}, \frac{3}{4}\}$.
Observe that for this application, $\sigma_r(X^\star) = 1$, $\tau(X^\star) = 1$, and $r=1$ by construction; we also approximated $\kappa = 1.223$, which, for $\varepsilon = 1$, results in $\mu = 4.5 \cdot 10^{-4}$.
We run each QST experiment for $10$ times for random initializations. 
We record the evolution of error at each step and the fidelity of the reconstruction, and stop when the relative error of successive iterates gets smaller than $\texttt{reltol} = 5\times 10^{-4}$ or the number of iterations exceeds $1000$ - whichever happens first.



Figure~\ref{fig:qst_random} shows the complex representation of $i)$ the target $X^{\star}$ (Left panel), and $ii)$ its reconstruction $\widehat{X}$ (Right panel). 
We observe that $X^\star$ and $\widehat{X}$ are indistinguishable: our scheme succeeds in reproducing the finer details in the density matrix structure. 
Figure \ref{fig:qst_error_convergence_fidelity} summarizes the performance of our proposal on different $X^\star$ (named circuits in quantum information), and for different $\mu$ values.
Left panel shows the evolution across iterations, featuring a steep dive to convergence for the largest value of $\mu$ we tested: we report that we also tested $\mu = 0$, which shows only slight worse performances than $\mu = 4.5 \cdot 10^{-4}$, and it is omitted.
Center panel summarizes the acceleration benefits of $\mu$ parameter:
the \# of iterations to reach $\texttt{reltol} = 5\times 10^{-4}$ decreases roughly $\times 3$, as we increase $\mu$; this behavior is consistent across all our test quantum circuits. 
On the right, larger $\mu$'s not only accelerate, but also slightly improve the reconstruction quality. 

 \section{Conclusion}
 We have introduced the accelerated Procrustes Flow algorithm for the factorized form of the low-rank matrix sensing problems. 
 We proved that under certain assumptions on the problem parameters, the accelerated Procrustes Flow algorithm converges linearly to a neighborhood of the optimal solution, whose size depends on the momentum parameter $\mu$.
 We demonstrate empirically using both simulated and real data that our algorithm outperforms non-accelerated methods on both the original problem domain and the factorized space.
 These results expand on existing work in the literature illustrating the promise of factorized methods for certain low-rank matrix problems.

\begin{quote}
\begin{small}
\bibliographystyle{aaai}
\bibliography{biblio}

\begin{thebibliography}{}

\bibitem[\protect\citeauthoryear{Agarwal \bgroup et al\mbox.\egroup
  }{2016}]{agarwal2016finding}
Agarwal, N.; Allen-Zhu, Z.; Bullins, B.; Hazan, E.; and Ma, T.
\newblock 2016.
\newblock Finding approximate local minima for nonconvex optimization in linear
  time.
\newblock {\em arXiv preprint arXiv:1611.01146}.

\bibitem[\protect\citeauthoryear{Beck and Teboulle}{2009}]{beck2009fast}
Beck, A., and Teboulle, M.
\newblock 2009.
\newblock A fast iterative shrinkage-thresholding algorithm for linear inverse
  problems.
\newblock {\em SIAM journal on imaging sciences} 2(1):183--202.

\bibitem[\protect\citeauthoryear{Bhojanapalli, Kyrillidis, and
  Sanghavi}{2016}]{bhojanapalli2016dropping}
Bhojanapalli, S.; Kyrillidis, A.; and Sanghavi, S.
\newblock 2016.
\newblock Dropping convexity for faster semi-definite optimization.
\newblock In {\em Conference on Learning Theory},  530--582.

\bibitem[\protect\citeauthoryear{Bhojanapalli, Neyshabur, and
  Srebro}{2016}]{bhojanapalli2016global}
Bhojanapalli, S.; Neyshabur, B.; and Srebro, N.
\newblock 2016.
\newblock Global optimality of local search for low rank matrix recovery.
\newblock In {\em Advances in Neural Information Processing Systems},
  3873--3881.

\bibitem[\protect\citeauthoryear{Boob and Lan}{2017}]{boob2017theoretical}
Boob, D., and Lan, G.
\newblock 2017.
\newblock Theoretical properties of the global optimizer of two layer neural
  network.
\newblock {\em arXiv preprint arXiv:1710.11241}.

\bibitem[\protect\citeauthoryear{Bouchard \bgroup et al\mbox.\egroup
  }{2018}]{bouchard2018structure}
Bouchard, K.; Dougherty, M.; Ledochowitsch, P.; Yazdan-Shahmorad, A.; Muller,
  L.; Philips, E.; Hasenstaub, A.; Sabes, P.; Schreiner, C.; Maharbiz, M.; and
  Chang, E.
\newblock 2018.
\newblock Structure and origin of distinct high-frequency components in evoked
  cortical surface electrical potentials.
\newblock {\em Submitted}.

\bibitem[\protect\citeauthoryear{Bubeck, Lee, and
  Singh}{2015}]{bubeck2015geometric}
Bubeck, S.; Lee, Y.~T.; and Singh, M.
\newblock 2015.
\newblock A geometric alternative to {N}esterov's accelerated gradient descent.
\newblock {\em arXiv preprint arXiv:1506.08187}.

\bibitem[\protect\citeauthoryear{Burer and Monteiro}{2003}]{burer2003nonlinear}
Burer, S., and Monteiro, R.~D.
\newblock 2003.
\newblock A nonlinear programming algorithm for solving semidefinite programs
  via low-rank factorization.
\newblock {\em Mathematical Programming} 95(2):329--357.

\bibitem[\protect\citeauthoryear{Cand{\`e}s \bgroup et al\mbox.\egroup
  }{2011}]{candes2011robust}
Cand{\`e}s, E.~J.; Li, X.; Ma, Y.; and Wright, J.
\newblock 2011.
\newblock Robust principal component analysis?
\newblock {\em Journal of the ACM (JACM)} 58(3):11.

\bibitem[\protect\citeauthoryear{Carmon \bgroup et al\mbox.\egroup
  }{2016}]{carmon2016accelerated}
Carmon, Y.; Duchi, J.; Hinder, O.; and Sidford, A.
\newblock 2016.
\newblock Accelerated methods for non-convex optimization.
\newblock {\em arXiv preprint arXiv:1611.00756}.

\bibitem[\protect\citeauthoryear{Chen and Wainwright}{2015}]{chen2015fast}
Chen, Y., and Wainwright, M.~J.
\newblock 2015.
\newblock Fast low-rank estimation by projected gradient descent: General
  statistical and algorithmic guarantees.
\newblock {\em arXiv preprint arXiv:1509.03025}.

\bibitem[\protect\citeauthoryear{Du \bgroup et al\mbox.\egroup
  }{2017}]{du2017gradient}
Du, S.; Lee, J.; Tian, Y.; Poczos, B.; and Singh, A.
\newblock 2017.
\newblock Gradient descent learns one-hidden-layer {CNN}: {D}on't be afraid of
  spurious local minima.
\newblock {\em arXiv preprint arXiv:1712.00779}.

\bibitem[\protect\citeauthoryear{Fazel, Hindi, and Boyd}{2001}]{fazel2001rank}
Fazel, M.; Hindi, H.; and Boyd, S.~P.
\newblock 2001.
\newblock A rank minimization heuristic with application to minimum order
  system approximation.
\newblock In {\em American Control Conference, 2001. Proceedings of the 2001},
  volume~6,  4734--4739.
\newblock IEEE.

\bibitem[\protect\citeauthoryear{Flammia \bgroup et al\mbox.\egroup
  }{2012}]{flammia2012quantum}
Flammia, S.~T.; Gross, D.; Liu, Y.-K.; and Eisert, J.
\newblock 2012.
\newblock Quantum tomography via compressed sensing: error bounds, sample
  complexity and efficient estimators.
\newblock {\em New Journal of Physics} 14(9):095022.

\bibitem[\protect\citeauthoryear{Ge, Jin, and Zheng}{2017}]{ge2017no}
Ge, R.; Jin, C.; and Zheng, Y.
\newblock 2017.
\newblock No spurious local minima in nonconvex low rank problems: A unified
  geometric analysis.
\newblock {\em arXiv preprint arXiv:1704.00708}.

\bibitem[\protect\citeauthoryear{Ghadimi and Lan}{2013}]{ghadimi2013stochastic}
Ghadimi, S., and Lan, G.
\newblock 2013.
\newblock Stochastic first-and zeroth-order methods for nonconvex stochastic
  programming.
\newblock {\em SIAM Journal on Optimization} 23(4):2341--2368.

\bibitem[\protect\citeauthoryear{Goh}{2017}]{goh2017why}
Goh, G.
\newblock 2017.
\newblock Why momentum really works.
\newblock {\em Distill}.

\bibitem[\protect\citeauthoryear{Horn and Johnson}{1990}]{horn1990matrix}
Horn, R., and Johnson, C.
\newblock 1990.
\newblock {\em Matrix analysis}.
\newblock Cambridge university press.

\bibitem[\protect\citeauthoryear{Hsieh \bgroup et al\mbox.\egroup
  }{2017}]{hsieh2017non}
Hsieh, Y.-P.; Kao, Y.-C.; Karimi~Mahabadi, R.; Alp, Y.; Kyrillidis, A.; and
  Cevher, V.
\newblock 2017.
\newblock A non-euclidean gradient descent framework for non-convex matrix
  factorization.
\newblock Technical report, Institute of Electrical and Electronics Engineers.

\bibitem[\protect\citeauthoryear{Jain and Dhillon}{2013}]{jain2013provable}
Jain, P., and Dhillon, I.~S.
\newblock 2013.
\newblock Provable inductive matrix completion.
\newblock {\em arXiv preprint arXiv:1306.0626}.

\bibitem[\protect\citeauthoryear{Jain, Meka, and
  Dhillon}{2010}]{jain2010guaranteed}
Jain, P.; Meka, R.; and Dhillon, I.~S.
\newblock 2010.
\newblock Guaranteed rank minimization via singular value projection.
\newblock In {\em Advances in Neural Information Processing Systems},
  937--945.

\bibitem[\protect\citeauthoryear{Khanna and Kyrillidis}{2017}]{khanna2017iht}
Khanna, R., and Kyrillidis, A.
\newblock 2017.
\newblock {IHT} dies hard: Provable accelerated iterative hard thresholding.
\newblock {\em arXiv preprint arXiv:1712.09379}.

\bibitem[\protect\citeauthoryear{Kingma and Ba}{2014}]{kingma2014adam}
Kingma, D., and Ba, J.
\newblock 2014.
\newblock Adam: A method for stochastic optimization.
\newblock {\em arXiv preprint arXiv:1412.6980}.

\bibitem[\protect\citeauthoryear{Kyrillidis and
  Cevher}{2011}]{kyrillidis2011recipes}
Kyrillidis, A., and Cevher, V.
\newblock 2011.
\newblock Recipes on hard thresholding methods.
\newblock In {\em Computational Advances in Multi-Sensor Adaptive Processing
  (CAMSAP), 2011 4th IEEE International Workshop on},  353--356.
\newblock IEEE.

\bibitem[\protect\citeauthoryear{Kyrillidis and
  Cevher}{2014}]{kyrillidis2014matrix}
Kyrillidis, A., and Cevher, V.
\newblock 2014.
\newblock Matrix recipes for hard thresholding methods.
\newblock {\em Journal of mathematical imaging and vision} 48(2):235--265.

\bibitem[\protect\citeauthoryear{Kyrillidis \bgroup et al\mbox.\egroup
  }{2017}]{kyrillidis2017provable}
Kyrillidis, A.; Kalev, A.; Park, D.; Bhojanapalli, S.; Caramanis, C.; and
  Sanghavi, S.
\newblock 2017.
\newblock Provable quantum state tomography via non-convex methods.
\newblock {\em arXiv preprint arXiv:1711.02524}.

\bibitem[\protect\citeauthoryear{Lee and Bresler}{2009}]{lee2009guaranteed}
Lee, K., and Bresler, Y.
\newblock 2009.
\newblock Guaranteed minimum rank approximation from linear observations by
  nuclear norm minimization with an ellipsoidal constraint.
\newblock {\em arXiv preprint arXiv:0903.4742}.

\bibitem[\protect\citeauthoryear{Lee and Bresler}{2010}]{lee2010admira}
Lee, K., and Bresler, Y.
\newblock 2010.
\newblock Admira: Atomic decomposition for minimum rank approximation.
\newblock {\em IEEE Transactions on Information Theory} 56(9):4402--4416.

\bibitem[\protect\citeauthoryear{Lee \bgroup et al\mbox.\egroup
  }{2016}]{lee2016gradient}
Lee, J.; Simchowitz, M.; Jordan, M.; and Recht, B.
\newblock 2016.
\newblock Gradient descent only converges to minimizers.
\newblock In {\em Conference on Learning Theory},  1246--1257.

\bibitem[\protect\citeauthoryear{Lessard, Recht, and
  Packard}{2016}]{lessard2016analysis}
Lessard, L.; Recht, B.; and Packard, A.
\newblock 2016.
\newblock Analysis and design of optimization algorithms via integral quadratic
  constraints.
\newblock {\em SIAM Journal on Optimization} 26(1):57--95.

\bibitem[\protect\citeauthoryear{Li, Ma, and Zhang}{2017}]{li2017algorithmic}
Li, Y.; Ma, T.; and Zhang, H.
\newblock 2017.
\newblock Algorithmic regularization in over-parameterized matrix recovery.
\newblock {\em arXiv preprint arXiv:1712.09203}.

\bibitem[\protect\citeauthoryear{Liu and Vandenberghe}{2009}]{liu2009interior}
Liu, Z., and Vandenberghe, L.
\newblock 2009.
\newblock Interior-point method for nuclear norm approximation with application
  to system identification.
\newblock {\em SIAM Journal on Matrix Analysis and Applications}
  31(3):1235--1256.

\bibitem[\protect\citeauthoryear{Liu}{2011}]{liu2011universal}
Liu, Y.-K.
\newblock 2011.
\newblock Universal low-rank matrix recovery from {P}auli measurements.
\newblock In {\em Advances in Neural Information Processing Systems},
  1638--1646.

\bibitem[\protect\citeauthoryear{Markram \bgroup et al\mbox.\egroup
  }{2015}]{markram2015reconstruction}
Markram, H.; Muller, E.; Ramaswamy, S.; Reimann, M.~W.; Abdellah, M.; Sanchez,
  C.~A.; Ailamaki, A.; Alonso-Nanclares, L.; Antille, N.; Arsever, S.; et~al.
\newblock 2015.
\newblock Reconstruction and simulation of neocortical microcircuitry.
\newblock {\em Cell} 163(2):456--492.

\bibitem[\protect\citeauthoryear{Mirsky}{1975}]{mirsky1975trace}
Mirsky, L.
\newblock 1975.
\newblock A trace inequality of {John von Neumann}.
\newblock {\em Monatshefte f{\"u}r mathematik} 79(4):303--306.

\bibitem[\protect\citeauthoryear{Nesterov}{1983}]{nesterov1983method}
Nesterov, Y.
\newblock 1983.
\newblock A method of solving a convex programming problem with convergence
  rate ${O}(\frac{1}{k^2})$.
\newblock In {\em Soviet Mathematics Doklady}, volume~27,  372--376.

\bibitem[\protect\citeauthoryear{Nesterov}{2013}]{nesterov2013introductory}
Nesterov, Y.
\newblock 2013.
\newblock {\em Introductory lectures on convex optimization: {A} basic course},
  volume~87.

\bibitem[\protect\citeauthoryear{O'Donoghue and Candes}{2015}]{o2015adaptive}
O'Donoghue, B., and Candes, E.
\newblock 2015.
\newblock Adaptive restart for accelerated gradient schemes.
\newblock {\em Foundations of computational mathematics} 15(3):715--732.

\bibitem[\protect\citeauthoryear{Park \bgroup et al\mbox.\egroup
  }{2016a}]{park2016provable}
Park, D.; Kyrillidis, A.; Bhojanapalli, S.; Caramanis, C.; and Sanghavi, S.
\newblock 2016a.
\newblock Provable burer-monteiro factorization for a class of norm-constrained
  matrix problems.
\newblock {\em arXiv preprint arXiv:1606.01316}.

\bibitem[\protect\citeauthoryear{Park \bgroup et al\mbox.\egroup
  }{2016b}]{park2016finding}
Park, D.; Kyrillidis, A.; Caramanis, C.; and Sanghavi, S.
\newblock 2016b.
\newblock Finding low-rank solutions to matrix problems, efficiently and
  provably.
\newblock {\em arXiv preprint arXiv:1606.03168}.

\bibitem[\protect\citeauthoryear{Park \bgroup et al\mbox.\egroup
  }{2016c}]{park2016non}
Park, D.; Kyrillidis, A.; Caramanis, C.; and Sanghavi, S.
\newblock 2016c.
\newblock Non-square matrix sensing without spurious local minima via the
  burer-monteiro approach.
\newblock {\em arXiv preprint arXiv:1609.03240}.

\bibitem[\protect\citeauthoryear{Polyak}{1964}]{polyak1964some}
Polyak, B.
\newblock 1964.
\newblock Some methods of speeding up the convergence of iteration methods.
\newblock {\em USSR Computational Mathematics and Mathematical Physics}
  4(5):1--17.

\bibitem[\protect\citeauthoryear{{QISKit Development Team}}{}]{qiskit}
{QISKit Development Team}.
\newblock Qiskit.

\bibitem[\protect\citeauthoryear{Recht, Fazel, and
  Parrilo}{2010}]{recht2010guaranteed}
Recht, B.; Fazel, M.; and Parrilo, P.~A.
\newblock 2010.
\newblock Guaranteed minimum-rank solutions of linear matrix equations via
  nuclear norm minimization.
\newblock {\em SIAM review} 52(3):471--501.

\bibitem[\protect\citeauthoryear{Safran and Shamir}{2017}]{safran2017spurious}
Safran, I., and Shamir, O.
\newblock 2017.
\newblock Spurious local minima are common in two-layer {ReLU} neural networks.
\newblock {\em arXiv preprint arXiv:1712.08968}.

\bibitem[\protect\citeauthoryear{Sun and Luo}{2016}]{sun2016guaranteed}
Sun, R., and Luo, Z.-Q.
\newblock 2016.
\newblock Guaranteed matrix completion via non-convex factorization.
\newblock {\em IEEE Transactions on Information Theory} 62(11):6535--6579.

\bibitem[\protect\citeauthoryear{Sutskever \bgroup et al\mbox.\egroup
  }{2013}]{sutskever2013importance}
Sutskever, I.; Martens, J.; Dahl, G.; and Hinton, G.
\newblock 2013.
\newblock On the importance of initialization and momentum in deep learning.
\newblock In {\em International conference on machine learning},  1139--1147.

\bibitem[\protect\citeauthoryear{Szegedy \bgroup et al\mbox.\egroup
  }{2015}]{szegedy2015going}
Szegedy, C.; Liu, E.; Jia, Y.; Sermanet, P.; Reed, S.; Anguelov, D.; Erhan, D.;
  Vanhoucke, V.; and Rabinovich, A.
\newblock 2015.
\newblock Going deeper with convolutions.
\newblock Cvpr.

\bibitem[\protect\citeauthoryear{Tieleman and
  Hinton}{2012}]{tieleman2012lecture}
Tieleman, T., and Hinton, G.
\newblock 2012.
\newblock Lecture 6.5-{RMSPro}: Divide the gradient by a running average of its
  recent magnitude.
\newblock {\em COURSERA: Neural networks for machine learning} 4(2):26--31.

\bibitem[\protect\citeauthoryear{Tu \bgroup et al\mbox.\egroup
  }{2016}]{tu2016low}
Tu, S.; Boczar, R.; Simchowitz, M.; Soltanolkotabi, M.; and Recht, B.
\newblock 2016.
\newblock Low-rank solutions of linear matrix equations via {P}rocrustes flow.
\newblock In {\em Proceedings of the 33rd International Conference on
  International Conference on Machine Learning-Volume 48},  964--973.
\newblock JMLR. org.

\bibitem[\protect\citeauthoryear{Waters, Sankaranarayanan, and
  Baraniuk}{2011}]{waters2011sparcs}
Waters, A.~E.; Sankaranarayanan, A.~C.; and Baraniuk, R.
\newblock 2011.
\newblock Sparcs: Recovering low-rank and sparse matrices from compressive
  measurements.
\newblock In {\em Advances in neural information processing systems},
  1089--1097.

\bibitem[\protect\citeauthoryear{Williams}{1992}]{williams1992nth}
Williams, K.
\newblock 1992.
\newblock The $n$-th power of a 2$\times$ 2 matrix.
\newblock {\em Mathematics Magazine} 65(5):336--336.

\bibitem[\protect\citeauthoryear{Wood and Smolin}{}]{IBMQiskit}
Wood, C., and Smolin, J.
\newblock A {C++} quantum circuit simulator with realistic noise.

\bibitem[\protect\citeauthoryear{Xu \bgroup et al\mbox.\egroup
  }{2018}]{xu2018accelerated}
Xu, P.; He, B.; De~Sa, C.; Mitliagkas, I.; and Re, C.
\newblock 2018.
\newblock Accelerated stochastic power iteration.
\newblock In {\em International Conference on Artificial Intelligence and
  Statistics},  58--67.

\bibitem[\protect\citeauthoryear{Zhao, Wang, and Liu}{2015}]{zhao2015nonconvex}
Zhao, T.; Wang, Z.; and Liu, H.
\newblock 2015.
\newblock A nonconvex optimization framework for low rank matrix estimation.
\newblock In {\em Advances in Neural Information Processing Systems},
  559--567.

\bibitem[\protect\citeauthoryear{Zheng and
  Lafferty}{2015}]{zheng2015convergent}
Zheng, Q., and Lafferty, J.
\newblock 2015.
\newblock A convergent gradient descent algorithm for rank minimization and
  semidefinite programming from random linear measurements.
\newblock In {\em Advances in Neural Information Processing Systems},
  109--117.

\end{thebibliography}
\end{small}
\end{quote}

\clearpage
\onecolumn

\section{Supporting lemmata}

In this section, we present a series of lemmata, used for the main result of the paper.

\begin{lemma}{\label{lem:supp_00}}
Let $U \in \mathbb{R}^{n \times r}$ and $U^\star \in \mathbb{R}^{n \times r}$, such that $\|U - U^\star R\|_F \leq \tfrac{\sigma_r(X^\star)^{1/2}}{10^3 \sqrt{\kappa \tau(X^\star)}}$ for some $R \in \mathcal{O}$, where $X^\star = U^\star U^{\star \top}$, $\kappa := \tfrac{1 + \delta_{2r}}{1 - \delta_{2r}} > 1$, for $\delta_{2r} \leq \tfrac{1}{10}$, and $\tau(X^\star) := \tfrac{\sigma_1(X^\star)}{\sigma_r(X^\star)} > 1$. Then:
\begin{align*}
\sigma_1(X^\star)^{1/2} \left(1  - \tfrac{1}{10^3}\right) &\leq \sigma_1(U) \leq \sigma_1(X^\star)^{1/2} \left(1 + \tfrac{1}{10^3} \right) \\
\sigma_r(X^\star)^{1/2} \left(1  - \tfrac{1}{10^3}\right) &\leq \sigma_r(U) \leq \sigma_r(X^\star)^{1/2} \left(1 + \tfrac{1}{10^3} \right)
\end{align*}
\end{lemma}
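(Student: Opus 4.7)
The plan is to prove both bounds as direct consequences of Weyl's perturbation inequality for singular values together with the rotational invariance of the norms. First I would fix any $R \in \mathcal{O}$ that attains (or nearly attains) the hypothesis, so $U^\star R$ and $U$ are two matrices whose Frobenius distance is controlled. Because $R$ is orthogonal, $\sigma_i(U^\star R) = \sigma_i(U^\star)$ for every $i$, and since $X^\star = U^\star U^{\star\top}$ we have $\sigma_i(U^\star) = \sigma_i(X^\star)^{1/2}$. This reduces the claim to comparing $\sigma_i(U)$ with $\sigma_i(U^\star R)$.

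Next I would invoke Weyl's inequality $|\sigma_i(U) - \sigma_i(U^\star R)| \leq \|U - U^\star R\|_2$, and then use $\|\cdot\|_2 \leq \|\cdot\|_F$ to push in the hypothesis
\[
|\sigma_i(U) - \sigma_i(X^\star)^{1/2}| \;\leq\; \|U - U^\star R\|_F \;\leq\; \frac{\sigma_r(X^\star)^{1/2}}{10^3 \sqrt{\kappa\,\tau(X^\star)}}.
\]
Specializing to $i=r$ gives both sides of the second chain immediately, using only $\sqrt{\kappa\,\tau(X^\star)} \geq 1$ to absorb the perturbation factor into $\sigma_r(X^\star)^{1/2}/10^3$. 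For $i=1$, I would rewrite the perturbation bound as
\[
\frac{\sigma_r(X^\star)^{1/2}}{10^3\sqrt{\kappa\,\tau(X^\star)}} \;\leq\; \frac{\sigma_1(X^\star)^{1/2}}{10^3}\cdot\frac{1}{\sqrt{\kappa}}\cdot\frac{\sigma_r(X^\star)^{1/2}}{\sigma_1(X^\star)^{1/2}}\cdot\frac{1}{\sqrt{\tau(X^\star)}^{-1}},
\]
and simplify: since $\tau(X^\star) = \sigma_1(X^\star)/\sigma_r(X^\star)$, the factor $\sigma_r(X^\star)^{1/2}/\sigma_1(X^\star)^{1/2} = \tau(X^\star)^{-1/2}$ combines with $\tau(X^\star)^{-1/2}$ to give $\tau(X^\star)^{-1}$, and with $\kappa \geq 1$ the bound is at most $\sigma_1(X^\star)^{1/2}/10^3$. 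Thus $|\sigma_1(U) - \sigma_1(X^\star)^{1/2}| \leq \sigma_1(X^\star)^{1/2}/10^3$, giving the first chain.

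There is no real obstacle here; the lemma is a clean application of Weyl's inequality once one notices that the distance metric $\min_{R\in\mathcal{O}}\|U - U^\star R\|_F$ neutralizes the rotational ambiguity of the factorization. The only thing to be a bit careful about is checking that the constants truly collapse as claimed, which amounts to verifying $\sqrt{\kappa\,\tau(X^\star)} \geq 1$ and $\sigma_r(X^\star) \leq \sigma_1(X^\star)$; both are immediate from the definitions and from $\delta_{2r} \leq \tfrac{1}{10}$. The whole proof should fit in a handful of lines.
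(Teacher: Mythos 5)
Your proposal is correct and follows essentially the same route as the paper's proof: Weyl's perturbation inequality combined with $\|\cdot\|_2 \leq \|\cdot\|_F$, the identities $\sigma_i(U^\star R) = \sigma_i(U^\star) = \sigma_i(X^\star)^{1/2}$, and the observations $\sqrt{\kappa\,\tau(X^\star)} \geq 1$ and $\sigma_r(X^\star) \leq \sigma_1(X^\star)$ to absorb the perturbation into the stated multiplicative factors. The only blemish is the intermediate display for the $i=1$ case, whose factors do not multiply out as described (the $\tfrac{1}{\sqrt{\tau(X^\star)}^{-1}}$ term is garbled), but the conclusion it is meant to justify is immediate from $\sigma_r(X^\star)^{1/2} \leq \sigma_1(X^\star)^{1/2}$ and $\sqrt{\kappa\,\tau(X^\star)} \geq 1$, exactly as in the paper.
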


\begin{proof}
By the fact $\|\cdot\|_2 \leq \|\cdot\|_F$ and using Weyl's inequality for perturbation of singular values \cite[Theorem 3.3.16]{horn1990matrix}, we have:
\begin{align*}
\left|\sigma_i(U) - \sigma_i(U^\star)\right| \leq \tfrac{\sigma_r(X^\star)^{1/2}}{10^3 \sqrt{\kappa \tau(X^\star)}} \leq \tfrac{\sigma_r(X^\star)^{1/2}}{10^3}, \quad 1 \leq i \leq r.
\end{align*}
Then, 
\begin{align*}
- \tfrac{\sigma_r(X^\star)^{1/2}}{10^3} &\leq \sigma_1(U) - \sigma_1(U^\star) \leq \tfrac{\sigma_r(X^\star)^{1/2}}{10^3} \Rightarrow \\
\sigma_1(X^\star)^{1/2} - \tfrac{\sigma_r(X^\star)^{1/2}}{10^3} &\leq \sigma_1(U) \leq \sigma_1(X^\star)^{1/2} + \tfrac{\sigma_r(X^\star)^{1/2}}{10^3} \Rightarrow \\
\sigma_1(X^\star)^{1/2} \left(1  - \tfrac{1}{10^3}\right) &\leq \sigma_1(U) \leq \sigma_1(X^\star)^{1/2} \left(1 + \tfrac{1}{10^3} \right).
\end{align*}
Similarly:
\begin{align*}
- \tfrac{\sigma_r(X^\star)^{1/2}}{10^3} &\leq \sigma_r(U) - \sigma_r(U^\star) \leq \tfrac{\sigma_r(X^\star)^{1/2}}{10^3} \Rightarrow \\
\sigma_r(X^\star)^{1/2} - \tfrac{\sigma_r(X^\star)^{1/2}}{10^3} &\leq \sigma_r(U) \leq \sigma_r(X^\star)^{1/2} + \tfrac{\sigma_r(X^\star)^{1/2}}{10^3} \Rightarrow \\
\sigma_r(X^\star)^{1/2} \left(1  - \tfrac{1}{10^3}\right) &\leq \sigma_r(U) \leq \sigma_r(X^\star)^{1/2} \left(1 + \tfrac{1}{10^3} \right).
\end{align*}
In the above, we used the fact that $\sigma_i(U^\star) = \sigma_i(X^\star)^{1/2}$, for all $i$, and the fact that $\sigma_i(X^\star)^{1/2} \geq \sigma_j(X^\star)^{1/2}$, for $i \leq j$.
\end{proof}

\begin{lemma}{\label{lem:supp_02}}
Let $U \in \mathbb{R}^{n \times r}, U_{-} \in \mathbb{R}^{n \times r}$, and $U^\star \in \mathbb{R}^{n \times r}$, such that 
\begin{small}$\min_{R \in \mathcal{O}} \|U - U^\star R\|_F \leq \tfrac{\sigma_r(X^\star)^{1/2}}{10^3 \sqrt{\kappa\tau(X^\star)}}$\end{small} ~and~
\begin{small}$\min_{R \in \mathcal{O}} \|U_- - U^\star R\|_F \leq \tfrac{\sigma_r(X^\star)^{1/2}}{10^3 \sqrt{\kappa\tau(X^\star)}}$\end{small}, where $X^\star = U^\star U^{\star \top}$, and \begin{small}$\kappa := \tfrac{1 + \delta_{2r}}{1 - \delta_{2r}} > 1$\end{small}, for \begin{small}$\delta_{2r} \leq \tfrac{1}{10}$\end{small}, and \begin{small}$\tau(X^\star) := \tfrac{\sigma_1(X^\star)}{\sigma_r(X^\star)} > 1$\end{small}.
Set the momentum parameter as \begin{small}$\mu = \frac{\sigma_r(X^\star)^{1/2}}{10^3 \sqrt{\kappa\tau(X^\star)}} \cdot \frac{\varepsilon}{2 \cdot \sigma_1(X^\star)^{1/2} \cdot r}$\end{small}, for $\varepsilon \in (0, 1)$ user-defined.
Then, 
\begin{align*}
\|Z - U^\star R_Z\|_F \leq \left(\tfrac{3}{2} + 2|\mu|\right) \cdot \tfrac{\sigma_r(X^\star)^{1/2}}{10^3\sqrt{\kappa \tau(X^\star)}}.
\end{align*}
\end{lemma}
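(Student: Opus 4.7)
Recall from the algorithm that $Z = U + \mu(U - U_-) = (1+\mu)U - \mu U_-$. Let $D := \tfrac{\sigma_r(X^\star)^{1/2}}{10^3 \sqrt{\kappa\tau(X^\star)}}$ denote the hypothesized distance bound, and let $R_U, R_{U_-} \in \mathcal{O}$ achieve the minima $\|U - U^\star R_U\|_F \leq D$ and $\|U_- - U^\star R_{U_-}\|_F \leq D$, respectively. The plan is to upper bound $\min_{R \in \mathcal{O}}\|Z - U^\star R\|_F$ by evaluating the objective at one convenient choice of $R$ (namely $R_U$) and handle the remaining cross-terms via triangle inequalities, with the fine-tuned size of $\mu$ absorbing the unavoidable rotation mismatch.

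\medskip
\noindent First I would split the expression as
\begin{align*}
\|Z - U^\star R_U\|_F &= \|(1+\mu)(U - U^\star R_U) + \mu(U^\star R_U - U_-)\|_F \\
&\leq (1+|\mu|)\,\|U - U^\star R_U\|_F + |\mu|\,\|U^\star R_U - U_-\|_F \\
&\leq (1+|\mu|)\,D + |\mu|\,\|U^\star R_U - U_-\|_F.
\end{align*}
The first term is already of the right form, so the whole game is to control the mismatch term $\|U^\star R_U - U_-\|_F$, where $R_U$ is not the optimal rotation for $U_-$. I would insert $U^\star R_{U_-}$ and apply triangle inequality once more, so that
\[
\|U^\star R_U - U_-\|_F \leq \|U^\star R_{U_-} - U_-\|_F + \|U^\star(R_U - R_{U_-})\|_F \leq D + \|U^\star\|_2 \cdot \|R_U - R_{U_-}\|_F.
\]
Since $R_U, R_{U_-} \in \mathcal{O}$, a worst-case column-by-column bound gives $\|R_U - R_{U_-}\|_F \leq 2\sqrt{r}$, and $\|U^\star\|_2 = \sigma_1(X^\star)^{1/2}$, so $\|U^\star R_U - U_-\|_F \leq D + 2\sqrt{r}\,\sigma_1(X^\star)^{1/2}$.

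\medskip
\noindent The cleanup step is to invoke the specific choice $\mu = \tfrac{D \cdot \varepsilon}{2\sigma_1(X^\star)^{1/2} r}$, which is precisely calibrated so that the otherwise problematic term $|\mu|\cdot 2\sqrt{r}\,\sigma_1(X^\star)^{1/2}$ becomes at most $D \cdot \varepsilon/\sqrt{r} \leq D$. Collecting the estimates then yields
\[
\min_{R \in \mathcal{O}}\|Z - U^\star R\|_F \leq \|Z - U^\star R_U\|_F \leq \bigl(1 + 2|\mu|\bigr) D + 2|\mu|\sqrt{r}\,\sigma_1(X^\star)^{1/2},
\]
which, after absorbing the last term into the $D$ coefficient via the momentum choice, gives the claimed $(\tfrac{3}{2} + 2|\mu|) D$ bound (possibly up to a tightening of constants).

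\medskip
\noindent \textbf{Main obstacle.} The conceptual difficulty is the rotation ambiguity: $U$ and $U_-$ need not share the same optimal Procrustes rotation relative to $U^\star$, so a naive triangle inequality $\|U - U_-\|_F \leq 2D$ is false in general. The proof must therefore pay a price proportional to $\|U^\star(R_U - R_{U_-})\|_F$, which is $O(\sqrt{r}\,\sigma_1(X^\star)^{1/2})$ in the worst case. The delicate part is showing that the momentum parameter $\mu$, chosen proportional to $D/(\sigma_1(X^\star)^{1/2} r)$, is small enough so that $|\mu|$ times this rotation discrepancy is absorbed back into a small multiple of $D$. This is precisely why the theorem requires such a conservative selection of $\mu$, and any attempt to take $\mu$ larger would demand a genuinely sharper perturbation bound on the polar factors $R_U, R_{U_-}$.
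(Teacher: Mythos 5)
Your proposal is correct and follows essentially the same route as the paper: choose the test rotation $R_U$, split $Z - U^\star R_U = (1+\mu)(U - U^\star R_U) + \mu(U^\star R_U - U_-)$, pay the rotation-mismatch price $\|U^\star(R_U - R_{U_-})\|_F$ via the triangle inequality, and absorb it using the calibrated size of $\mu$. Your write-up is in fact slightly cleaner (and your $2\sqrt{r}$ bound slightly tighter than the paper's $2r$), and the residual looseness in the final constant that you flag is present in the paper's own proof as well.
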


\begin{proof}
Let $R_{U} \in \arg\min_{R \in \mathcal{O}} \|U - U^{*}\|_{F}$ and $R_{U_{-}} \in \arg\min_{R \in \mathcal{O}} \|U_{-} - U^{*}R\|_{F}$.
By the definition of the distance function:
\begin{align*}
\|Z - U^\star R_Z\|_F &= \min_{R \in \mathcal{O}} \|Z - U^\star R\|_F = \min_{R \in \mathcal{O}} \| U + \mu (U - U_{-}) - U^\star R \|_F \\
				     &= \min_{R \in \mathcal{O}} \|U + \mu (U - U_{-}) - (1 - \mu + \mu) U^\star R\|_F \\
				     &\leq |1 + \mu|\cdot\|U - U^{*}R_{U}\|_{F} + |\mu|\cdot||U_{-} - U^{*}R_{U_{-}}||_{F}\\
				     &= |1 + \mu|\cdot\|U - U^{*}R_{U}\|_{F} + |\mu|\cdot\|U_{-} - U^{*}R_{U} - U^{*}R_{U_{-}} + U^{*}R_{U_{-}}\|_{F}\\
				     &= |1 + \mu|\cdot\|U - U^{*}R_{U}\|_{F} + |\mu|\cdot\|(U_{-} - U^{*}R_{U_{-}}) + U^{*}(R_{U_{-}} - R_{U})\|_{F}\\
				     &\leq |1 + \mu| \cdot \min_{R \in \mathcal{O}} \|U - U^\star R\|_F + |\mu| \cdot \min_{R \in \mathcal{O}} \|U_- - U^\star R\|_F + |\mu| \cdot \|U^\star (R_U - R_{U_-})\|_F \\
				     &\leq |1 + \mu| \cdot \min_{R \in \mathcal{O}} \|U - U^\star R\|_F + |\mu| \cdot \min_{R \in \mathcal{O}} \|U_- - U^\star R\|_F + 2|\mu| \cdot \sigma_1(X^\star)^{1/2} r \\
				     &\stackrel{(i)}{\leq} \left(\tfrac{3}{2} + 2|\mu|\right) \cdot \tfrac{\sigma_r(X^\star)^{1/2}}{10^3\sqrt{\kappa \tau(X^\star)}}
\end{align*}
where $(i)$ is due to the fact that $\mu \leq \frac{\sigma_r(X^\star)^{1/2}}{10^3 \sqrt{\kappa\tau(X^\star)}} \cdot \frac{1}{2 \cdot \sigma_1(X^\star)^{1/2} \cdot r}$.
We keep $\mu$ in the expression, but we use it for clarity for the rest of the proof.
\end{proof}

\begin{corollary}{\label{cor:2}}
Let $Z \in \mathbb{R}^{n \times r}$ and $U^\star \in \mathbb{R}^{n \times r}$, such that 
$\|Z - U^\star R\|_F \leq \left(\frac{3}{2} + 2|\mu|\right) \cdot \tfrac{\sigma_r(X^\star)^{1/2}}{10^3\sqrt{\kappa \tau(X^\star)}}$ for some $R \in \mathcal{O}$, and $X^\star = U^\star U^{\star \top}$. Then:
\begin{align*}
\sigma_1(X^\star)^{1/2} \left(1  - \left(\tfrac{3}{2} + 2|\mu|\right)\tfrac{1}{10^3}\right) &\leq \sigma_1(Z) \leq \sigma_1(X^\star)^{1/2} \left(1 + \left(\tfrac{3}{2} + 2|\mu|\right)\tfrac{1}{10^3} \right) \\
\sigma_r(X^\star)^{1/2} \left(1  - \left(\tfrac{3}{2} + 2|\mu|\right)\tfrac{1}{10^3}\right) &\leq \sigma_r(Z) \leq \sigma_r(X^\star)^{1/2} \left(1 + \left(\tfrac{3}{2} + 2|\mu|\right)\tfrac{1}{10^3} \right).
\end{align*}
Given that  \begin{small}$\mu = \frac{\sigma_r(X^\star)^{1/2}}{10^3 \sqrt{\kappa\tau(X^\star)}} \cdot \frac{\varepsilon}{2 \cdot \sigma_1(X^\star)^{1/2} \cdot r} \leq \tfrac{1}{10^3}$\end{small}, we get:
\begin{align*}
0.998\cdot \sigma_1(X^\star)^{1/2} &\leq \sigma_1(Z) \leq 1.0015 \cdot \sigma_1(X^\star)^{1/2} \\
0.998\cdot \sigma_r(X^\star)^{1/2}  &\leq \sigma_r(Z) \leq 1.0015 \cdot \sigma_r(X^\star)^{1/2}.
\end{align*}
\end{corollary}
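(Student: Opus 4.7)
The plan is to carry out essentially the same argument as in Lemma \ref{lem:supp_00}, with the only substantive change being the size of the Frobenius ball in which $Z$ lives. The key analytic tool is Weyl's inequality for singular value perturbations \cite[Theorem 3.3.16]{horn1990matrix}, which gives
\[
|\sigma_i(Z) - \sigma_i(U^\star R)| \;\leq\; \|Z - U^\star R\|_2 \;\leq\; \|Z - U^\star R\|_F
\]
for every $1 \leq i \leq r$. Since $R \in \mathcal{O}$ is orthogonal, $\sigma_i(U^\star R) = \sigma_i(U^\star) = \sigma_i(X^\star)^{1/2}$, so the hypothesis on $\|Z - U^\star R\|_F$ plus $\kappa \geq 1$ and $\tau(X^\star) \geq 1$ reduces the whole statement to the one-line bound
\[
|\sigma_i(Z) - \sigma_i(X^\star)^{1/2}| \;\leq\; \Bigl(\tfrac{3}{2} + 2|\mu|\Bigr) \cdot \tfrac{\sigma_r(X^\star)^{1/2}}{10^3}.
\]

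Specializing this inequality to $i = 1$ and $i = r$ and rearranging yields the two sandwich bounds in the first display of the corollary, where on the right-hand side I use $\sigma_r(X^\star)^{1/2} \leq \sigma_1(X^\star)^{1/2}$ when forming the factor $(1 \pm (\tfrac{3}{2}+2|\mu|)/10^3)$ that multiplies $\sigma_1(X^\star)^{1/2}$, and the trivial identity when forming the analogous factor for $\sigma_r$. This parallels the closing lines of the proof of Lemma \ref{lem:supp_00} verbatim, just with the larger constant $\tfrac{3}{2}+2|\mu|$ in place of $1$.

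For the numerical specialization at the end, the task is to show that the chosen $\mu$ satisfies $|\mu| \leq 10^{-3}$ so that $\tfrac{3}{2} + 2|\mu| \leq \tfrac{3}{2} + 2 \cdot 10^{-3}$, which in turn makes the outer factor at most $1 + (\tfrac{3}{2}+\tfrac{2}{10^3})/10^3 \leq 1.0015$ and at least $1 - (\tfrac{3}{2}+\tfrac{2}{10^3})/10^3 \geq 0.998$. The bound $|\mu| \leq 10^{-3}$ is immediate from the explicit formula for $\mu$: because $\varepsilon \leq 1$, $r \geq 1$, $\sqrt{\kappa \tau(X^\star)} \geq 1$, and $\sigma_r(X^\star)^{1/2}/\sigma_1(X^\star)^{1/2} \leq 1$, the product of the two fractions defining $\mu$ is upper-bounded by $\tfrac{1}{10^3} \cdot \tfrac{1}{2} < 10^{-3}$.

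There is essentially no obstacle here: the corollary is a direct application of Weyl's inequality combined with the stated Frobenius bound from Lemma \ref{lem:supp_02}, and the only care needed is in the bookkeeping of constants when translating $|\mu| \leq 10^{-3}$ into the explicit numerical factors $0.998$ and $1.0015$.
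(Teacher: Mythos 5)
Your proposal is correct and follows exactly the route the paper takes: the paper's proof of this corollary is a one-line reference to "similar motions as in Lemma \ref{lem:supp_00}," i.e., Weyl's inequality plus $\sigma_i(U^\star R)=\sigma_i(X^\star)^{1/2}$, $\kappa\tau(X^\star)\geq 1$, and $\sigma_r(X^\star)\leq\sigma_1(X^\star)$, which is precisely your argument. The only remark is that $1+(\tfrac{3}{2}+2|\mu|)/10^3$ can marginally exceed $1.0015$ (it is about $1.0015015$), but this rounding looseness is already present in the paper's own statement and is immaterial.
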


\begin{proof}
The proof follows similar motions as in Lemma \ref{lem:supp_00}.
\end{proof}

\begin{corollary}{\label{cor:01}}
Under the same assumptions of Lemma \ref{lem:supp_00} and Corollary \ref{cor:2}, and given the assumptions on $\mu$, we have:
\begin{align*}
\tfrac{99}{100} \cdot \|X^\star\|_2 \leq \|ZZ^\top\|_2 \leq \tfrac{101}{100} \cdot \|X^\star\|_2 \\
\tfrac{99}{100} \cdot \|X^\star\|_2 \leq \|Z_0Z_0^\top\|_2 \leq \tfrac{101}{100} \cdot \|X^\star\|_2 
\end{align*}
and 
\begin{align*}
\tfrac{99}{101} \cdot \|Z_0Z_0^\top\|_2 \leq \|ZZ^\top\|_2 \leq \tfrac{101}{99} \cdot \|Z_0Z_0^\top\|_2
\end{align*}
\end{corollary}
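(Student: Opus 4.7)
\medskip
\noindent\textbf{Proof proposal.} The plan is to reduce everything to the spectral bounds already established in Lemma~\ref{lem:supp_00} and Corollary~\ref{cor:2}, using the elementary identity $\|MM^\top\|_2 = \sigma_1(M)^2$. The only real work is verifying that the numerical constants line up so that the squared bounds fit inside the stated $[\tfrac{99}{100},\tfrac{101}{100}]$ window.

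First, I would handle $\|Z Z^\top\|_2$. By Corollary~\ref{cor:2}, under the stated assumptions on $\mu$,
\begin{align*}
0.998 \cdot \sigma_1(X^\star)^{1/2} \;\leq\; \sigma_1(Z) \;\leq\; 1.0015 \cdot \sigma_1(X^\star)^{1/2}.
\end{align*}
Squaring and using $\|ZZ^\top\|_2 = \sigma_1(Z)^2$ together with $\sigma_1(X^\star) = \|X^\star\|_2$ gives
\begin{align*}
0.998^2 \cdot \|X^\star\|_2 \;\leq\; \|ZZ^\top\|_2 \;\leq\; 1.0015^2 \cdot \|X^\star\|_2.
\end{align*}
Since $0.998^2 = 0.996004 \geq 0.99$ and $1.0015^2 = 1.00300\ldots \leq 1.01$, the first double inequality follows.

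Next, since $Z_0 = U_0$ and $U_0$ satisfies the initialization hypothesis $\min_{R \in \mathcal{O}} \|U_0 - U^\star R\|_F \leq \tfrac{\sigma_r(X^\star)^{1/2}}{10^3 \sqrt{\kappa\tau(X^\star)}}$, Lemma~\ref{lem:supp_00} applies directly to $U_0$, yielding
\begin{align*}
(1 - 10^{-3})\cdot \sigma_1(X^\star)^{1/2} \;\leq\; \sigma_1(Z_0) \;\leq\; (1 + 10^{-3})\cdot \sigma_1(X^\star)^{1/2}.
\end{align*}
Squaring as above gives $(1-10^{-3})^2 \geq 0.99$ and $(1+10^{-3})^2 \leq 1.01$, so
\begin{align*}
\tfrac{99}{100}\|X^\star\|_2 \;\leq\; \|Z_0 Z_0^\top\|_2 \;\leq\; \tfrac{101}{100}\|X^\star\|_2,
\end{align*}
establishing the second double inequality.

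Finally, I would chain the two. From the $Z_0$ bound, $\tfrac{100}{101}\|Z_0 Z_0^\top\|_2 \leq \|X^\star\|_2 \leq \tfrac{100}{99}\|Z_0 Z_0^\top\|_2$. Substituting into the $Z$ bound gives
\begin{align*}
\|ZZ^\top\|_2 \;\leq\; \tfrac{101}{100}\|X^\star\|_2 \;\leq\; \tfrac{101}{100}\cdot \tfrac{100}{99}\|Z_0 Z_0^\top\|_2 \;=\; \tfrac{101}{99}\|Z_0 Z_0^\top\|_2,
\end{align*}
and symmetrically $\|ZZ^\top\|_2 \geq \tfrac{99}{100}\|X^\star\|_2 \geq \tfrac{99}{101}\|Z_0 Z_0^\top\|_2$. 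There is no genuine obstacle here: the proof is essentially bookkeeping, with the only thing to check being that the squared constants $(1 \pm 10^{-3})^2$ and $(1 - (3/2 + 2|\mu|)\cdot 10^{-3})^2$, $(1 + (3/2 + 2|\mu|)\cdot 10^{-3})^2$ stay inside $[0.99, 1.01]$, which is immediate from $|\mu| \leq 10^{-3}$.
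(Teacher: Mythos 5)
Your proposal is correct and follows exactly the route the paper intends: the paper's own ``proof'' of Corollary~\ref{cor:01} is a one-line remark that the bounds follow from the quantities in Lemma~\ref{lem:supp_00} and Corollary~\ref{cor:2}, and your write-up is the straightforward squaring-and-chaining that fills in those details, with the numerical checks ($0.998^2 \geq 0.99$, $1.0015^2 \leq 1.01$, etc.) done correctly.
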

\begin{proof}
The proof is easily derived based on the quantities from Lemma \ref{lem:supp_00} and Corollary \ref{cor:2}.
\end{proof}

\begin{corollary}{\label{cor:1}}
Let $Z \in \mathbb{R}^{n \times r}$ and $U^\star \in \mathbb{R}^{n \times r}$, such that 
$\|Z - U^\star R\|_F \leq \left(\tfrac{3}{2} + 2|\mu|\right) \cdot \tfrac{\sigma_r(X^\star)^{1/2}}{10^3 \sqrt{\kappa\tau(X^\star)}}$ for some $R \in \mathcal{O}$, and $X^\star = U^\star U^{\star \top}$.
Define $\tau(W) = \frac{\sigma_1(W)}{\sigma_r(W)}$. 
Then:
\begin{align*}
\tau(ZZ^\top) \leq \beta^2 \tau(X^\star),
\end{align*}
where $\beta := \frac{1 + \left(\tfrac{3}{2} + 2|\mu|\right) \cdot \tfrac{1}{10^3}}{1 - \left(\tfrac{3}{2} + 2|\mu| \right) \cdot \tfrac{1}{10^3}} > 1$.
for $\mu \leq \frac{\sigma_r(X^\star)^{1/2}}{10^3 \sqrt{\kappa\tau(X^\star)}} \cdot \frac{1}{2 \cdot \sigma_1(X^\star)^{1/2} \cdot r}$.
\end{corollary}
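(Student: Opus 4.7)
The plan is to observe that $\tau(ZZ^\top)$ is the ratio of the top to the bottom nonzero singular value of the PSD matrix $ZZ^\top$, and since singular values of $ZZ^\top$ are precisely the squares of the singular values of $Z$, we have the identity $\tau(ZZ^\top) = \sigma_1(Z)^2 / \sigma_r(Z)^2$. So it suffices to control $\sigma_1(Z)$ from above and $\sigma_r(Z)$ from below.

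Both of those bounds are exactly what Corollary \ref{cor:2} delivers. Applying its upper bound on $\sigma_1(Z)$ and its lower bound on $\sigma_r(Z)$, and then squaring, I would write
\begin{align*}
\tau(ZZ^\top) \;=\; \frac{\sigma_1(Z)^2}{\sigma_r(Z)^2}
\;\leq\; \frac{\sigma_1(X^\star)\bigl(1 + (\tfrac{3}{2} + 2|\mu|)/10^3\bigr)^2}{\sigma_r(X^\star)\bigl(1 - (\tfrac{3}{2} + 2|\mu|)/10^3\bigr)^2}
\;=\; \beta^2 \, \tau(X^\star),
\end{align*}
which is precisely the claim. Splitting the ratio into the $\sigma_1(X^\star)/\sigma_r(X^\star)$ factor and the dimensionless $\beta^2$ factor isolates the $\tau(X^\star)$ dependence cleanly.

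The only thing that needs to be checked, in order for the denominator in the definition of $\beta$ to be well-defined (strictly positive), is that $(\tfrac{3}{2} + 2|\mu|)/10^3 < 1$. Under the hypothesized bound $\mu \leq \tfrac{\sigma_r(X^\star)^{1/2}}{10^3\sqrt{\kappa \tau(X^\star)}} \cdot \tfrac{1}{2 \sigma_1(X^\star)^{1/2} r}$, we have $|\mu| \leq 10^{-3}$ (since the remaining factor is at most $1/(2r) \leq 1/2$ and $1/\sqrt{\kappa \tau(X^\star)} \leq 1$), so that $(\tfrac{3}{2} + 2|\mu|)/10^3$ is comfortably below $1$, and moreover $\beta > 1$ as claimed. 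There is essentially no obstacle: this is a short direct deduction from Corollary \ref{cor:2}, the same Weyl-type argument that already produced the individual singular value bounds. The only mild bookkeeping subtlety is tracking that the small constants $(\tfrac{3}{2} + 2|\mu|)/10^3$ appearing in the numerator and denominator come from the \emph{same} inequality in Corollary~\ref{cor:2}, so that the $\beta$ definition is internally consistent and does not require a separate assumption on $\mu$ beyond the one already in force.
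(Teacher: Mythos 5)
Your proof is correct and follows exactly the route the paper intends: the paper's own proof is a one-line remark that the claim follows from the definition of $\tau(\cdot)$ together with Corollary~\ref{cor:2}, and your calculation (using $\tau(ZZ^\top)=\sigma_1(Z)^2/\sigma_r(Z)^2$, then the upper bound on $\sigma_1(Z)$ and lower bound on $\sigma_r(Z)$ from Corollary~\ref{cor:2}) is precisely that argument written out. The additional check that $(\tfrac{3}{2}+2|\mu|)/10^3<1$ so that $\beta$ is well-defined and exceeds $1$ is a sensible piece of bookkeeping the paper leaves implicit.
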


\begin{proof}
The proof uses the definition of the condition number $\tau(\cdot)$ and the results from Lemma \ref{lem:supp_00} and and Corollary \ref{cor:2}.
\end{proof}

\begin{lemma}{\label{lem:equiveta}}
Consider the following three step sizes:
\begin{align*}
\eta &= \frac{1}{4\left( (1 + \delta_{2r}) \|Z_0Z_0^\top\|_2 + \|\mathcal{A}^\dagger \left(\mathcal{A}( Z_0Z_0^\top ) - y\right)\|_2 \right) } \\
\widehat{\eta} &= \frac{1}{4\left( (1 + \delta_{2r}) \|ZZ^\top\|_2 + \|\mathcal{A}^\dagger \left(\mathcal{A}( ZZ^\top ) - y\right) Q_ZQ_Z^\top \|_2 \right) } \\
\eta^\star &= \frac{1}{4\left( (1 + \delta_{2r}) \|X^{\star \top}\|_2 + \|\mathcal{A}^\dagger \left(\mathcal{A}( X^{\star} ) - y\right) \|_2 \right) }.
\end{align*}
Here, $Z_0 \in \mathbb{R}^{n \times r}$ is the initial point, $Z \in \mathbb{R}^{n \times r}$ is the current point,$X^\star \in \mathbb{R}^{n \times n}$ is the optimal solution, and $Q_Z$ denotes a basis of the column space of $Z$. 
Then, under the assumptions that $\min_{R \in \mathcal{O}} \|U - U^\star R\|_F \leq \tfrac{\sigma_r(X^\star)^{1/2}}{10^3 \sqrt{\kappa\tau(X^\star)}}$, 
and $\min_{R \in \mathcal{O}} \|Z - U^\star R\|_F \leq \left( \tfrac{3}{2} + 2|\mu|\right) \cdot \tfrac{\sigma_r(X^\star)^{1/2}}{10^3 \sqrt{\kappa \tau(X^\star)}}$, and assuming \begin{small}$\mu = \frac{\sigma_r(X^\star)^{1/2}}{10^3 \sqrt{\kappa\tau(X^\star)}} \cdot \frac{\varepsilon}{2 \cdot \sigma_1(X^\star)^{1/2} \cdot r}$\end{small}, for the user-defined parameter $\varepsilon \in (0, 1)$, we have:
\begin{align*}
\tfrac{10}{9} \eta \geq \widehat{\eta} \geq \tfrac{10}{10.5} \eta, \quad \text{and} \quad \tfrac{100}{102} \eta^\star \leq \eta \leq \tfrac{102}{100} \eta^\star
\end{align*}
\end{lemma}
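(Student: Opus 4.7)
\medskip
\noindent\textbf{Proof proposal.}
The plan is to show that each of the three denominators is within a small relative window of the reference quantity $4(1+\delta_{2r})\|X^\star\|_2$, and then invert. The key simplification is that in the noiseless regime $y=\mathcal{A}(X^\star)$, so $\mathcal{A}^\dagger(\mathcal{A}(X^\star)-y)=0$ and $\eta^\star = \tfrac{1}{4(1+\delta_{2r})\|X^\star\|_2}$. Thus I only have to control two kinds of perturbations in the denominators of $\eta$ and $\widehat\eta$: (i) how far $\|Z_0Z_0^\top\|_2$ and $\|ZZ^\top\|_2$ are from $\|X^\star\|_2$, and (ii) how large the residual gradient operator norms are.

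For (i), Corollary~\ref{cor:01} directly supplies the tight sandwich $\tfrac{99}{100}\|X^\star\|_2 \le \|Z_0Z_0^\top\|_2, \|ZZ^\top\|_2 \le \tfrac{101}{100}\|X^\star\|_2$, so the leading pieces $(1+\delta_{2r})\|Z_0Z_0^\top\|_2$ and $(1+\delta_{2r})\|ZZ^\top\|_2$ already lie within a $\tfrac{99}{100}$--$\tfrac{101}{100}$ multiplicative window of $(1+\delta_{2r})\|X^\star\|_2$. Cross-comparing $Z$ and $Z_0$ uses the second display of Corollary~\ref{cor:01} in the same way. For (ii), I use that for any $W\in\{Z_0,Z\}$, since $y=\mathcal{A}(X^\star)$,
\begin{align*}
\|\mathcal{A}^\dagger(\mathcal{A}(WW^\top)-y)\|_2
\;=\;\|\mathcal{A}^\dagger \mathcal{A}(WW^\top - X^\star)\|_2,
\end{align*}
and $WW^\top - X^\star$ has rank at most $2r$. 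A standard RIP consequence gives $\|\mathcal{A}^\dagger\mathcal{A}(M)\|_2 \le (1+\delta_{2r})\|M\|_F$ for rank-$2r$ $M$. Then I split $WW^\top - X^\star = W(W-U^\star R)^\top + (W - U^\star R)(U^\star R)^\top$ for an appropriate $R\in\mathcal{O}$, apply the triangle inequality, and bound $\|W\|_2$ and $\|U^\star\|_2$ by $\sigma_1(X^\star)^{1/2}(1+o(1))$ via Lemma~\ref{lem:supp_00} and Corollary~\ref{cor:2}. Using the hypothesis $\min_{R}\|W-U^\star R\|_F \lesssim \sigma_r(X^\star)^{1/2}/(10^3\sqrt{\kappa\tau(X^\star)})$ and the identity $\sigma_1(X^\star)^{1/2}\sigma_r(X^\star)^{1/2}/\sqrt{\tau(X^\star)} = \sigma_r(X^\star)$, the gradient residual simplifies to at most $\tfrac{C}{10^3}\sigma_r(X^\star) \le \tfrac{C}{10^3}\|X^\star\|_2$ for a small explicit constant $C$ (essentially $C\le 3$ accounting for the small $\mu$-dependent slack in $\|Z\|_2$). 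The projection $Q_Z Q_Z^\top$ in $\widehat\eta$ only shrinks this quantity further, so the same bound applies.

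Putting these together, each denominator in $\eta$, $\widehat\eta$ lies in the interval $4(1+\delta_{2r})\|X^\star\|_2 \cdot [1 - \tfrac{1}{100} - \tfrac{C}{10^3}, \; 1 + \tfrac{1}{100} + \tfrac{C}{10^3}]$, which (choosing $C$ as above) is comfortably inside $[\tfrac{100}{102}, \tfrac{102}{100}]$ of $4(1+\delta_{2r})\|X^\star\|_2 = 1/\eta^\star$; taking reciprocals yields the second claimed bound $\tfrac{100}{102}\eta^\star \le \eta \le \tfrac{102}{100}\eta^\star$. The first claim $\tfrac{10}{9}\eta \ge \widehat\eta \ge \tfrac{10}{10.5}\eta$ follows the same way, except that the $Z$ vs.\ $Z_0$ comparison uses the looser ratio $\tfrac{99}{101}$--$\tfrac{101}{99}$ from Corollary~\ref{cor:01}, which is exactly what produces the $0.9$--$1.05$ window required. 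The \emph{main obstacle} is constant bookkeeping: one has to verify that the $\tfrac{1}{10^3}$-scale gradient correction really does fit inside the $\sim 5\%$ slack of the target ratios uniformly in the allowed range of $\mu$; conceptually nothing beyond RIP, the triangle inequality, and the already-established singular-value sandwiches is needed.
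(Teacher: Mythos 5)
Your proposal is correct, and it reaches the stated bounds by a slightly different (and arguably cleaner) route than the paper. The paper only uses the noiseless identity $\mathcal{A}^\dagger(\mathcal{A}(X^\star)-y)=0$ for the $\eta$ versus $\eta^\star$ comparison; for $\widehat{\eta}$ versus $\eta$ it instead inserts and removes $\mathcal{A}^\dagger(\mathcal{A}(Z_0Z_0^\top)-y)$, so that the gradient-norm term common to both denominators is preserved and only the \emph{difference} of gradients is bounded via RIP smoothness, $\left\|\mathcal{A}^\dagger(\mathcal{A}(ZZ^\top)-y)-\mathcal{A}^\dagger(\mathcal{A}(Z_0Z_0^\top)-y)\right\|_2 \leq (1+\delta_{2r})\|ZZ^\top - Z_0Z_0^\top\|_F$. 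You instead anchor \emph{all three} denominators to the single reference $4(1+\delta_{2r})\|X^\star\|_2$ by bounding each gradient term absolutely as $(1+\delta_{2r})\|WW^\top-X^\star\|_F \lesssim \tfrac{C}{10^3}(1+\delta_{2r})\|X^\star\|_2$; this makes essential use of $y=\mathcal{A}(X^\star)$ for both comparisons, not just one. The trade-off: your argument is more unified and your resulting ratio for $\widehat{\eta}/\eta$ (roughly $[0.976,1.024]$) is tighter than the required $[\tfrac{10}{10.5},\tfrac{10}{9}]$, whereas the paper's relative comparison of $\widehat{\eta}$ to $\eta$ would survive a noisy $y$ since it never evaluates the gradient at $X^\star$. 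Both versions rest on the same ingredients — the decomposition $WW^\top-X^\star=(W-U^\star R)W^\top+U^\star R\,(W-U^\star R)^\top$, the RIP-smoothness bound on $\mathcal{A}^\dagger\mathcal{A}$ of a rank-$2r$ matrix, the observation that right-multiplying by $Q_ZQ_Z^\top$ can only decrease the spectral norm, and the singular-value sandwiches of Corollary~\ref{cor:01}. Two small points to make explicit if you write this up: the bound for $W=Z_0$ needs the initialization hypothesis $\min_{R\in\mathcal{O}}\|U_0-U^\star R\|_F\leq \tfrac{\sigma_r(X^\star)^{1/2}}{10^3\sqrt{\kappa\tau(X^\star)}}$ (which the lemma statement does not list but the paper also uses tacitly), and your constant bookkeeping should use the looser radius $(\tfrac{3}{2}+2|\mu|)\tfrac{\sigma_r(X^\star)^{1/2}}{10^3\sqrt{\kappa\tau(X^\star)}}$ for $Z$, giving $C\approx 3$ there versus $C\approx 2$ for $Z_0$ — which, as you verified, still fits comfortably inside the target windows.
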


\begin{proof}
The assumptions of the lemma are identical to that of Corollary \ref{cor:01}.
Thus, we have:
$\tfrac{99}{100} \cdot \|U^\star\|_2^2 \leq \|Z\|_2^2 \leq \tfrac{101}{100} \cdot \|U^\star\|_2^2$, 
$\tfrac{99}{100} \cdot \|U^\star\|_2^2 \leq \|Z_0\|_2^2 \leq \tfrac{101}{100} \cdot \|U^\star\|_2^2$,
and 
$\tfrac{99}{101} \cdot \|Z_0\|_2^2 \leq \|Z\|_2^2 \leq \tfrac{101}{99} \cdot \|Z_0\|_2^2.$
We focus on the inequality $\widehat{\eta} \geq \tfrac{10}{10.5} \eta$. 
Observe that:
\begin{small}
\begin{align*}
\left\|\mathcal{A}^\dagger \left(\mathcal{A}(ZZ^\top) - y \right)Q_ZQ_Z^\top \right\|_2 &\leq \left\|\mathcal{A}^\dagger \left(\mathcal{A}(ZZ^\top) - y \right) \right\|_2 \\
&= \left\|\mathcal{A}^\dagger \left(\mathcal{A}(ZZ^\top) - y \right) - \mathcal{A}^\dagger \left(\mathcal{A}(Z_0Z_0^\top) - y \right) + \mathcal{A}^\dagger \left(\mathcal{A}(Z_0Z_0^\top) - y \right)\right\|_2 \\
&\stackrel{(i)}{\leq} (1 + \delta_{2r}) \left\|ZZ^\top - Z_0 Z_0^\top \right\|_F + \left \|\mathcal{A}^\dagger \left(\mathcal{A}(Z_0Z_0^\top) - y \right)\right\|_2 \\
&\leq (1 + \delta_{2r}) \left\|ZZ^\top - U^\star U^{\star \top} \right\|_F + (1 + \delta_{2r}) \left\|Z_0 Z_0^\top - U^\star U^{\star \top} \right\|_F \\
&\quad \quad \quad \quad + \left \|\mathcal{A}^\dagger \left(\mathcal{A}(Z_0Z_0^\top) - y \right)\right\|_2
\end{align*}
\end{small}
where $(i)$ is due to smoothness via RIP constants of the objective and the fact $\|\cdot\|_2 \leq \|\cdot \|_F$.
For the first two terms on the right-hand side, where $R_Z$ is the minimizing rotation matrix for $Z$, we obtain:
\begin{align*}
\|ZZ^\top - U^\star U^{\star \top}\|_F &= \|ZZ^\top - U^\star R_Z Z^\top + U^\star R_Z Z^\top - U^\star U^{\star \top}\|_F \\
&= \|(Z -U^\star R_Z) Z^\top + U^{\star} R_Z ( Z - U^\star R_Z)^\top\|_F \\
&\leq \|Z\|_2 \cdot \|Z - U^\star R_Z\|_F  + \|U^\star \|_2 \cdot \|Z - U^\star R_Z\|_F \\
&\leq \left(\|Z\|_2 + \|U^\star\|_2 \right) \cdot \|Z - U^\star R_Z\|_F \\ 
&\stackrel{(i)}{\leq} \left( \sqrt{\tfrac{101}{99}} + \sqrt{\tfrac{100}{99}}\right) \|Z_0\|_2 \cdot \|Z - U^\star R_Z\|_F \\
&\stackrel{(ii)}{\leq}  \left( \sqrt{\tfrac{101}{99}} + \sqrt{\tfrac{100}{99}}\right) \|Z_0\|_2 \cdot 0.001 \sigma_r(X^\star)^{1/2} \\
&\leq \left( \sqrt{\tfrac{101}{99}} + \sqrt{\tfrac{100}{99}}\right) \cdot 0.001 \cdot \sqrt{\tfrac{100}{99}} \cdot \|Z_0\|_2^2
\end{align*}
where $(i)$ is due to the relation of $\|Z\|_2$ and $\|U^\star\|_2$ derived above,
$(ii)$ is due to Lemma \ref{lem:supp_02}.
Similarly:
\begin{align*}
\|Z_0Z_0^\top - U^\star U^{\star \top}\|_F \leq \left( \sqrt{\tfrac{101}{99}} + \sqrt{\tfrac{100}{99}}\right) \cdot 0.001 \cdot \sqrt{\tfrac{100}{99}} \cdot \|Z_0\|_2^2
\end{align*}
Using these above, we obtain:
\begin{small}
\begin{align*}
\left\|\mathcal{A}^\dagger \left(\mathcal{A}(ZZ^\top) - y \right)Q_ZQ_Z^\top \right\|_2 &\leq \tfrac{4.1 (1 + \delta_{2r})}{10^3} \|Z_0Z_0^\top\|_2 + \left \|\mathcal{A}^\dagger \left(\mathcal{A}(Z_0Z_0^\top) - y \right)\right\|_2
\end{align*}
\end{small}
Thus:
\begin{align*}
\widehat{\eta} &= \frac{1}{4\left( (1 + \delta_{2r}) \|ZZ^\top\|_2 + \|\mathcal{A}^\dagger \left(\mathcal{A}( ZZ^\top ) - y\right) Q_ZQ_Z^\top \|_2 \right) } \\
		        &\geq \frac{1}{4\left( (1 + \delta_{2r}) \tfrac{101}{99} \|Z_0Z_0\|_2 + \right) + \tfrac{4.1 (1 + \delta_{2r})}{10^3} \|Z_0Z_0^\top\|_2 + \left \|\mathcal{A}^\dagger \left(\mathcal{A}(Z_0Z_0^\top) - y \right)\right\|_2} \\ 
		        &\geq \frac{1}{4\left( \tfrac{10.5}{10} \cdot (1 + \delta_{2r})\|Z_0Z_0^\top\|_2 + \|\mathcal{A}^\dagger \left(\mathcal{A}( Z_0Z_0^\top ) - y\right) \|_2 \right) } \\ &\geq \tfrac{10}{10.5} \eta
\end{align*}
Similarly, one gets $\widehat{\eta} \leq \tfrac{10}{9} \eta$.

For the relation between $\eta$ and $\eta^\star$, we will prove here the lower bound; similar motions lead to the upper bound also.
By definition, and using the relations in Corollary \ref{cor:01}, we get:
\begin{align*}
\eta &= \frac{1}{4\left( (1 + \delta_{2r}) \|Z_0Z_0^\top\|_2 + \|\mathcal{A}^\dagger \left(\mathcal{A}( Z_0Z_0^\top ) - y\right)\|_2 \right) } \\
         &\geq \frac{1}{4\left( (1 + \delta_{2r}) \tfrac{101}{100} \|X^{\star \top}\|_2 + \|\mathcal{A}^\dagger \left(\mathcal{A}( Z_0Z_0^\top ) - y\right)\|_2 \right) }
\end{align*}
For the gradient term, we observe:
\begin{align*}
\left\|\mathcal{A}^\dagger \left(\mathcal{A}( Z_0Z_0^\top ) - y\right)\right\|_2 &\leq \left\|\mathcal{A}^\dagger \left(\mathcal{A}( Z_0Z_0^\top ) - y\right) -\mathcal{A}^\dagger \left(\mathcal{A}( X^\star ) - y\right) \right\|_2 + \left\|\mathcal{A}^\dagger \left(\mathcal{A}( X^\star ) - y\right)\right\|_2 \\
														     &\stackrel{(i)}{=} \left\|\mathcal{A}^\dagger \left(\mathcal{A}( Z_0Z_0^\top ) - y\right) -\mathcal{A}^\dagger \left(\mathcal{A}( X^\star ) - y\right) \right\|_2 \\
														     &\stackrel{(ii)}{\leq} (1 + \delta_{2r}) \left\|Z_0Z_0^\top - U^\star U^{\star \top} \right\|_F \\
														     &\stackrel{(iii)}{\leq} (1 + \delta_{2r}) \left( \|Z_0\|_2 + \|U^\star\|_2 \right) \cdot \|Z - U^\star R_Z\|_F \\
														     &\stackrel{(iv)}{\leq} (1 + \delta_{2r}) \left( \sqrt{\tfrac{101}{100}} + 1\right) \|U^\star\|_2 \cdot 0.001 \cdot \|U^\star\|_2^2 \\
														     &\leq 0.002 \cdot (1 + \delta_{2r}) \|X^\star\|_2
\end{align*}
where $(i)$ is due to $\left\|\mathcal{A}^\dagger \left(\mathcal{A}( X^\star ) - y\right)\right\|_2 = 0$,
$(ii)$ is due to the restricted smoothness assumption and the RIP, 
$(iii)$ is due to the bounds above on $\left\|Z_0Z_0^\top - U^\star U^{\star \top} \right\|_F$,
$(iv)$ is due to the bounds on $\|Z_0\|_2$, w.r.t. $\|U^\star\|_2$, as well as the bound on $\|Z - U^\star R\|_F$.

Thus, in the inequality above, we get:
\begin{align*}
\eta &\geq \frac{1}{4\left( (1 + \delta_{2r}) \tfrac{101}{100} \|X^{\star \top}\|_2 + \|\mathcal{A}^\dagger \left(\mathcal{A}( Z_0Z_0^\top ) - y\right)\|_2 \right) } \\
         &\geq \frac{1}{4\left( (1 + \delta_{2r}) \tfrac{101}{100} \|X^{\star \top}\|_2 + 0.001 \cdot (1 + \delta_{2r}) \|X^\star\|_2 + \|\mathcal{A}^\dagger \left(\mathcal{A}( X^\star ) - y\right)\|_2 \right) } \\
         &\geq \frac{1}{4\left( (1 + \delta_{2r}) \tfrac{102}{100} \|X^{\star \top}\|_2 + \|\mathcal{A}^\dagger \left(\mathcal{A}( X^\star ) - y\right)\|_2 \right) } \geq \tfrac{100}{102} \eta^\star
\end{align*}
Similarly, one can show that $\frac{102}{100}\eta^{\star} \geq \eta$. 
\end{proof}

\begin{lemma}{\label{lem:000}}
Let $U \in \mathbb{R}^{n \times r}, U_{-} \in \mathbb{R}^{n \times r}$, and $U^\star \in \mathbb{R}^{n \times r}$, such that 
\begin{small}$\min_{R \in \mathcal{O}} \|U - U^\star R\|_F \leq \tfrac{\sigma_r(X^\star)^{1/2}}{10^3 \sqrt{\kappa\tau(X^\star)}}$\end{small} ~and~
\begin{small}$\min_{R \in \mathcal{O}} \|U_- - U^\star R\|_F  \leq \tfrac{\sigma_r(X^\star)^{1/2}}{10^3 \sqrt{\kappa\tau(X^\star)}}$\end{small}, 
where $X^\star = U^\star U^{\star \top}$, and $\kappa := \tfrac{1 + \delta_{2r}}{1 - \delta_{2r}} > 1$, for $\delta_{2r} \leq \tfrac{1}{10}$, and $\tau(X^\star) := \tfrac{\sigma_1(X^\star)}{\sigma_r(X^\star)} > 1$.
By Lemma \ref{lem:supp_02}, the above imply also that: \begin{small}$\|Z - U^\star R_Z\|_F \leq \left(\tfrac{3}{2} + 2|\mu|\right) \cdot \tfrac{\sigma_r(X^\star)^{1/2}}{10^3\sqrt{\kappa\tau(X^\star)}}$\end{small}.
Then, under RIP assumptions of the mapping $\mathcal{A}$, we have:
\begin{small}
 \begin{align*}
\Big \langle &\mathcal{A}^\dagger(\mathcal{A}(ZZ^\top) - y), (Z - U^\star R_Z)(Z - U^\star R_Z)^\top \Big \rangle \nonumber \\ 
		  &\geq - \Bigg( \theta \sigma_r(X^\star) \cdot \|Z - U^\star R_Z\|_F^2 + \tfrac{10.1}{100} \beta^2 \cdot \widehat{\eta} \cdot \tfrac{(1 + 2|\mu|)^2}{\left(1  - \left(1 + 2|\mu|\right)\tfrac{1}{200}\right)^2} \cdot \|\mathcal{A}^\dagger(\mathcal{A}(ZZ^\top) - y) \cdot Z\|_F^2\Bigg)
\end{align*}
\end{small}
where $$\theta = \tfrac{(1 - \delta_{2r}) \left(1 + (1 + 2|\mu|)\tfrac{1}{200}\right)^2}{10^3} +  (1 + \delta_{2r})  \left(2+\left(1 + 2|\mu|\right) \cdot \tfrac{1}{200}\right) \left(1 + 2|\mu|\right) \cdot \tfrac{1}{200},$$
and $\widehat{\eta}= \tfrac{1}{4((1+\delta_r)\|ZZ^\top\|_2+\|\mathcal{A}^\dagger
\left(\mathcal{A}(ZZ^\top) - y \right)Q_ZQ_Z^\top\|_2)}$.
\end{lemma}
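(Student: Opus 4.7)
My plan is to decompose $\langle \nabla, \Delta\Delta^\top\rangle$, where $\Delta := Z - U^\star R_Z$ and $\nabla := \mathcal{A}^\dagger(\mathcal{A}(ZZ^\top) - y) = \mathcal{A}^\dagger\mathcal{A}(ZZ^\top - X^\star)$ (using $y = \mathcal{A}(X^\star)$ and $X^\star = (U^\star R_Z)(U^\star R_Z)^\top$), via the algebraic identity $\Delta\Delta^\top = Z\Delta^\top + \Delta Z^\top - (ZZ^\top - X^\star)$, obtained by expanding $(Z - \Delta)(Z - \Delta)^\top = X^\star$. By the symmetry of $\nabla$, this yields
\[
\langle \nabla, \Delta\Delta^\top\rangle = 2\langle \nabla Z, \Delta\rangle - \|\mathcal{A}(ZZ^\top - X^\star)\|_2^2,
\]
so a lower bound follows from (a) a lower bound on $2\langle \nabla Z, \Delta\rangle$ via weighted Young's, and (b) an upper bound on $\|\mathcal{A}(ZZ^\top - X^\star)\|_2^2$ via upper RIP and the triangle inequality.

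For (a), Young's gives $2\langle \nabla Z, \Delta\rangle \geq -\alpha\|\nabla Z\|_F^2 - \alpha^{-1}\|\Delta\|_F^2$, and I would calibrate $\alpha := \tfrac{10.1}{100}\beta^2\widehat\eta \cdot (1+2|\mu|)^2/(1-(1+2|\mu|)/200)^2$ to produce exactly the $\|\nabla Z\|_F^2$ coefficient in the target. The $\beta^2$ and $(1+2|\mu|)/200$ perturbation factors arise because the definition of $\widehat\eta$ involves $(1+\delta_{2r})\|ZZ^\top\|_2$ and $\|\nabla Q_Z Q_Z^\top\|_2$, which must be related to quantities scaled by $\sigma_1(X^\star)$ via Corollary \ref{cor:2} (comparing $\|Z\|_2$ to $\sigma_1(X^\star)^{1/2}$) and Corollary \ref{cor:1} (giving $\tau(ZZ^\top) \leq \beta^2 \tau(X^\star)$). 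The leftover $\alpha^{-1}\|\Delta\|_F^2$ is then folded into a multiple of $\sigma_r(X^\star)\|\Delta\|_F^2$ using the distance hypothesis $\|\Delta\|_F \leq \sigma_r(X^\star)^{1/2}/(10^3\sqrt{\kappa\tau(X^\star)})$ from Lemma \ref{lem:supp_02}, which accounts for the $(1-\delta_{2r})(1+(1+2|\mu|)/200)^2/10^3$ prefactor in the first summand of $\theta$.

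For (b), I would apply $\|\mathcal{A}(ZZ^\top - X^\star)\|_2^2 \leq (1+\delta_{2r})\|ZZ^\top - X^\star\|_F^2$ (upper RIP) and then $\|ZZ^\top - X^\star\|_F \leq (2\|Z\|_2 + \|\Delta\|_F)\|\Delta\|_F$, obtained from the same algebraic identity via the triangle inequality. Inserting $\|Z\|_2 \leq (1+(1+2|\mu|)/200)\sigma_1(X^\star)^{1/2}$ from Corollary \ref{cor:2} and using the hypothesis on $\|\Delta\|_F$ to trade factors of $\sqrt{\tau(X^\star)}$ against the $1/\sqrt{\tau(X^\star)}$ in the distance bound, this upper bound rewrites as $(1+\delta_{2r})(2+(1+2|\mu|)/200)(1+2|\mu|)/200$ times $\sigma_r(X^\star)\|\Delta\|_F^2$, giving the second summand of $\theta$.

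The main obstacle will be the careful constant tracking needed to hit the stated $\theta$ and $\|\nabla Z\|_F^2$ coefficient exactly. In particular, one must consistently use the same $(1+2|\mu|)/200$ perturbation factor throughout (arising from the momentum-adjusted version of the $\|Z\|_2$ bound), invoke Lemma \ref{lem:equiveta} to relate $\widehat\eta$ to the actual step $\eta$ and to $\eta^\star$ where needed, and crucially trade factors of $\sqrt{\tau(X^\star)}$ between the natural $\sigma_1(X^\star)^{1/2} = \sqrt{\tau(X^\star)}\sigma_r(X^\star)^{1/2}$ scale of $\|Z\|_2$ and the $1/\sqrt{\tau(X^\star)}$ scale of $\|\Delta\|_F$ in the initialization hypothesis, so that the final $\sigma_r(X^\star)\|\Delta\|_F^2$-style expression carries no residual $\tau(X^\star)$ in its coefficient $\theta$. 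The remaining steps are routine bookkeeping given the perturbation machinery already established in the preceding lemmata.
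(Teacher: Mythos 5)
Your decomposition $\Delta\Delta^\top = Z\Delta^\top+\Delta Z^\top-(ZZ^\top-X^\star)$ is a correct identity (it is in fact the same identity the paper uses in the main theorem, read in the other direction), but the two bounds you then apply cannot produce coefficients of the stated size; the failure is quantitative, not a matter of bookkeeping. In step (a), calibrating $\alpha$ to match the stated $\|\nabla Z\|_F^2$ coefficient forces $\alpha = O(\widehat\eta) = O\big(1/((1+\delta_{2r})\sigma_1(X^\star))\big)$, hence $\alpha^{-1}\|\Delta\|_F^2 = \Omega(\sigma_1(X^\star))\,\|\Delta\|_F^2 = \Omega(\tau(X^\star))\cdot\sigma_r(X^\star)\|\Delta\|_F^2$, which exceeds the corresponding summand of $\theta\,\sigma_r(X^\star)$ (of order $10^{-3}\sigma_r(X^\star)$) by a factor on the order of $10^4\,\tau(X^\star)$. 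The ``trading of $\sqrt{\tau(X^\star)}$ factors'' you invoke is unavailable here: both copies of $\|\Delta\|_F$ must survive into the final bound, so you cannot substitute the initialization radius for one of them to kill the $\tau$. Step (b) has the same defect: $\|\mathcal{A}(ZZ^\top-X^\star)\|_2^2\le(1+\delta_{2r})\|ZZ^\top-X^\star\|_F^2\approx 4(1+\delta_{2r})\sigma_1(X^\star)\|\Delta\|_F^2$ carries $\|ZZ^\top-X^\star\|_F$ to the \emph{second} power, whereas the second summand of $\theta$ corresponds to it appearing only to the first power; again an irreducible extra factor of $\tau(X^\star)$ remains. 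Requiring both of your coefficients to meet the targets simultaneously would need $\tau(X^\star)\lesssim 10^{-3}$, which contradicts $\tau\ge 1$.

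The missing idea is that the paper never applies Cauchy--Schwarz/Young to the cross term $\langle\nabla Z,\Delta\rangle$. Writing $\nabla$ for $\mathcal{A}^\dagger(\mathcal{A}(ZZ^\top)-y)$ as you do, it bounds $\langle\nabla,\Delta\Delta^\top\rangle$ directly via Von Neumann's trace inequality, $\langle\nabla Q_\Delta Q_\Delta^\top,\Delta\Delta^\top\rangle\ge-\|\nabla Q_\Delta Q_\Delta^\top\|_2\cdot\mathrm{Tr}(\Delta\Delta^\top)$, thereby paying only a \emph{spectral} norm of the gradient restricted to the span of $Z$ and $U^\star$ against $\|\Delta\|_F^2$. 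It then splits $\|\nabla Q_\Delta Q_\Delta^\top\|_2\le\|\nabla Q_ZQ_Z^\top\|_2+\|\nabla Q_{U^\star}Q_{U^\star}^\top\|_2$. The $Q_{U^\star}$ piece is $O(\sigma_r(X^\star))$ by RIP smoothness against the vanishing gradient at $X^\star$ together with $\|ZZ^\top-X^\star\|_F\le(2+\rho)\rho\,\sigma_1(U^\star)\sigma_r(U^\star)$ (here a single power of $\sqrt{\tau(X^\star)}$ genuinely cancels, unlike in your quadratic term). The $Q_Z$ piece is handled by a threshold/case analysis on whether $\|\nabla Q_ZQ_Z^\top\|_2$ exceeds $(1-\delta_{2r})\sigma_r(ZZ^\top)/10^3$: below the threshold it contributes the $\sigma_r/10^3$ part of $\theta$, and above it its square is absorbed into the $\widehat\eta\,\|\nabla Z\|_F^2$ term via $\sigma_r(ZZ^\top)\|\nabla Q_ZQ_Z^\top\|_2^2\le\|\nabla Z\|_F^2$ and the definition of $\widehat\eta$. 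Without this spectral-norm and case-split mechanism, your route proves the inequality only with $\theta=\Omega(\tau(X^\star))$, which is too weak to be dominated by the $(\sqrt2-1)(1-\delta_{2r})\sigma_r(X^\star)$ descent term in the proof of Theorem~\ref{thm:00}.
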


\begin{proof}
First, denote $\Delta := Z - U^\star R_Z$.
Then:
\begin{small}
 \begin{align}
\Big \langle &\mathcal{A}^\dagger(\mathcal{A}(ZZ^\top) - y), (Z - U^\star R_Z)(Z - U^\star R_Z)^\top \Big \rangle \nonumber \\ 
&\stackrel{(i)}{=} \left \langle \mathcal{A}^\dagger(\mathcal{A}(ZZ^\top) - y) \cdot Q_{\Delta} Q_{\Delta}^\top, \Delta_Z \Delta_Z^\top \right \rangle \nonumber \\
&\geq -  \left|\text{Tr}\left(\mathcal{A}^\dagger(\mathcal{A}(ZZ^\top) - y) \cdot Q_{\Delta} Q_{\Delta}^\top \cdot \Delta_Z \Delta_Z^\top\right)\right| \nonumber \\ 
&\stackrel{(ii)}{\geq} - \| \mathcal{A}^\dagger(\mathcal{A}(ZZ^\top) - y) \cdot Q_{\Delta} Q_{\Delta}^\top \|_2 \cdot \text{Tr}( \Delta_Z \Delta_Z^\top)  \nonumber \\
&\stackrel{(iii)}{\geq} - \left( \| \mathcal{A}^\dagger(\mathcal{A}(ZZ^\top) - y) \cdot Q_{Z} Q_{Z}^\top \|_2+  \|\mathcal{A}^\dagger(\mathcal{A}(ZZ^\top) - y) \cdot Q_{U^\star} Q_{U^\star}^\top \|_2 \right) \|Z - U^\star R_Z\|_F^2 \label{proofsr1:eq_11}
\end{align}
\end{small}
Note that  $(i)$ follows from the fact $\Delta_Z =\Delta_Z Q_{\Delta} Q_{\Delta}^\top $, for a matrix $Q$ that spans the row space of $\Delta_Z$, and $(ii)$ follows from $|\text{Tr}(AB)| \leq \|A\|_2 \trace(B)$, for PSD matrix $B$ (Von Neumann's trace inequality~\cite{mirsky1975trace}). 
For the transformation in $(iii)$, we use that fact that the row space of $\Delta_Z$, $\text{\textsc{Span}}(\Delta_Z)$, is a subset of $\text{\textsc{Span}}(Z \cup U^\star)$, as $\Delta_Z$ is a linear combination of $U$ and $U^\star$. 

To bound the first term in equation~\eqref{proofsr1:eq_11}, we observe:
\begin{small}
\begin{align}
&\|\mathcal{A}^\dagger(\mathcal{A}(ZZ^\top) - y) \cdot Q_{Z} Q_{Z}^\top\|_2 \cdot \|Z - U^\star R_Z\|_F^2 \nonumber \\ 
&\quad \quad \stackrel{(i)}{=} \widehat{\eta} \cdot 4 \Big((1 + \delta_{2r})\| ZZ^\top \|_2 \nonumber \\ 
&\quad \quad \quad \quad + \|\mathcal{A}^\dagger(\mathcal{A}(ZZ^\top) - y) \cdot Q_{Z} Q_{Z}^\top\|_2 \Big) \cdot \|\mathcal{A}^\dagger(\mathcal{A}(ZZ^\top) - y)\cdot Q_{Z} Q_{Z}^\top\|_2 \cdot \|Z - U^\star R_Z\|_F^2  \nonumber \\
 &\quad \quad =  \underbrace{4 \widehat{\eta} (1 + \delta_{2r}) \|ZZ^\top\|_2 \| \mathcal{A}^\dagger(\mathcal{A}(ZZ^\top) - y) \cdot Q_{Z} Q_{Z}^\top\|_2 \cdot \|Z - U^\star R_Z\|_F^2}_{:=A} \nonumber \\ &\quad \quad \quad \quad \quad \quad \quad \quad + 4 \widehat{\eta} \|\mathcal{A}^\dagger(\mathcal{A}(ZZ^\top) - y) \cdot Q_{Z} Q_{Z}^\top\|_2^2 \cdot  \|Z - U^\star R_Z\|_F^2 \nonumber
\end{align} 
\end{small}
where $(i)$ is due to the definition of $\widehat{\eta}$.

To bound term $A$, we observe that $\|\mathcal{A}^\dagger(\mathcal{A}(ZZ^\top) - y) \cdot Q_{Z} Q_{Z}^\top\|_2 \leq \tfrac{(1 - \delta_{2r}) \sigma_r(ZZ^\top)}{10^3}$ or $\|\mathcal{A}^\dagger(\mathcal{A}(ZZ^\top) - y) \cdot Q_{Z} Q_{Z}^\top\|_2 \geq \tfrac{(1 - \delta_{2r}) \sigma_r(ZZ^\top)}{10^3}$.
This results into bounding $A$ as follows:
\begin{small}
\begin{align*}
&4 \widehat{\eta} (1 + \delta_{2r}) \| ZZ^\top \|_2 \|\mathcal{A}^\dagger(\mathcal{A}(ZZ^\top) - y) \cdot Q_{Z} Q_{Z}^\top\|_2 \cdot \|Z - U^\star R_Z\|_F^2 \\ 
&\quad \quad \leq \max \Big\{\tfrac{4\cdot \widehat{\eta} \cdot (1 + \delta_{2r}) \| ZZ^\top \|_2 \cdot (1 - \delta_{2r}) \sigma_r(ZZ^\top)}{10^3} \cdot \|Z - U^\star R_Z\|_F^2 , \\ 
&\quad \quad \quad \quad \quad \quad \quad \quad \widehat{\eta} \cdot 4 \cdot 10^3 \kappa \tau(ZZ^\top) \|\mathcal{A}^\dagger(\mathcal{A}(ZZ^\top) - y) \cdot Q_{Z} Q_{Z}^\top\|_2^2 \cdot  \|Z - U^\star R_Z\|_F^2 \Big\} \\
&\quad \quad \leq \tfrac{4\cdot \widehat{\eta} \cdot (1 - \delta_{2r}^2) \| ZZ^\top \|_2 \cdot \sigma_r(ZZ^\top)}{10^3} \cdot \|Z - U^\star R_Z\|_F^2 \\ 
&\quad \quad \quad \quad \quad \quad \quad \quad + \widehat{\eta} \cdot 4 \cdot 10^3 \kappa \tau(ZZ^\top) \|\mathcal{A}^\dagger(\mathcal{A}(ZZ^\top) - y) \cdot Q_{Z} Q_{Z}^\top\|_2^2 \cdot  \|Z - U^\star R_Z\|_F^2.
\end{align*}
\end{small}

Combining the above inequalities, we obtain:
\begin{small}
\begin{align}
\|\mathcal{A}^\dagger(\mathcal{A}(ZZ^\top) - y) &\cdot Q_{Z} Q_{Z}^\top\|_2 \cdot \|Z - U^\star R_Z\|_F^2 \nonumber \\ 
	&\stackrel{(i)}{\leq}  \tfrac{(1 - \delta_{2r}) \sigma_{r}(ZZ^\top)}{10^3} \cdot \|Z - U^\star R_Z\|_F^2 \nonumber \\ 
	&\quad \quad  \quad + (10^3 \kappa \tau(ZZ^\top)+1) \cdot  4 \cdot \widehat{\eta} \|\mathcal{A}^\dagger(\mathcal{A}(ZZ^\top) - y) \cdot Q_{Z} Q_{Z}^\top\|_2^2 \cdot \|Z - U^\star R_Z\|_F^2  \nonumber \\
	&\stackrel{(ii)}{\leq}  \tfrac{(1 - \delta_{2r}) \sigma_{r}(ZZ^\top)}{10^3} \cdot \|Z - U^\star R_Z\|_F^2 \nonumber \\
	&\quad \quad  \quad + (10^3 \beta^2 \kappa \tau(X^\star)+1) \cdot  4 \cdot \widehat{\eta} \|\mathcal{A}^\dagger(\mathcal{A}(ZZ^\top) - y) \cdot Q_{Z} Q_{Z}^\top\|_2^2 \cdot \tfrac{\left(\tfrac{3}{2}+2|\mu|\right)^2}{\kappa \tau(X^\star)} \tfrac{1}{10^6} \sigma_{r}(X^\star)  \nonumber \\
	&\stackrel{(iii)}{\leq}  \tfrac{(1 - \delta_{2r}) \sigma_{r}(ZZ^\top)}{10^3} \cdot \|Z - U^\star R_Z\|_F^2 \nonumber \\ 
	&\quad\quad \quad + 4 \cdot 1001 \beta^2 \cdot \widehat{\eta} \cdot \|\mathcal{A}^\dagger(\mathcal{A}(ZZ^\top) - y) \cdot Q_{Z} Q_{Z}^\top\|_2^2 \cdot \tfrac{\left(\tfrac{3}{2} + 2|\mu|\right)^2}{10^6\left(1  - \left(\tfrac{3}{2} + 2|\mu|\right)\tfrac{1}{10^3}\right)^2}\sigma_{r}(ZZ^\top)   \nonumber \\ 
	&\stackrel{(iv)}{\leq}  \tfrac{(1 - \delta_{2r}) \sigma_{r}(ZZ^\top)}{10^3} \cdot \|Z - U^\star R_Z\|_F^2 \nonumber \\ 
	&\quad\quad \quad + 4 \cdot 1001 \beta^2 \cdot \widehat{\eta} \cdot \tfrac{\left(\tfrac{3}{2} + 2|\mu|\right)^2}{10^6 \left(1  - \left(\tfrac{3}{2} + 2|\mu|\right)\tfrac{1}{10^3}\right)^2} \cdot \|\mathcal{A}^\dagger(\mathcal{A}(ZZ^\top) - y) \cdot Z\|_F^2   \nonumber \\ 
	&\stackrel{(v)}{\leq}  \tfrac{(1 - \delta_{2r}) \left(1 + (\tfrac{3}{2} + 2|\mu|)\tfrac{1}{10^3}\right)^2 \sigma_{r}(X^\star)}{10^3} \cdot \|Z - U^\star R_Z\|_F^2 \nonumber \\ 
	&\quad\quad \quad + \tfrac{1}{200} \beta^2 \cdot \widehat{\eta} \cdot \tfrac{\left(\tfrac{3}{2} + 2|\mu|\right)^2}{\left(1  - \left(\tfrac{3}{2} + 2|\mu|\right)\tfrac{1}{10^3}\right)^2} \cdot \|\mathcal{A}^\dagger(\mathcal{A}(ZZ^\top) - y) \cdot Z\|_F^2   \nonumber 
\end{align}
\end{small}
where $(i)$ follows from $\widehat{\eta} \leq \tfrac{1}{4 (1+\delta_{2r}) \| ZZ^\top \|_2}$, 
$(ii)$ is due to Corollary \ref{cor:1}, bounding $\|Z -  U^\star R_Z\|_F \leq \rho \sigma_{r}(X^\star)^{1/2}$, where $\rho := \left(\tfrac{3}{2} + 2|\mu|\right) \tfrac{1}{10^3 \sqrt{\kappa \tau(X^\star)}}$ by Lemma \ref{lem:supp_02},
$(iii)$ is due to $(10^3 \beta^2 \kappa \tau(X^\star)+1) \leq 1001 \beta^2 \kappa \tau(X^\star)$, and by Corollary \ref{cor:2}, 
$(iv)$ is due to the fact $\sigma_{r}(ZZ^\top)\|\mathcal{A}^\dagger(\mathcal{A}(ZZ^\top) - y) \cdot Q_{Z} Q_{Z}^\top\|_2^2 \leq  \| \mathcal{A}^\dagger(\mathcal{A}(ZZ^\top) - y) Z\|_F^2$,
and $(v)$ is due to Corollary \ref{cor:2}.

Next, we bound the second term in equation~\eqref{proofsr1:eq_11}:
\begin{align}
\|\mathcal{A}^\dagger(\mathcal{A}(ZZ^\top) - y) &\cdot Q_{U^\star} Q_{U^\star}^\top\|_2 \cdot \|Z - U^\star R_Z\|_F^2 \nonumber \\ 
&\stackrel{(i)}{\leq}  \|\mathcal{A}^\dagger(\mathcal{A}(ZZ^\top) - y) - \mathcal{A}^\dagger(\mathcal{A}(X^\star) - y) \|_2 \cdot \|Z - U^\star R_Z\|_F^2 \nonumber \\
&\stackrel{(ii)}{\leq} (1 + \delta_{2r}) \cdot \|ZZ^\top - U^\star U^{\star \top}\|_F \cdot \|Z - U^\star R_Z\|_F^2 \nonumber \\
&\stackrel{(iii)}{\leq} (1 + \delta_{2r})  (2+\rho) \cdot \rho \cdot \sigma_1(U^\star) \cdot \sigma_r(U^\star) \cdot \|Z - U^\star R_Z\|_F^2 \nonumber \\
&\stackrel{(iv)}{\leq}  (1 + \delta_{2r})  (2+\rho) \left(\tfrac{3}{2} + 2|\mu|\right) \cdot \tfrac{1}{10^3} \sigma_r(X^\star) \cdot \|Z - U^\star R_Z\|_F^2 \nonumber \\
&\leq (1 + \delta_{2r})  \left(2+\left(\tfrac{3}{2} + 2|\mu|\right) \cdot \tfrac{1}{10^3}\right) \left(\tfrac{3}{2} + 2|\mu|\right) \cdot \tfrac{1}{10^3} \sigma_r(X^\star) \cdot \|Z - U^\star R_Z\|_F^2, \nonumber
\end{align}
where $(i)$ follows from $\|\mathcal{A}^\dagger(\mathcal{A}(ZZ^\top) - y) \cdot Q_{U^\star} Q_{U^\star}^\top\|_2 \leq \|\mathcal{A}^\dagger(\mathcal{A}(ZZ^\top) - y)\|_2$ and $\mathcal{A}^\dagger(\mathcal{A}(X^\star) - y) =0$, 
$(ii)$ is due to smoothness of $f$ and the RIP constants, $(iii)$ follows from \cite[Lemma 18]{bhojanapalli2016dropping}, for $\rho = \left(\tfrac{3}{2} + 2|\mu|\right) \cdot \tfrac{1}{10^3 \sqrt{\kappa \tau(X^\star)}}$, 
$(iv)$ follows from substituting $\rho$ above, and observing that $\tau(X^\star) = \sigma_1(U^\star)^2 / \sigma_r(U^\star)^2 > 1$ and $\kappa = (1+\delta_{2r})/(1-\delta_{2r}) > 1$.

Combining the above we get:
\begin{small}
 \begin{align*}
\Big \langle &\mathcal{A}^\dagger(\mathcal{A}(ZZ^\top) - y), (Z - U^\star R_Z)(Z - U^\star R_Z)^\top \Big \rangle \nonumber \\ 
		  &\geq - \Bigg( \theta \sigma_r(X^\star) \cdot \|Z - U^\star R_Z\|_F^2 + \tfrac{1}{200} \beta^2 \cdot \widehat{\eta} \cdot \tfrac{\left(\tfrac{3}{2} + 2|\mu| \right)^2}{\left(1  - \left(\tfrac{3}{2} + 2|\mu|\right)\tfrac{1}{10^3}\right)^2} \cdot \|\mathcal{A}^\dagger(\mathcal{A}(ZZ^\top) - y) \cdot Z\|_F^2\Bigg)
\end{align*}
\end{small}
where $\theta = \tfrac{(1 - \delta_{2r}) \left(1 + \left(\tfrac{3}{2} + 2|\mu| \right)\tfrac{1}{10^3}\right)^2}{10^3} +  (1 + \delta_{2r})  \left(2+\left(\tfrac{3}{2} + 2|\mu|\right) \cdot \tfrac{1}{10^3}\right) \left(\tfrac{3}{2} + 2|\mu|\right) \cdot \tfrac{1}{10^3}$.
\end{proof}

\begin{lemma}{\label{lem:001}}
Under identical assumptions with Lemma \ref{lem:000}, the following inequality holds: 
\begin{small}
\begin{align*}
\Big \langle \mathcal{A}^\dagger( \mathcal{A}(ZZ^\top) - y), ZZ^\top - U^\star U^{\star \top}\Big \rangle \geq 1.1172 \eta \left \|\mathcal{A}^\dagger( \mathcal{A}(ZZ^\top) - y)  Z \right \|_F^2 + \tfrac{1 - \delta_{2r}}{2} \|U^\star U^{\star \top} - ZZ^\top\|_F^2
\end{align*}
\end{small}
\end{lemma}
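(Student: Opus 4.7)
The plan is to leverage the noiseless assumption $y = \mathcal{A}(X^\star)$ together with RIP in two essentially independent ways: once to recover the strong-convexity-like Frobenius-norm term, and once to control the ``gradient times factor'' term on the right. Denote $W := ZZ^\top - U^\star U^{\star\top}$, a matrix of rank at most $2r$. Then the left-hand side rewrites as $\langle \mathcal{A}(W), \mathcal{A}(W)\rangle = \|\mathcal{A}(W)\|_2^2$, which I would split in half: applying the lower RIP bound $\|\mathcal{A}(W)\|_2^2 \geq (1-\delta_{2r})\|W\|_F^2$ to one half immediately produces the required $\tfrac{1-\delta_{2r}}{2}\|W\|_F^2$ term, leaving the task of showing
$$\tfrac{1}{2}\|\mathcal{A}(W)\|_2^2 \;\geq\; 1.1172\,\eta\, \|\mathcal{A}^\dagger\mathcal{A}(W)\cdot Z\|_F^2. \qquad (\star)$$

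To establish $(\star)$, I would first reduce the factor $Z$ on the right to its column basis. Writing $Z = Q_Z R_Z$ in thin-QR form and using $ZZ^\top \preceq \|ZZ^\top\|_2 \, Q_ZQ_Z^\top$:
$$\|\mathcal{A}^\dagger\mathcal{A}(W)\cdot Z\|_F^2 \;=\; \text{Tr}\bigl(\mathcal{A}^\dagger\mathcal{A}(W)\, ZZ^\top\, \mathcal{A}^\dagger\mathcal{A}(W)\bigr) \;\leq\; \|ZZ^\top\|_2 \cdot \|\mathcal{A}^\dagger\mathcal{A}(W)\cdot Q_Z\|_F^2.$$
Next, I would control $\|\mathcal{A}^\dagger\mathcal{A}(W) Q_Z\|_F$ by duality: for any $M \in \mathbb{R}^{n\times r}$ with $\|M\|_F \leq 1$, the matrix $MQ_Z^\top$ has rank at most $r$ and the same Frobenius norm as $M$, so
$$\langle \mathcal{A}^\dagger\mathcal{A}(W)\,Q_Z,\, M\rangle \;=\; \langle \mathcal{A}(W),\, \mathcal{A}(MQ_Z^\top)\rangle \;\leq\; \sqrt{1+\delta_{2r}}\,\|\mathcal{A}(W)\|_2$$
by Cauchy--Schwarz and the upper RIP bound on the rank-$r$ matrix $MQ_Z^\top$. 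Taking the supremum over such $M$ yields
$$\|\mathcal{A}^\dagger\mathcal{A}(W)\cdot Z\|_F^2 \;\leq\; (1+\delta_{2r})\|ZZ^\top\|_2 \cdot \|\mathcal{A}(W)\|_2^2.$$

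The final step is bookkeeping of constants. From the definition \eqref{eq:step}, $\eta \leq \tfrac{1}{4(1+\delta_{2r})\|Z_0 Z_0^\top\|_2}$, and Corollary~\ref{cor:01} gives $\|ZZ^\top\|_2 \leq \tfrac{101}{99}\|Z_0 Z_0^\top\|_2$. Multiplying these yields $\eta(1+\delta_{2r})\|ZZ^\top\|_2 \leq \tfrac{101}{396}$, and hence $1.1172\,\eta\,(1+\delta_{2r})\|ZZ^\top\|_2 \leq \tfrac{1.1172\cdot 101}{396} < \tfrac{1}{2}$, which closes $(\star)$ and the lemma. The main obstacle is the intermediate bound $\|\mathcal{A}^\dagger\mathcal{A}(W)\, Z\|_F^2 \leq (1+\delta_{2r})\|ZZ^\top\|_2 \|\mathcal{A}(W)\|_2^2$: one must exploit that $MQ_Z^\top$ lies in a rank-$r$ subspace aligned with the column span of $Z$ to keep RIP applicable after projection, since a naive $\|A\|_2 \|Z\|_F$-type bound would inflate the right-hand side by a factor of $r$ and the constants would no longer clear the $\tfrac{1}{2}$ threshold. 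Once this is in hand, the remainder is arithmetic driven entirely by the step-size formula.
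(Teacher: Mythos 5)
Your proof is correct, but it takes a genuinely different route from the paper's. The paper never identifies the left-hand side with $\|\mathcal{A}(W)\|_2^2$; instead it adds a smoothness (upper-RIP) inequality evaluated at the pair $(ZZ^\top, U_{+}U_{+}^\top)$ to a restricted-strong-convexity inequality at $(ZZ^\top, U^\star U^{\star\top})$, then expands $U_{+}U_{+}^\top$ via the gradient step and controls the resulting terms through the spectrum of $\Psi = I - \tfrac{\eta}{2}Q_ZQ_Z^\top\mathcal{A}^\dagger(\mathcal{A}(ZZ^\top)-y)$; the constant $1.1172 = \tfrac{7}{4}\bigl(1 - \tfrac{2(9/8)^2}{7}\bigr)$ falls out of that computation. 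You instead exploit noiselessness ($y=\mathcal{A}(X^\star)$, which the paper also uses, via $\|\mathcal{A}(U^\star U^{\star\top})-y\|_2=0$) to write the left-hand side as $\|\mathcal{A}(W)\|_2^2$, spend half of it on the lower RIP bound for the rank-$\le 2r$ matrix $W$, and bound the other half by the duality estimate $\|\mathcal{A}^\dagger\mathcal{A}(W)\,Z\|_F^2 \le (1+\delta_{2r})\|ZZ^\top\|_2\,\|\mathcal{A}(W)\|_2^2$ together with $\eta(1+\delta_{2r})\|ZZ^\top\|_2 \le \tfrac{101}{396}$. Your arithmetic checks out ($1.1172\cdot\tfrac{101}{396}\approx 0.285 < \tfrac12$), and in fact your argument would support a coefficient of nearly $1.96\eta$ rather than $1.1172\eta$, so it is both shorter and quantitatively stronger; its only cost is that it is tied to the noiseless identity $\mathcal{A}(ZZ^\top)-y=\mathcal{A}(W)$, whereas the paper's smoothness/strong-convexity template is the one that would generalize to noisy $y$. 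One technical point to make explicit: $MQ_Z^\top$ is not symmetric, and since the $A_i$ are symmetric the operator $\mathcal{A}$ only sees the symmetric part of its argument, so the RIP as stated cannot hold for arbitrary non-symmetric matrices; you should bound $\|\mathcal{A}(MQ_Z^\top)\|_2 = \|\mathcal{A}(S)\|_2 \le \sqrt{1+\delta_{2r}}\,\|S\|_F \le \sqrt{1+\delta_{2r}}\,\|M\|_F$ where $S=\tfrac12(MQ_Z^\top + Q_ZM^\top)$ is the symmetric part, which has rank at most $2r$, so the constant is unaffected.
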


\begin{proof}
By smoothness assumption of the objective, based on the RIP assumption, we have:
\begin{small}
\begin{align*}
\tfrac{1}{2} \|\mathcal{A}(ZZ^\top) - y\|_2^2 &\geq \tfrac{1}{2} \|\mathcal{A}(U_{+} U_{+}^\top) - y\|_2^2 \\ 
								   &\quad \quad - \left \langle \mathcal{A}^\dagger( \mathcal{A}(ZZ^\top) - y), U_{+} U_{+}^\top - ZZ^\top \right \rangle - \tfrac{1+\delta_{2r}}{2} \|U_{+} U_{+}^\top - ZZ^\top\|_F^2 \Rightarrow \\
\tfrac{1}{2} \|\mathcal{A}(ZZ^\top) - y\|_2^2 &\geq \tfrac{1}{2} \|\mathcal{A}(U^\star U^{\star \top}) - y\|_2^2 \\ 
								   &\quad \quad - \left \langle \mathcal{A}^\dagger( \mathcal{A}(ZZ^\top) - y), U_{+} U_{+}^\top - ZZ^\top \right \rangle - \tfrac{1+\delta_{2r}}{2} \|U_{+} U_{+}^\top - ZZ^\top\|_F^2								   
\end{align*}
\end{small}
due to the optimality $\|\mathcal{A}(U^\star U^{\star \top}) - y\|_2^2 = 0 \leq \|\mathcal{A}(VV^\top) - y\|_2^2$, for any $V \in \mathbb{R}^{n \times r}$.
Also, by the restricted strong convexity with RIP, we get:
\begin{small}
\begin{align*}
\tfrac{1}{2} \|\mathcal{A}(U^\star U^{\star \top}) - y\|_2^2 &\geq \tfrac{1}{2} \|\mathcal{A}(ZZ^\top) - y\|_2^2 \\ 
								   &\quad \quad + \left \langle \mathcal{A}^\dagger( \mathcal{A}(ZZ^\top) - y), U^\star U^{\star \top} - ZZ^\top \right \rangle + \tfrac{1-\delta_{2r}}{2} \|U^\star U^{\star \top} - ZZ^\top\|_F^2					   
\end{align*}
\end{small}
Adding the two inequalities, we obtain:
\begin{small}
\begin{align*}
\left \langle \mathcal{A}^\dagger( \mathcal{A}(ZZ^\top) - y), ZZ^\top - U^\star U^{\star \top}\right \rangle &\geq \left \langle \mathcal{A}^\dagger( \mathcal{A}(ZZ^\top) - y), ZZ^\top - U_{+} U_{+}^\top\right \rangle \\
&\quad \quad - \tfrac{1+\delta_{2r}}{2} \|U_+ U_+^\top - ZZ^\top \|_F^2 + \tfrac{1 - \delta_{2r}}{2} \|U^\star U^{\star \top} - ZZ^\top\|_F^2
\end{align*}
\end{small}
To proceed we observe:
\begin{small}
\begin{align*}
U_+ U_+^\top &= \left(Z - \eta \mathcal{A}^\dagger \left(\mathcal{A}(ZZ^\top) - y \right)Z \right) \cdot \left(Z - \eta \mathcal{A}^\dagger \left(\mathcal{A}(ZZ^\top) - y \right)Z \right)^\top \\
		      &= ZZ^\top - \eta ZZ^\top \cdot \mathcal{A}^\dagger \left(\mathcal{A}(ZZ^\top) - y \right) - \eta \mathcal{A}^\dagger \left(\mathcal{A}(ZZ^\top) - y \right) \cdot ZZ^\top \\
		      & \quad \quad + \eta^2 \mathcal{A}^\dagger \left(\mathcal{A}(ZZ^\top) - y \right) \cdot ZZ^\top \cdot \mathcal{A}^\dagger \left(\mathcal{A}(ZZ^\top) - y \right) \\
		      &\stackrel{(i)}{=} ZZ^\top - \left(I - \tfrac{\eta}{2} Q_Z Q_Z^\top  \mathcal{A}^\dagger \left(\mathcal{A}(ZZ^\top) - y \right) \right) \cdot \eta ZZ^\top \cdot \mathcal{A}^\dagger \left(\mathcal{A}(ZZ^\top) - y \right) \\ &\quad \quad - \eta \mathcal{A}^\dagger \left(\mathcal{A}(ZZ^\top) - y \right) \cdot ZZ^\top  \cdot \left(I - \tfrac{\eta}{2} Q_Z Q_Z^\top \mathcal{A}^\dagger \left(\mathcal{A}(ZZ^\top) - y \right) \right) 
\end{align*}
\end{small}
where $(i)$ is due to the fact $\mathcal{A}^\dagger \left(\mathcal{A}(ZZ^\top) - y \right) \cdot ZZ^\top \cdot \mathcal{A}^\dagger \left(\mathcal{A}(ZZ^\top) - y \right) = \mathcal{A}^\dagger \left(\mathcal{A}(ZZ^\top) - y \right) \cdot Q_ZQ_Z^\top \cdot ZZ^\top \cdot Q_ZQ_Z^\top \cdot \mathcal{A}^\dagger \left(\mathcal{A}(ZZ^\top) - y \right)$, for $Q_Z$ a basis matrix whose columns span the column space of $Z$; also, $I$ is the identity matrix whose dimension is apparent from the context.
Thus:
\begin{small}
\begin{align*}
\tfrac{\eta}{2} Q_Z Q_Z^\top \mathcal{A}^\dagger \left(\mathcal{A}(ZZ^\top) - y \right) \preceq \tfrac{\widehat{\eta}}{2} Q_Z Q_Z^\top \mathcal{A}^\dagger \left(\mathcal{A}(ZZ^\top) - y \right),
\end{align*}
\end{small}
and, hence,
\begin{small}
\begin{align*}
I - \tfrac{\eta}{2} Q_Z Q_Z^\top \mathcal{A}^\dagger \left(\mathcal{A}(ZZ^\top) - y \right) \succeq I - \tfrac{\widehat{\eta}}{2} Q_Z Q_Z^\top \mathcal{A}^\dagger \left(\mathcal{A}(ZZ^\top) - y \right).
\end{align*}
\end{small}
Define $\Psi = I - \tfrac{\eta}{2} Q_Z Q_Z^\top \mathcal{A}^\dagger \left(\mathcal{A}(ZZ^\top) - y \right).$
Then, using the definition of $\widehat{\eta}$, we know that $\widehat{\eta} \leq \tfrac{1}{4\|Q_Z Q_Z^\top \mathcal{A}^\dagger \left(\mathcal{A}(ZZ^\top) - y \right)\|_2}$, and thus:
\begin{small}
\begin{align*}
\Psi \succ 0, \quad \sigma_1(\Psi) \leq 1 + \tfrac{1}{4}, \quad \text{and} \quad \sigma_n(\Psi) \geq 1 - \tfrac{1}{4}.
\end{align*}
\end{small}
Going back to the main recursion and using the above expression for $U_+ U_+^\top$, we have:
\begin{small}
\begin{align*}
\Big \langle \mathcal{A}^\dagger( \mathcal{A}(ZZ^\top) - y), &ZZ^\top - U^\star U^{\star \top}\Big \rangle - \tfrac{1 - \delta_{2r}}{2} \|U^\star U^{\star \top} - ZZ^\top\|_F^2 \\
			&\geq \left \langle \mathcal{A}^\dagger( \mathcal{A}(ZZ^\top) - y), ZZ^\top - U_{+} U_{+}^\top \right \rangle - \tfrac{1+\delta_{2r}}{2} \|U_{+} U_{+}^\top - ZZ^\top\|_F^2 \\ 
			&\stackrel{(i)}{\geq} 2 \eta \left \langle \mathcal{A}^\dagger( \mathcal{A}(ZZ^\top) - y),  \mathcal{A}^\dagger( \mathcal{A}(ZZ^\top) - y) \cdot ZZ^\top \cdot \Psi \right \rangle \\
			 &\quad \quad - \tfrac{1+\delta_{2r}}{2} \|2\eta  \mathcal{A}^\dagger( \mathcal{A}(ZZ^\top) - y) \cdot ZZ^\top \cdot \Psi\|_F^2 \\			
			 &\stackrel{(ii)}{\geq} \tfrac{7}{4} \eta \left \|\mathcal{A}^\dagger( \mathcal{A}(ZZ^\top) - y)  Z \right \|_F^2 \\ 
			 &\quad \quad - 2(1 + \delta_{2r}) \eta^2 \left \|\mathcal{A}^\dagger( \mathcal{A}(ZZ^\top) - y)  Z \right \|_F^2 \cdot \|Z\|_2^2 \cdot \|\Psi\|_2^2 \\
			 &\stackrel{(iii)}{\geq} \tfrac{7}{4} \eta \left \|\mathcal{A}^\dagger( \mathcal{A}(ZZ^\top) - y)  Z \right \|_F^2 \\ 
			 &\quad \quad - 2(1 + \delta_{2r}) \eta^2 \left \|\mathcal{A}^\dagger( \mathcal{A}(ZZ^\top) - y)  Z \right \|_F^2 \cdot \|Z\|_2^2 \cdot \left(\tfrac{9}{8}\right)^2 \\
			 &= \tfrac{7}{4} \eta \left \|\mathcal{A}^\dagger( \mathcal{A}(ZZ^\top) - y)  Z \right \|_F^2 \cdot \left(1 - 2(1 + \delta_{2r}) \eta  \cdot \|Z\|_2^2 \cdot \left(\tfrac{9}{8}\right)^2 \cdot \tfrac{4}{7}\right) \\
			 &\stackrel{(iv)}{\geq} \tfrac{7}{4} \eta \left \|\mathcal{A}^\dagger( \mathcal{A}(ZZ^\top) - y)  Z \right \|_F^2 \cdot \left(1 - 2(1 + \delta_{2r}) \widehat{\eta}  \cdot \|Z\|_2^2 \cdot \left(\tfrac{9}{8}\right)^2 \cdot \tfrac{4}{7}\right) \\
			 &\stackrel{(v)}{\geq} \tfrac{7}{4} \eta \left \|\mathcal{A}^\dagger( \mathcal{A}(ZZ^\top) - y)  Z \right \|_F^2 \cdot \left(1 - \tfrac{2 \cdot \left(\tfrac{9}{8}\right)^2}{7}\right) \\
			 &= 1.1172 \eta \left \|\mathcal{A}^\dagger( \mathcal{A}(ZZ^\top) - y)  Z \right \|_F^2
\end{align*}
\end{small}
where $(i)$ is due to the symmetry of the objective;
$(ii)$ is due to Cauchy-Schwartz inequality and the fact:
\begin{small}
\begin{align*}
\Big \langle  \mathcal{A}^\dagger( \mathcal{A}(ZZ^\top) - y), & ~\mathcal{A}^\dagger( \mathcal{A}(ZZ^\top) - y) \cdot ZZ^\top \cdot \Psi \Big \rangle \\ 
											   &= \Big \langle  \mathcal{A}^\dagger( \mathcal{A}(ZZ^\top) - y),  \mathcal{A}^\dagger( \mathcal{A}(ZZ^\top) - y) \cdot ZZ^\top \Big \rangle  \\ &\quad \quad - \tfrac{\eta}{2} \Big \langle  \mathcal{A}^\dagger( \mathcal{A}(ZZ^\top) - y),  \mathcal{A}^\dagger( \mathcal{A}(ZZ^\top) - y) \cdot ZZ^\top \cdot \mathcal{A}^\dagger( \mathcal{A}(ZZ^\top) - y) \Big \rangle \\
											   &\stackrel{(i)}{\geq} \Big \langle  \mathcal{A}^\dagger( \mathcal{A}(ZZ^\top) - y),  \mathcal{A}^\dagger( \mathcal{A}(ZZ^\top) - y) \cdot ZZ^\top \Big \rangle  \\ &\quad \quad - \tfrac{\widehat{\eta}}{2} \Big \langle  \mathcal{A}^\dagger( \mathcal{A}(ZZ^\top) - y),  \mathcal{A}^\dagger( \mathcal{A}(ZZ^\top) - y) \cdot ZZ^\top \cdot \mathcal{A}^\dagger( \mathcal{A}(ZZ^\top) - y) \Big \rangle \\
											   &\geq \left( 1 - \tfrac{\widehat{\eta}}{2} \|Q_Z Q_Z^\top \mathcal{A}^\dagger( \mathcal{A}(ZZ^\top) - y) \|_2^2 \right) \cdot \left \|\mathcal{A}^\dagger( \mathcal{A}(ZZ^\top) - y)  Z \right \|_F^2 \\
											   &\geq \left( 1 - \tfrac{1}{4}\right) \left \|\mathcal{A}^\dagger( \mathcal{A}(ZZ^\top) - y)  Z \right \|_F^2
\end{align*}
\end{small}
where $(i)$ is due to $\eta \leq \widehat{\eta}$, and the last inequality comes from the definition of the $\widehat{\eta}$ and its upper bound;
$(iii)$ is due to the upper bound on $\|\Psi\|_2$ above;
$(iv)$ is due to $\eta \leq \widehat{\eta}$;
$(v)$ is due to $\widehat{\eta} \leq \tfrac{1}{4 (1 + \delta_{2r}) \|ZZ^\top\|_2}$.

The above lead to the desiderata:
\begin{small}
\begin{align*}
\Big \langle \mathcal{A}^\dagger( \mathcal{A}(ZZ^\top) - y), ZZ^\top - U^\star U^{\star \top}\Big \rangle \geq 1.1172 \eta \left \|\mathcal{A}^\dagger( \mathcal{A}(ZZ^\top) - y)  Z \right \|_F^2 + \tfrac{1 - \delta_{2r}}{2} \|U^\star U^{\star \top} - ZZ^\top\|_F^2
\end{align*}
\end{small}
\end{proof}

\section{Detailed proof of Theorem \ref{thm:00}} 
We first denote $U_{+} \equiv U_{i+1}$, $U \equiv U_i$, $U_- \equiv U_{i - 1}$ and $Z \equiv Z_i$.
Let us start with the following equality. 
For $R_Z \in \mathcal{O}$ as the minimizer of $\min_{R \in \mathcal{O}} \|Z - U^\star R\|_F$,
we have:
\begin{align*}
\|U_{+} - U^\star R_Z\|_F^2 &= \|U_{+} - Z + Z - U^\star R_Z\|_F^2 \\
					        &= \|U_{+} - Z\|_F^2 + \|Z - U^\star R_Z\|_F^2  - 2 \langle U_{+} - Z, U^\star R_Z - Z \rangle
\end{align*}
The proof focuses on how to bound the last part on the right-hand side.
By definition of $U_{+}$, we get:
\begin{align*}
\langle U_{+} - Z, U^\star R_Z - Z \rangle &= \left \langle Z - \eta \mathcal{A}^\dagger\left(\mathcal{A}(ZZ^\top) - y \right) Z - Z, U^\star R_Z - Z \right \rangle \\
						          &= \eta \left \langle \mathcal{A}^\dagger\left(\mathcal{A}(ZZ^\top) - y \right) Z, Z - U^\star R_Z \right \rangle
\end{align*}
Observe the following:
\begin{small}
\begin{align*}
\left \langle \mathcal{A}^\dagger\left(\mathcal{A}(ZZ^\top) - y \right) Z, Z - U^\star R_Z \right \rangle &= \left \langle \mathcal{A}^\dagger\left(\mathcal{A}(ZZ^\top) - y \right), ZZ^\top  - U^\star R_Z Z^\top \right \rangle \\
																		&= \left \langle \mathcal{A}^\dagger\left(\mathcal{A}(ZZ^\top) - y \right), ZZ^\top - \tfrac{1}{2} U^\star U^{\star \top} + \tfrac{1}{2} U^\star U^{\star \top} - U^\star R_Z Z^\top \right \rangle \\
																		&= \tfrac{1}{2} \left \langle \mathcal{A}^\dagger\left(\mathcal{A}(ZZ^\top) - y \right), ZZ^\top - U^\star U^{\star \top} \right \rangle \\
																		&\quad \quad + \left \langle \mathcal{A}^\dagger\left(\mathcal{A}(ZZ^\top) - y \right), \tfrac{1}{2} (ZZ^\top + U^\star U^{\star \top}) - U^\star R_Z Z^\top \right \rangle \\
																																				&= \tfrac{1}{2} \left \langle \mathcal{A}^\dagger\left(\mathcal{A}(ZZ^\top) - y \right), ZZ^\top - U^\star U^{\star \top} \right \rangle \\
																		&\quad \quad + \tfrac{1}{2} \left \langle \mathcal{A}^\dagger\left(\mathcal{A}(ZZ^\top) - y \right), (Z - U^\star R_Z)(Z - U^\star R_Z)^\top \right \rangle
\end{align*}
\end{small}
By Lemmata \ref{lem:000} and \ref{lem:001}, we have:
\begin{small}
\begin{align*}
\|U_+ - U^\star R_Z\|_F^2 &= \|U_{+} - Z\|_F^2 + \|Z - U^\star R_Z\|_F^2  - 2 \langle U_{+} - Z, U^\star R_Z - Z \rangle \\
					       &= \eta^2 \|\mathcal{A}^\dagger\left(\mathcal{A}(ZZ^\top) - y \right) Z \|_F^2 + \|Z - U^\star R_Z\|_F^2  \\ 
					       &\quad \quad -\eta \left \langle \mathcal{A}^\dagger\left(\mathcal{A}(ZZ^\top) - y \right), ZZ^\top - U^\star U^{\star \top} \right \rangle \\
					       &\quad \quad \quad \quad - \eta \left \langle \mathcal{A}^\dagger\left(\mathcal{A}(ZZ^\top) - y \right), (Z - U^\star R_Z)(Z - U^\star R_Z)^\top \right \rangle \\
					       &\leq \eta^2 \|\mathcal{A}^\dagger\left(\mathcal{A}(ZZ^\top) - y \right) Z \|_F^2 + \|Z - U^\star R_Z\|_F^2  \\ 
					       &\quad \quad - 1.1172 \eta^2 \left \|\mathcal{A}^\dagger( \mathcal{A}(ZZ^\top) - y)  Z \right \|_F^2 - \eta \tfrac{1 - \delta_{2r}}{2} \|U^\star U^{\star \top} - ZZ^\top\|_F^2 \\
					       &\quad \quad \quad \quad + \eta \Bigg( \theta \sigma_r(X^\star) \cdot \|Z - U^\star R_Z\|_F^2 \\ 
					       &\quad \quad \quad \quad \quad \quad + \tfrac{1}{200} \beta^2 \cdot \widehat{\eta} \cdot \tfrac{\left(\tfrac{3}{2} + 2|\mu|\right)^2}{\left(1  - \left(\tfrac{3}{2} + 2|\mu|\right)\tfrac{1}{10^3}\right)^2} \cdot \|\mathcal{A}^\dagger(\mathcal{A}(ZZ^\top) - y) \cdot Z\|_F^2\Bigg)
\end{align*}
\end{small}

Next, we use the following lemma:
\begin{lemma}\cite[Lemma 5.4]{tu2016low}{\label{lem:tu}}
For any $W, V \in \mathbb{R}^{n \times r}$, the following holds:
\begin{small}
\begin{align*}
\|WW^\top - VV^\top\|_F^2 \geq 2 (\sqrt{2} - 1) \cdot \sigma_r(VV^\top) \cdot \min_{R \in \mathcal{O}} \|W - VR\|_F^2.
\end{align*}
\end{small}
\end{lemma}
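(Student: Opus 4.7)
My plan is to reduce the inequality to a clean algebraic identity via the first-order optimality condition of the Procrustes problem, then close by case analysis. Let $R^{*}\in\mathcal{O}$ attain $\min_{R\in\mathcal{O}}\|W-VR\|_F$ and set $E = VR^{*}$ and $\Delta = W-VR^{*}$, so the target bound reads $2(\sqrt{2}-1)\sigma_r(V)^2\|\Delta\|_F^2$ (using $\sigma_r(VV^{\top}) = \sigma_r(V)^2$ since $R^{*}$ is orthogonal). The standard Procrustes optimality condition makes $R^{*\top}V^{\top}W$ symmetric positive semidefinite, and subtracting the automatically symmetric $R^{*\top}V^{\top}VR^{*}$ leaves $E^{\top}\Delta$ symmetric; this symmetry is the engine of everything below.

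Expanding $WW^{\top} - VV^{\top} = E\Delta^{\top} + \Delta E^{\top} + \Delta\Delta^{\top}$ and computing the squared Frobenius norm, the cross-term $\langle E\Delta^{\top},\Delta E^{\top}\rangle = \text{tr}((E^{\top}\Delta)^{2})$ is nonnegative by symmetry of $E^{\top}\Delta$, and the mixed terms consolidate using the identity $W^{\top}\Delta = E^{\top}\Delta + \Delta^{\top}\Delta$ together with $\|\Delta^{\top}\Delta\|_F = \|\Delta\Delta^{\top}\|_F$. The result is the clean identity
$$\|WW^{\top}-VV^{\top}\|_F^2 = 2\|E\Delta^{\top}\|_F^2 + 2\|W^{\top}\Delta\|_F^2 - \|\Delta\Delta^{\top}\|_F^2.$$
Since $E^{\top}E = V^{\top}V$ has smallest eigenvalue $\sigma_r(V)^2$ and $\Delta^{\top}\Delta$ is positive semidefinite, $\|E\Delta^{\top}\|_F^2 = \text{tr}((V^{\top}V)(\Delta^{\top}\Delta)) \geq \sigma_r(V)^2\|\Delta\|_F^2$, delivering most of the target for free. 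The only obstacle is the negative term $-\|\Delta\Delta^{\top}\|_F^2$.

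To handle it I would split on the size of $\|\Delta\|_F$ relative to $\sigma_r(V)$. In the small-perturbation regime $\|\Delta\|_F^2 \leq (4-2\sqrt{2})\sigma_r(V)^2$, discarding the nonnegative $2\|W^{\top}\Delta\|_F^2$ and using $\|\Delta\Delta^{\top}\|_F^2 \leq \|\Delta\|_F^4$ (since $\Delta\Delta^{\top}$ is PSD with trace $\|\Delta\|_F^2$) gives
$$\|WW^{\top}-VV^{\top}\|_F^2 \geq 2\sigma_r(V)^2\|\Delta\|_F^2 - \|\Delta\|_F^4 \geq 2(\sqrt{2}-1)\sigma_r(V)^2\|\Delta\|_F^2,$$
with the threshold $4-2\sqrt{2}$ calibrated exactly to produce the target constant.

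The main obstacle is the complementary large-perturbation regime $\|\Delta\|_F^2 > (4-2\sqrt{2})\sigma_r(V)^2$, where $\|\Delta\Delta^{\top}\|_F^2 \leq \|\Delta\|_F^4$ is too loose and one cannot drop $2\|W^{\top}\Delta\|_F^2$. My plan here is to switch to the dual symmetrization $WW^{\top}-VV^{\top} = \tfrac12[\Delta(W+E)^{\top} + (W+E)\Delta^{\top}]$; the same Procrustes condition makes $(W+E)^{\top}\Delta = W^{\top}W - V^{\top}V$ symmetric, producing the Pythagorean-type identity
$$\|WW^{\top}-VV^{\top}\|_F^2 = \tfrac12\|\Delta(W+E)^{\top}\|_F^2 + \tfrac12\|W^{\top}W - V^{\top}V\|_F^2$$
with both summands nonnegative. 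The first summand is at least $\tfrac12\sigma_r(W+E)^2\|\Delta\|_F^2$, and two complementary lower bounds on $\sigma_r(W+E)$ are available: Weyl's estimate $\sigma_r(W+E) \geq 2\sigma_r(V) - \|\Delta\|_2$ is sharp when $\|\Delta\|_2$ is small, while the Procrustes PSD condition $E^{\top}\Delta + V^{\top}V \succeq 0$ implies $(W+E)^{\top}(W+E) \succeq \Delta^{\top}\Delta$ and hence $\sigma_r(W+E) \geq \sigma_r(\Delta)$. Interpolating between these two bounds, with the second summand $\tfrac12\|W^{\top}W - V^{\top}V\|_F^2$ absorbing any residual deficit when $\Delta\Delta^{\top}$ is nearly rank one, is the delicate step that pins down the sharp constant $2(\sqrt{2}-1)$.
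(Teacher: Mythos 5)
The paper never proves this statement itself---it imports it verbatim as Lemma 5.4 of \cite{tu2016low}---so your attempt has to stand on its own, and as written it does not: it is complete only in the small-perturbation regime. The first half is correct and nicely done: with $E=VR^{*}$, $\Delta=W-VR^{*}$ and the Procrustes condition making $E^\top\Delta$ symmetric, your identity $\|WW^\top-VV^\top\|_F^2=2\|E\Delta^\top\|_F^2+2\|W^\top\Delta\|_F^2-\|\Delta\Delta^\top\|_F^2$ checks out, and together with $\|E\Delta^\top\|_F^2\ge\sigma_r(V)^2\|\Delta\|_F^2$ and $\|\Delta\Delta^\top\|_F\le\|\Delta\|_F^2$ it settles the case $\|\Delta\|_F^2\le(4-2\sqrt{2})\,\sigma_r(V)^2$ with exactly the right constant.

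The genuine gap is the complementary regime, which is precisely where the content of the lemma (and the constant $2(\sqrt{2}-1)$) lives. Your two lower bounds on $\sigma_r(W+E)$ can be simultaneously vacuous inside that regime: the Weyl bound $2\sigma_r(V)-\|\Delta\|_2$ is nonpositive once $\|\Delta\|_2\ge 2\sigma_r(V)$, and $\sigma_r(\Delta)=0$ whenever $\Delta$ is rank deficient---e.g.\ a rank-one $\Delta$ with $\|\Delta\|_F^2>(4-2\sqrt{2})\sigma_r(V)^2$ defeats both at once, so the first summand of your Pythagorean identity yields nothing through $\sigma_r(W+E)$. The claim that the second summand ``absorbs any residual deficit'' is a hope, not an argument: you give no inequality lower-bounding $\tfrac12\|W^\top W-E^\top E\|_F^2$ by a multiple of $\sigma_r(V)^2\|\Delta\|_F^2$ in that situation, and the cheap routes fail---for instance the correct PSD consequence $(W+E)^\top(W+E)\succeq\Delta^\top\Delta$ gives $\tfrac12\|\Delta(W+E)^\top\|_F^2\ge\tfrac12\|\Delta^\top\Delta\|_F^2\ge\tfrac{1}{2r}\|\Delta\|_F^4$, which in the large regime produces a constant of order $(2-\sqrt{2})/r$, already short of $2(\sqrt{2}-1)$ for $r=1$ and dimension-dependent in general. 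So the delicate interpolation you defer is exactly the missing proof, and the attempt cannot be accepted as a proof of the lemma without it. Two minor slips that do not affect the structure: $(W+E)^\top\Delta=W^\top W-R^{*\top}V^\top VR^{*}$, not $W^\top W-V^\top V$ (only its symmetry and nonnegativity of its norm are used, so this is cosmetic), and the Procrustes PSD condition reads $E^\top\Delta+E^\top E\succeq 0$ rather than $E^\top\Delta+V^\top V\succeq 0$.
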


From Lemma \ref{lem:tu}, the quantity  $\|U^\star U^{\star \top} - ZZ^\top\|_F^2$ satisfies:
\begin{small}
\begin{align*}
\|U^\star U^{\star \top} - ZZ^\top\|_F^2 \geq 2 (\sqrt{2} - 1) \cdot \sigma_r(X^\star) \cdot \min_{R \in \mathcal{O}} \|Z - U^\star R\|_F^2 = 2 (\sqrt{2} - 1) \cdot \sigma_r(X^\star) \cdot \|Z - U^\star R_Z\|_F^2 ,
\end{align*}
\end{small}
which, in our main recursion, results in:
\begin{small}
\begin{align*}
\|U_+ - U^\star R_Z\|_F^2 &\leq \eta^2 \|\mathcal{A}^\dagger\left(\mathcal{A}(ZZ^\top) - y \right) Z \|_F^2 + \|Z - U^\star R_Z\|_F^2  \\ 
					       &\quad \quad - 1.1172 \eta^2 \left \|\mathcal{A}^\dagger( \mathcal{A}(ZZ^\top) - y)  Z \right \|_F^2 - \eta (\sqrt{2}-1)(1 - \delta_{2r}) \sigma_r(X^\star) \|Z - U^\star R_Z\|_F^2 \\
					       &\quad \quad \quad \quad + \eta \Bigg( \theta \sigma_r(X^\star) \cdot \|Z - U^\star R_Z\|_F^2 \\ 
					       &\quad \quad \quad \quad \quad \quad + \tfrac{1}{200} \beta^2 \cdot \widehat{\eta} \cdot \tfrac{\left(\tfrac{3}{2} + 2|\mu|\right)^2}{\left(1  - \left(\tfrac{3}{2} + 2|\mu|\right)\tfrac{1}{10^3}\right)^2} \cdot \|\mathcal{A}^\dagger(\mathcal{A}(ZZ^\top) - y) \cdot Z\|_F^2\Bigg) \\
					       &\stackrel{(i)}{\leq} \eta^2 \|\mathcal{A}^\dagger\left(\mathcal{A}(ZZ^\top) - y \right) Z \|_F^2 + \|Z - U^\star R_Z\|_F^2  \\ 
					       &\quad \quad - 1.1172 \eta^2 \left \|\mathcal{A}^\dagger( \mathcal{A}(ZZ^\top) - y)  Z \right \|_F^2 - \eta (\sqrt{2}-1)(1 - \delta_{2r}) \sigma_r(X^\star) \|Z - U^\star R_Z\|_F^2 \\
					       &\quad \quad \quad \quad + \eta \Bigg( \theta \sigma_r(X^\star) \cdot \|Z - U^\star R_Z\|_F^2 \\ 
					       &\quad \quad \quad \quad \quad \quad + \tfrac{1}{200} \beta^2 \cdot \tfrac{10}{9} \eta \cdot \tfrac{\left(\tfrac{3}{2} + 2|\mu|\right)^2}{\left(1  - \left(\tfrac{3}{2} + 2|\mu|\right)\tfrac{1}{10^3}\right)^2} \cdot \|\mathcal{A}^\dagger(\mathcal{A}(ZZ^\top) - y) \cdot Z\|_F^2\Bigg) \\					       
					       &\stackrel{(ii)}{=} \left(1 + \tfrac{1}{200} \beta^2 \cdot \tfrac{10}{9} \cdot \tfrac{\left(\tfrac{3}{2}+ 2|\mu|\right)^2}{\left(1  - \left(\tfrac{3}{2} + 2|\mu|\right)\tfrac{1}{10^3}\right)^2} - 1.1172\right) \eta^2 \|\mathcal{A}^\dagger(\mathcal{A}(ZZ^\top) - y) \cdot Z\|_F^2  \\ 
					       &\quad \quad \quad \quad +\left(1 + \theta \sigma_r(X^\star) - \eta (\sqrt{2}-1)(1 - \delta_{2r}) \sigma_r(X^\star) \right) \|Z - U^\star R_Z\|_F^2 
\end{align*}
\end{small}
where $(i)$ is due to Lemma \ref{lem:equiveta}, 
and $(ii)$ is due to the definition of $U_{+}$.

Under the facts that \begin{small}$\mu = \frac{\sigma_r(X^\star)^{1/2}}{10^3 \sqrt{\kappa\tau(X^\star)}} \cdot \frac{\varepsilon}{2 \cdot \sigma_1(X^\star)^{1/2} \cdot r}$\end{small}, for $\varepsilon \in (0, 1)$ user-defined, and $\delta_{2r} \leq \tfrac{1}{10}$, the main constant quantities in our proof so far simplify into:
\begin{align*}
\beta = \frac{1 + \left(\tfrac{3}{2} + 2|\mu| \right) \cdot \tfrac{1}{10^3}}{1 - \left(\tfrac{3}{2} + 2|\mu| \right) \cdot \tfrac{1}{10^3}} = 1.003, \quad \text{and} \quad \beta^2 = 1.006,
\end{align*}
by Corollary \ref{cor:1}. 
Thus:
\begin{align*}
1 + \tfrac{1}{200} \beta^2 \cdot \tfrac{10}{9} \cdot \tfrac{\left(\tfrac{3}{2}+ 2|\mu|\right)^2}{\left(1  - \left(\tfrac{3}{2} + 2|\mu|\right)\tfrac{1}{10^3}\right)^2} - 1.1172 \leq -0.1046,
\end{align*}
and our recursion becomes:
\begin{small}
\begin{align*}
\|U_+ - U^\star R_Z\|_F^2 &\leq -0.1046 \cdot \eta^2 \cdot \|\mathcal{A}^\dagger(\mathcal{A}(ZZ^\top) - y) \cdot Z\|_F^2  \\ 
					       &\quad \quad \quad \quad +\left(1 + \theta \sigma_r(X^\star) - \eta (\sqrt{2}-1)(1 - \delta_{2r}) \sigma_r(X^\star) \right) \|Z - U^\star R_Z\|_F^2 
\end{align*}
\end{small}
Finally, 
\begin{small}
\begin{align*}
\theta &= \tfrac{(1 - \delta_{2r}) \left(1 + \left(\tfrac{3}{2} + 2|\mu| \right)\tfrac{1}{10^3}\right)^2}{10^3} +  (1 + \delta_{2r})  \left(2+\left(\tfrac{3}{2} + 2|\mu|\right) \cdot \tfrac{1}{10^3}\right) \left(\tfrac{3}{2} + 2|\mu|\right) \cdot \tfrac{1}{10^3} \\ 
	  &\stackrel{(i)}{=} (1 - \delta_{2r}) \cdot \left( \tfrac{\left(1 + (\tfrac{3}{2} + 2|\mu|)\tfrac{1}{10^3}\right)^2}{10^3} + \kappa \left(2+\left(\tfrac{3}{2} + 2|\mu|\right) \cdot \tfrac{1}{10^3}\right) \left(\tfrac{3}{2} + 2|\mu|\right) \cdot \tfrac{1}{10^3} \right)\\
	  &\leq  0.0047 \cdot (1 - \delta_{2r}).
\end{align*}
\end{small}
where $(i)$ is by the definition of $\kappa := \tfrac{1 + \delta_{2r}}{1 - \delta_{2r}} \leq 1.223$ for $\delta_{2r} \leq \tfrac{1}{10}$, by assumption.
Combining the above in our main inequality, we obtain:
\begin{small}
\begin{align}
\|U_+ - U^\star R_Z\|_F^2 &\leq -0.1046 \cdot \eta^2 \cdot \|\mathcal{A}^\dagger(\mathcal{A}(ZZ^\top) - y) \cdot Z\|_F^2  \nonumber \\ 
					       &\quad \quad \quad \quad +\left(1 + \eta \sigma_r(X^\star) (1 - \delta_{2r}) \cdot (0.0047 - \sqrt{2} + 1)\right) \|Z - U^\star R_Z\|_F^2 \nonumber \\
					       &\leq \left(1 - \tfrac{4\eta \sigma_r(X^\star)(1 - \delta_{2r})}{10} \right)\|Z - U^\star R_Z\|_F^2 \label{eq:0000}
\end{align}
\end{small}
By Lemma \ref{lem:equiveta}, we know that $\eta \geq \tfrac{100}{102} \eta^\star$.
Also, $\eta^\star = \tfrac{1}{4 (1 + \delta_{2r}) \|X^\star\|_2}$, since $\|\mathcal{A}^\dagger(\mathcal{A}(X^\star) - y)\|_2 = 0$, in the noiseless setting.
Returning to \eqref{eq:0000}, we have:
\begin{small}
\begin{align*}
\|U_+ - U^\star R_Z\|_F^2 &\leq \left(1 - 0.393 \cdot \tfrac{(1 - \delta_{2r})\sigma_r(X^\star)}{(1 + \delta_{2r})\sigma_1(X^\star)} \right)\|Z - U^\star R_Z\|_F^2 \\
					       &= \left(1 - \tfrac{0.393}{\kappa \tau(X^\star)} \right)\|Z - U^\star R_Z\|_F^2
\end{align*}
\end{small}
Taking square root on both sides, we obtain:
\begin{small}
\begin{align*}
\|U_+ - U^\star R_Z\|_F &\leq \sqrt{1 - \tfrac{0.393}{\kappa \tau(X^\star)} } \|Z - U^\star R_Z\|_F
\end{align*}
\end{small}
Let us define $\xi = \sqrt{1 - \tfrac{0.393}{\kappa \tau(X^\star)} } < 1$. 
Using the definitions $Z = U + \mu (U - U_-)$ and $R_{Z} \in \arg\min\limits_{R \in \mathcal{O}} \|Z - U^{*}R\|_{F}$, we get
\begin{small}
\begin{align}
\|U_+ - U^\star R_Z\|_F &\leq \xi \cdot \min_{R \in \mathcal{O}} \|Z - U^\star R\|_F = \xi \cdot \min_{R \in \mathcal{O}} \|U + \mu (U - U_-) - U^\star R\|_F \nonumber \\
				&= \xi \cdot \min_{R \in \mathcal{O}} \|U + \mu \left(U - U_-\right)  - (1 - \mu + \mu) U^\star R\|_F \nonumber \\
				&\stackrel{(i)}{\leq} \xi \cdot |1 + \mu| \cdot \min_{R \in \mathcal{O}} \|U - U^\star R \|_F + \xi \cdot |\mu| \cdot \min_{R \in \mathcal{O}}\|U_- - U^\star R\|_F + \xi \cdot |\mu| \cdot r \sigma_1(X^\star)^{1/2} \nonumber 
\end{align}
\end{small}
where $(i)$ follows from steps similar to those in Lemma \ref{lem:supp_02}. 
Further observe that $\min_{R \in \mathcal{O}}\|U_+ - U^\star R\|_F \leq \|U_+ - U^\star R_Z\|_F$, thus leading to:
\begin{small}
\begin{align}
\min_{R \in \mathcal{O}}\|U_+ - U^\star R\|_F \leq \xi \cdot|1 + \mu| \cdot \min_{R \in \mathcal{O}} \|U - U^\star R \|_F + \xi \cdot |\mu| \cdot \min_{R \in \mathcal{O}}\|U_- - U^\star R\|_F  + \xi \cdot |\mu| \cdot r \sigma_1(X^\star)^{1/2} \label{eq:decrease}
\end{align}
\end{small}
Including two subsequent iterations in a single two-dimensional first-order system, we get the following characterization: 
\begin{small}
\begin{align*}
\begin{bmatrix}
\min_{R \in \mathcal{O}} \|U_{i+1} - U^\star R\|_F \\ \min_{R \in \mathcal{O}} \|U_i - U^\star R\|_F
\end{bmatrix} &\leq \underbrace{\begin{bmatrix}
\xi \cdot |1 + \mu| & \xi \cdot  |\mu| \\
1 & 0
\end{bmatrix}}_{:= A} \cdot 
\begin{bmatrix}
\min_{R \in \mathcal{O}} \|U_i - U^\star R \|_F \\
\min_{R \in \mathcal{O}} \|U_{i-1} - U^\star R \|_F
\end{bmatrix}
\\
&\quad \quad \quad \quad + 
\begin{bmatrix}
1 \\
0
\end{bmatrix} \cdot \xi \cdot |\mu| \cdot \sigma_1(X^\star)^{1/2} \cdot r,
\end{align*}
\end{small}
Observe that the contraction matrix $A$ has non-negative values.
Unfolding the above recursion for $J+1$ iterations, we obtain:
\begin{small}
\begin{align*}
\begin{bmatrix}
\min_{R \in \mathcal{O}} \|U_{J+1} - U^\star R\|_F \\ 
\min_{R \in \mathcal{O}} \|U_J - U^\star R\|_F
\end{bmatrix} &\leq A^{J+1} \cdot 
\begin{bmatrix}
\min_{R \in \mathcal{O}} \|U_0 - U^\star R\|_F \\
\min_{R \in \mathcal{O}} \|U_{-1} - U^\star R\|_F
\end{bmatrix} \\
&\quad \quad \quad \quad 
+ \left(\sum_{i = 0}^J A^i\right) \cdot 
\begin{bmatrix}
1 \\
0
\end{bmatrix} \cdot \xi \cdot |\mu| \cdot \sigma_1(X^\star)^{1/2} \cdot r
\end{align*}
\end{small}
Let us focus on the properties of the matrix $A$.
We re-use Lemma 2 in \cite{khanna2017iht}, after appropriate changes:

\begin{lemma}{\label{lemma:nth-power0}}
Let $A$ be the $2 \times 2$ matrix, as defined above, parameterized by $0 < \xi = \sqrt{1 - \tfrac{0.393}{\kappa \tau(X^\star)} } < 1$, and user-defined parameter $\mu$. 
The characteristic polynomial of $A$ is defined as:
\begin{align*}
\lambda^2 - \textup{\text{Tr}}(A) \cdot \lambda + \textup{\text{det}}(A) = 0
\end{align*}
where $\lambda$ represent the eigenvalue(s) of $A$.
Define $\Delta := \textup{\text{Tr}}(A)^2 - 4 \cdot \textup{\text{det}}(A) = \xi^2 \cdot (1 + \mu)^2 + 4 \xi \cdot |\mu|$.
Then, the eigenvalues of $A$ satisfy the expression: $\lambda_{1,2} = \tfrac{\xi \cdot |1 + \mu| \pm \sqrt{\Delta}}{2}$.
\end{lemma}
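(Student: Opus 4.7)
The proof is essentially a direct computation, since the claim only concerns a $2 \times 2$ matrix whose entries are given explicitly. The plan is to read off $\text{Tr}(A)$ and $\det(A)$, substitute them into the characteristic polynomial $p(\lambda) = \lambda^2 - \text{Tr}(A) \lambda + \det(A)$, and then apply the quadratic formula.

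Concretely, I would first compute $\text{Tr}(A) = \xi|1+\mu| + 0 = \xi|1+\mu|$ and $\det(A) = \xi|1+\mu|\cdot 0 - \xi|\mu|\cdot 1 = -\xi|\mu|$. Substituting into the characteristic polynomial gives
\begin{equation*}
\lambda^2 - \xi|1+\mu|\,\lambda - \xi|\mu| = 0.
\end{equation*}
The discriminant is then
\begin{equation*}
\Delta = \text{Tr}(A)^2 - 4\det(A) = \xi^2|1+\mu|^2 + 4\xi|\mu| = \xi^2(1+\mu)^2 + 4\xi|\mu|,
\end{equation*}
where the last equality uses $|1+\mu|^2 = (1+\mu)^2$. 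Applying the quadratic formula yields
\begin{equation*}
\lambda_{1,2} = \frac{\xi|1+\mu| \pm \sqrt{\Delta}}{2},
\end{equation*}
which matches the claimed expression.

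There is no real obstacle to this argument; the statement is just naming the two roots of a quadratic with explicit coefficients. The only minor point worth flagging is the simplification $|1+\mu|^2 = (1+\mu)^2$, which holds regardless of the sign of $1+\mu$, and the fact that $\Delta \geq 0$ whenever $\xi \geq 0$ and $\mu \in \mathbb{R}$, so both eigenvalues are real. Consequently, nothing more than trace/determinant bookkeeping is needed, and the lemma reduces to a one-line verification; the subsequent work of the paper, namely bounding $|\lambda_1|, |\lambda_2|$ under the conditions on $\kappa$, $\tau(X^\star)$, and $\mu$, is where the actual convergence analysis happens.
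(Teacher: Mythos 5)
Your proposal is correct and matches the paper's treatment exactly: the paper simply notes that the lemma ``follows immediately from the quadratic formula,'' and your computation of $\text{Tr}(A)=\xi|1+\mu|$, $\det(A)=-\xi|\mu|$, and the resulting roots is precisely that verification. Nothing further is needed.
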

A proof of Lemma \ref{lemma:nth-power0} follows immediately from the quadratic formula. 
In our case, $\Delta > 0$, which means that 
\begin{align}{\label{eq:D}}
\lambda_{1,2} = \frac{\sqrt{1 - \tfrac{0.393}{\kappa \tau(X^\star)} } \cdot |1 + \mu| \pm \sqrt{\left(1 - \tfrac{0.393}{\kappa \tau(X^\star)}\right)  (1 + \mu)^2 + 4 \sqrt{1 - \tfrac{0.393}{\kappa \tau(X^\star)} } |\mu|}}{2}.
\end{align}
The following lemma describes how one can compute a power of a $2 \times 2$ matrix $A$, $A^i$, through the eigenvalues $\lambda_{1, 2}$ (real and distinct eigenvalues); the proof is provided in \cite{khanna2017iht}.
\begin{lemma}[\cite{williams1992nth}]{\label{lemma:nth-power2}}
Let $A$ be a $2 \times 2$ matrix with real eigenvalues $\lambda_{1,2}$. 
Then, the following expression holds, when $\lambda_1 \neq \lambda_2$:
\begin{align*}
A^i = \frac{\lambda_1^i - \lambda_2^i}{\lambda_1 - \lambda_2} \cdot A - \lambda_1 \lambda_2 \cdot \frac{\lambda_1^{i-1} - \lambda_2^{i-1}}{\lambda_1 - \lambda_2} \cdot I
\end{align*} 
where $\lambda_i$ denotes the $i$-th eigenvalue of A in order.
\end{lemma}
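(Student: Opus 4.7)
The plan is to prove the identity by induction on $i$, with the Cayley--Hamilton theorem providing the one-step recursion that drives the induction. Write $\alpha_i := \tfrac{\lambda_1^i - \lambda_2^i}{\lambda_1 - \lambda_2}$ and $\beta_i := -\lambda_1\lambda_2 \cdot \tfrac{\lambda_1^{i-1} - \lambda_2^{i-1}}{\lambda_1 - \lambda_2}$, so the claim reads $A^i = \alpha_i A + \beta_i I$. These expressions are well-defined precisely because $\lambda_1 \neq \lambda_2$, which is given.

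First I would verify the base cases. For $i=1$, $\alpha_1 = 1$ and $\beta_1 = 0$ (using the convention that $\tfrac{\lambda_1^0 - \lambda_2^0}{\lambda_1-\lambda_2} = 0$), yielding $A = A$. For $i=2$, the characteristic polynomial of $A$ is $\lambda^2 - \textup{Tr}(A)\lambda + \textup{det}(A)$ with roots $\lambda_1, \lambda_2$, so $\textup{Tr}(A) = \lambda_1+\lambda_2$ and $\textup{det}(A) = \lambda_1\lambda_2$. The Cayley--Hamilton theorem then gives $A^2 = (\lambda_1+\lambda_2) A - \lambda_1\lambda_2 I$, which matches $\alpha_2 A + \beta_2 I$ since $\alpha_2 = \lambda_1+\lambda_2$ and $\beta_2 = -\lambda_1\lambda_2$.

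For the inductive step, assume $A^i = \alpha_i A + \beta_i I$ and multiply by $A$ to get $A^{i+1} = \alpha_i A^2 + \beta_i A$. Substituting the Cayley--Hamilton identity for $A^2$ yields $A^{i+1} = \bigl(\alpha_i(\lambda_1+\lambda_2) + \beta_i\bigr) A - \alpha_i \lambda_1\lambda_2 I$. The remaining task is a short algebraic verification that $\alpha_i(\lambda_1+\lambda_2) + \beta_i = \alpha_{i+1}$ and $-\alpha_i \lambda_1\lambda_2 = \beta_{i+1}$. The second identity is immediate from the definitions. For the first, expand the numerator $(\lambda_1^i - \lambda_2^i)(\lambda_1+\lambda_2) - \lambda_1\lambda_2(\lambda_1^{i-1}-\lambda_2^{i-1})$; the two cross terms $\lambda_1^i\lambda_2$ and $\lambda_1\lambda_2^i$ cancel, leaving $\lambda_1^{i+1} - \lambda_2^{i+1}$, as required.

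There is no real obstacle here; the statement is a classical consequence of Cayley--Hamilton together with the minimal-polynomial degree bound ($A$ satisfies a quadratic, so every power of $A$ is an affine combination of $A$ and $I$). The only subtle point is the invertibility of the Vandermonde-type denominator $\lambda_1-\lambda_2$, which is guaranteed by the distinctness hypothesis. An alternative route, worth mentioning but not pursuing, is to write $A^i = p_i A + q_i I$ for unknown scalars and determine $(p_i, q_i)$ from the two linear constraints $p_i \lambda_j + q_i = \lambda_j^i$ for $j=1,2$; solving the resulting $2\times 2$ system reproduces the same formulas. I would go with the induction route for brevity.
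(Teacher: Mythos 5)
Your proof is correct. The paper itself does not prove this lemma---it simply defers to the cited references (Williams' note on the $n$th power of a $2\times 2$ matrix, via \cite{khanna2017iht})---and your Cayley--Hamilton induction, with base cases $i=1,2$ and the recursions $\alpha_{i+1}=(\lambda_1+\lambda_2)\alpha_i+\beta_i$ and $\beta_{i+1}=-\lambda_1\lambda_2\,\alpha_i$, is exactly the standard argument behind that citation, so your write-up is a sound self-contained substitute. One trivial remark: $\tfrac{\lambda_1^0-\lambda_2^0}{\lambda_1-\lambda_2}=0$ is not a convention but an identity, since $\lambda^0=1$.
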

For the rest of the proof, we will require $|\lambda_i(A)| < 1$; we will discuss later in the text what this requirement means in terms of the parameters of the problem. 
Under this requirement, $\sum_{i = 0}^J A^i$ converges to: $\sum_{i = 0}^J A^i = \left(I - A\right)^{-1} \left(I - A^{J+1}\right)$, where:
\begin{align*}
B := \left(I - A\right)^{-1} \stackrel{|\mu| = \mu}{=} \tfrac{1}{1 - \xi(1 + 2\mu)} \cdot \begin{bmatrix} 1 & \xi \mu \\ 1 & 1 - \xi (1 + \mu) \end{bmatrix}
\end{align*}
This transforms our recursion into:
\begin{small}
\begin{align*}
&\begin{bmatrix}
\min_{R \in \mathcal{O}} \|U_{J+1} - U^\star R\|_F \\ 
\min_{R \in \mathcal{O}} \|U_J - U^\star R\|_F
\end{bmatrix} \\
&\leq A^{J+1} \cdot 
\begin{bmatrix}
\min_{R \in \mathcal{O}} \|U_0 - U^\star R\|_F \\
\min_{R \in \mathcal{O}} \|U_{-1} - U^\star R\|_F
\end{bmatrix} 
- B \cdot A^{J+1}  \cdot 
\begin{bmatrix}
1 \\
0
\end{bmatrix} \cdot \xi \cdot |\mu| \cdot \sigma_1(X^\star)^{1/2} \cdot r \\
&\quad \quad \quad \quad 
+ B \cdot 
\begin{bmatrix}
1 \\
0
\end{bmatrix} \cdot \xi \cdot |\mu| \cdot \sigma_1(X^\star)^{1/2} \cdot r \\
&\stackrel{(i)}{\leq} \frac{2|\lambda_1|^{J+1}}{|\lambda_1| - |\lambda_2|} \cdot A \cdot 
\begin{bmatrix}
\min_{R \in \mathcal{O}} \|U_0 - U^\star R\|_F \\
\min_{R \in \mathcal{O}} \|U_{-1} - U^\star R\|_F
\end{bmatrix} 
+ |\lambda_1| \cdot \frac{2|\lambda_1|^{J}}{|\lambda_1| - |\lambda_2|} \cdot 
\begin{bmatrix}
\min_{R \in \mathcal{O}} \|U_0 - U^\star R\|_F \\
\min_{R \in \mathcal{O}} \|U_{-1} - U^\star R\|_F
\end{bmatrix}  \\
&+  \frac{2|\lambda_1|^{J+1}}{|\lambda_1| - |\lambda_2|} \cdot  B \cdot A \cdot 
\begin{bmatrix}
1 \\
0
\end{bmatrix} \cdot \xi \cdot |\mu| \cdot \sigma_1(X^\star)^{1/2} \cdot r
+  |\lambda_1| \cdot \frac{2|\lambda_1|^{J}}{|\lambda_1| - |\lambda_2|} \cdot 
B \cdot 
\begin{bmatrix}
1 \\
0
\end{bmatrix} \cdot \xi \cdot |\mu| \cdot \sigma_1(X^\star)^{1/2} \cdot r \\
&\quad \quad \quad \quad 
+ B \cdot 
\begin{bmatrix}
1 \\
0
\end{bmatrix} \cdot \xi \cdot |\mu| \cdot \sigma_1(X^\star)^{1/2} \cdot r \\
&\stackrel{(ii)}{\leq} \frac{2|\lambda_1|^{J+1}}{|\lambda_1| - |\lambda_2|} \cdot (A + I) \cdot 
\begin{bmatrix}
\min_{R \in \mathcal{O}} \|U_0 - U^\star R\|_F \\
\min_{R \in \mathcal{O}} \|U_{-1} - U^\star R\|_F
\end{bmatrix} 
+ \frac{2|\lambda_1|^{J+1}}{|\lambda_1| - |\lambda_2|} \cdot  B \cdot (A + I) \cdot 
\begin{bmatrix}
1 \\
0
\end{bmatrix} \cdot \xi \cdot |\mu| \cdot \sigma_1(X^\star)^{1/2} \cdot r \\
&\quad \quad \quad \quad 
+ B \cdot 
\begin{bmatrix}
1 \\
0
\end{bmatrix} \cdot \xi \cdot |\mu| \cdot \sigma_1(X^\star)^{1/2} \cdot r \\
&\stackrel{(iii)}{\leq} \frac{2|\lambda_1|^{J+1}}{|\lambda_1| - |\lambda_2|} \cdot \begin{bmatrix}
1+ \xi \cdot |1 + \mu| & \xi \cdot  |\mu| \\
1 & 1
\end{bmatrix} \cdot 
\begin{bmatrix}
\min_{R \in \mathcal{O}} \|U_0 - U^\star R\|_F \\
\min_{R \in \mathcal{O}} \|U_{-1} - U^\star R\|_F
\end{bmatrix} \\
&\quad \quad \quad \quad + \frac{2|\lambda_1|^{J+1}}{|\lambda_1| - |\lambda_2|} \cdot 
\tfrac{1}{1 - \xi(1 + 2\mu)} \cdot \begin{bmatrix}
1 + \xi(1 + 2\mu) \\
2
\end{bmatrix} \cdot \xi \cdot |\mu| \cdot \sigma_1(X^\star)^{1/2} \cdot r \\
&\quad \quad \quad \quad \quad \quad 
+ \tfrac{1}{1 - \xi(1 + 2\mu)} \cdot 
\begin{bmatrix}
1 \\
1
\end{bmatrix} \cdot \xi \cdot |\mu| \cdot \sigma_1(X^\star)^{1/2} \cdot r 
\end{align*}
\end{small}
where for $(i)$ we used Lemma \ref{lemma:nth-power2}, and the fact that $B$ and $A$ have non-negative values to retain the bound,
$(ii)$ is due to $|\lambda_1| < 1$,
$(iii)$ is obtained by explicitly computing \begin{tiny}$B A \begin{bmatrix} 1 \\ 0 \end{bmatrix}$\end{tiny} and \begin{tiny}$B  \begin{bmatrix} 1 \\ 0 \end{bmatrix}$\end{tiny}.

Focusing on the top term of this expression and under the assumption that $\min_{R \in \mathcal{O}} \|U_0 - U^\star R\|_F = \min_{R \in \mathcal{O}} \|U_{-1} - U^\star R\|_F$, we have:
\begin{small}
\begin{align*}
\min_{R \in \mathcal{O}} \|U_{J+1} - U^\star R\|_F &\leq \frac{2|\lambda_1|^{J+1}}{|\lambda_1| - |\lambda_2|} \cdot \left((1 + \xi(1 + 2\mu))\min_{R \in \mathcal{O}} \|U_0 - U^\star R\|_F +  \tfrac{\xi(1 + \xi(1 + 2\mu))}{1 - \xi(1 + 2\mu)} \cdot |\mu| \cdot \sigma_1(X^\star)^{1/2} \cdot r \right) \\ &\quad \quad \quad \quad \quad \quad + \tfrac{\xi \mu}{1 - \xi(1 + 2\mu)} \cdot  \sigma_1(X^\star)^{1/2} \cdot r
\end{align*}
\end{small}
We will use the assumption on $\mu$; then
\begin{align*}
 \tfrac{\xi(1 + \xi(1 + 2\mu))}{1 - \xi(1 + 2\mu)} \cdot |\mu| \cdot \sigma_1(X^\star)^{1/2} \cdot r \leq  \tfrac{\xi(1 + \xi(1 + \tfrac{1}{10^3}))}{1 - \xi(1 + \tfrac{1}{10^3})} \cdot \tfrac{\varepsilon \cdot \sigma_r(X^\star)^{1/2}}{2\cdot 10^3 \sqrt{\kappa \tau(X^\star)}}.
\end{align*}
and
\begin{align*}
\tfrac{\xi |\mu|}{1 - \xi(1 + 2\mu)} \cdot  \sigma_1(X^\star)^{1/2} \cdot r \leq \tfrac{\xi}{1 - \xi(1 + \tfrac{1}{10^3})} \cdot \tfrac{\varepsilon \cdot \sigma_r(X^\star)^{1/2}}{2\cdot 10^3 \sqrt{\kappa \tau(X^\star)}}
\end{align*}
which further leads to:
\begin{small}
\begin{align*}
\min_{R \in \mathcal{O}} \|U_{J+1} - U^\star R\|_F &\leq \frac{2|\lambda_1|^{J+1}}{|\lambda_1| - |\lambda_2|} \cdot \left((1 + \xi(1 + 2\mu))\min_{R \in \mathcal{O}} \|U_0 - U^\star R\|_F +  \tfrac{\xi(1 + \xi(1 + \tfrac{1}{10^3}))}{1 - \xi(1 + \tfrac{1}{10^3})} \cdot \tfrac{\varepsilon \cdot \sigma_r(X^\star)^{1/2}}{2\cdot 10^3 \sqrt{\kappa \tau(X^\star)}} \right) \\ &\quad \quad \quad \quad \quad \quad + \tfrac{\xi}{1 - \xi(1 + \tfrac{1}{10^3})} \cdot \tfrac{\varepsilon \cdot \sigma_r(X^\star)^{1/2}}{2\cdot 10^3 \sqrt{\kappa \tau(X^\star)}}
\end{align*}
\end{small}
Under the assumption that $\min_{R \in \mathcal{O}} \|U_0 - U^\star R\|_F \leq \tfrac{ \sigma_r(X^\star)^{1/2}}{10^3 \sqrt{\kappa \tau(X^\star)}}$, we get the expression:
\begin{small}
\begin{align*}
\min_{R \in \mathcal{O}} \|U_{J+1} - U^\star R\|_F &\leq \frac{2|\lambda_1|^{J+1}}{|\lambda_1| - |\lambda_2|} \cdot \left((1 + \xi(1 + 2\mu)) +   \tfrac{\xi(1 + \xi(1 + \tfrac{1}{10^3}))}{1 - \xi(1 + \tfrac{1}{10^3})}\cdot \tfrac{\varepsilon}{2} \right) \cdot \tfrac{\sigma_r(X^\star)^{1/2}}{10^3 \sqrt{\kappa \tau(X^\star)}} \\ 
&\quad \quad \quad \quad + \tfrac{\xi \cdot \varepsilon}{2(1 - \xi(1 + \tfrac{1}{10^3}))} \cdot \tfrac{\sigma_r(X^\star)^{1/2}}{10^3 \sqrt{\kappa \tau(X^\star)}} \\
&\lesssim c^{J+1} \cdot \min_{R \in \mathcal{O}} \|U_0 - U^\star R\|_F + O\left(\mu\right),
\end{align*}
\end{small}
for a constant $c < 1$.
In words, the proposed algorithm achieves a linear convergence rate in iterate distances (first term on RHS), up to a constant proportional to the the momentum hyperparameter $\mu$ (second term on RHS).

The above hold assuming $|\lambda_i(A)| < 1$; we now focus on the requirement that $|\lambda_i(A)| < 1$. 
Based on the expression in \eqref{eq:D}, and using the facts that: $i)$ for $\delta_{2r} \leq \tfrac{1}{10}$, we have $\kappa \leq 1.223$, and $ii)$ by assumption $\mu \leq \tfrac{1}{10^3}$, we observe that $\lambda_i(A) < 1$ for $\tau(X^\star) \leq 78$; see also Figure \ref{fig:lambdas}. 
\begin{figure}[!ht]
\centering
\includegraphics[width=0.45\textwidth]{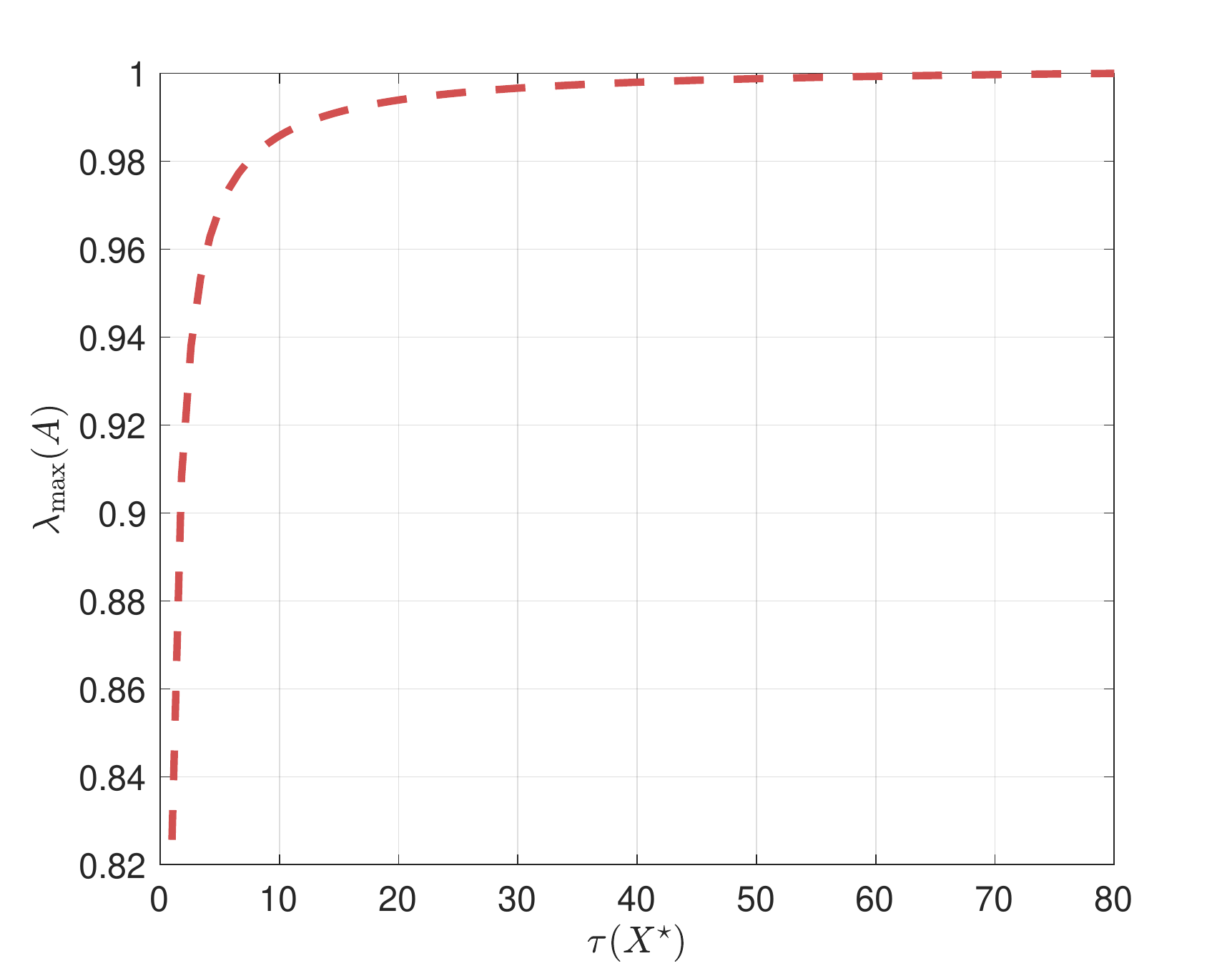} 
\includegraphics[width=0.45\textwidth]{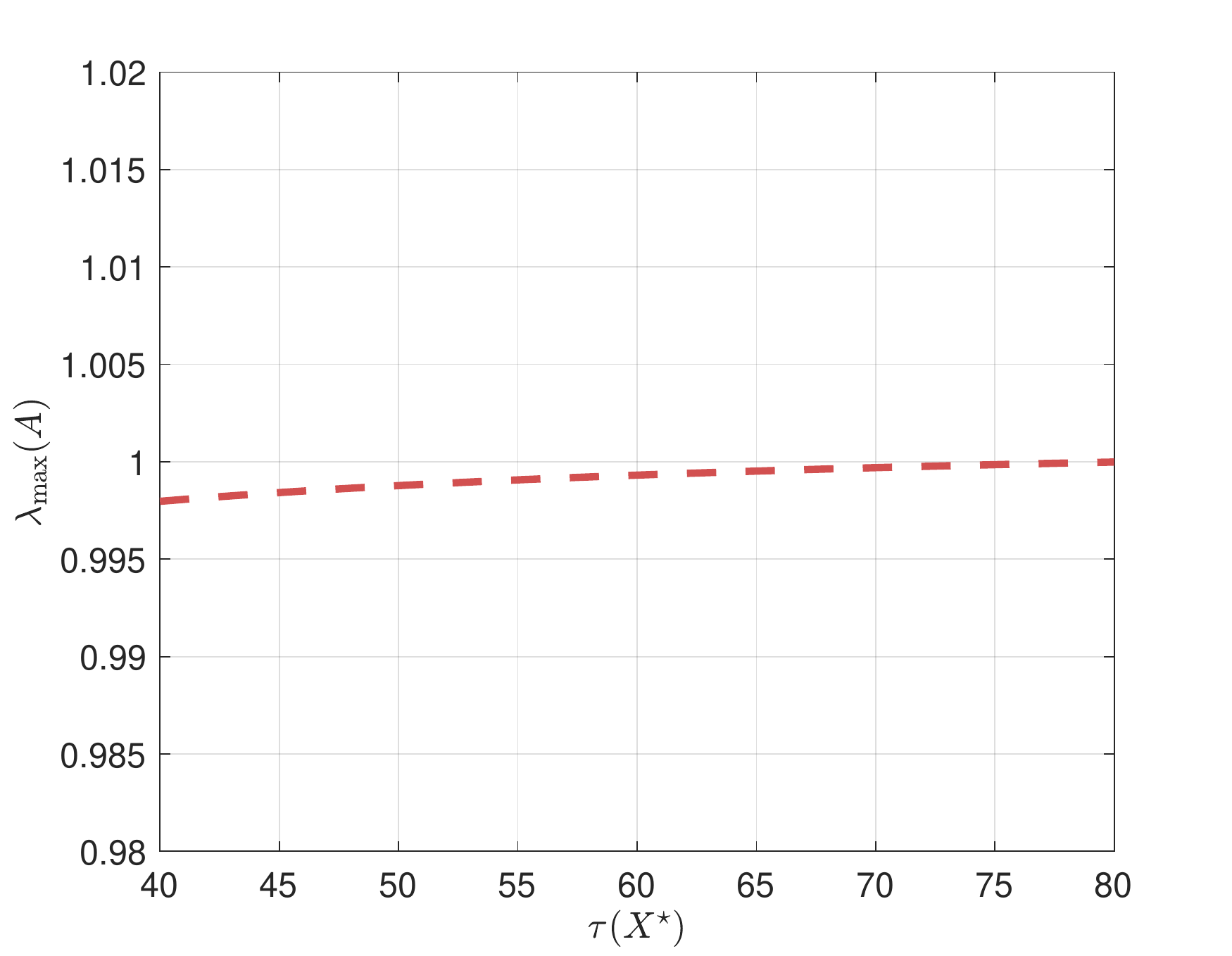} 
 \caption{Behavior of maximum $|\lambda_{i}(A)|$ for various values of $\tau(X^\star)$; right plot is a zoomed-in version of the left plot.}\label{fig:lambdas}
\end{figure} 
For our theory next, we will assume $\tau(X^\star) \leq 50$, so that the following requirement holds:
In \eqref{eq:decrease}, we observe that if $\min_{R \in \mathcal{O}} \|U - U^\star R\|_F \leq \tfrac{\sigma_r(X^\star)^{1/2}}{10^3 \sqrt{\kappa\tau(X^\star)}}$ and $\min_{R \in \mathcal{O}} \|U_- - U^\star R\|_F \leq \tfrac{\sigma_r(X^\star)^{1/2}}{10^3 \sqrt{\kappa\tau(X^\star)}}$, and under the assumptions on $\mu$, we have:
\begin{small}
\begin{align*}
\min_{R \in \mathcal{O}} \|U_+ - U^\star R\|_F &\leq \xi \cdot |1 + \mu| \cdot \tfrac{\sigma_r(X^\star)^{1/2}}{10^3 \sqrt{\kappa\tau(X^\star)}} + \xi \cdot |\mu| \cdot \tfrac{\sigma_r(X^\star)^{1/2}}{10^3 \sqrt{\kappa\tau(X^\star)}} + \xi |\mu| \cdot r \sigma_1(X^\star)^{1/2} \\
&\leq \xi \left(1 + \tfrac{3}{10^3} \right) \tfrac{\sigma_r(X^\star)^{1/2}}{10^3 \sqrt{\kappa\tau(X^\star)}} \leq \tfrac{\sigma_r(X^\star)^{1/2}}{10^3 \sqrt{\kappa\tau(X^\star)}}
\end{align*}
\end{small}
for $\xi < 0.9968$ for $\tau(X^\star) \leq 50$, and for any $\varepsilon \in (0,1)$. 
\emph{I.e.}, $U_+$ satisfies $\min_{R \in \mathcal{O}} \|U_+ - U^\star R\|_F \leq \tfrac{\sigma_r(X^\star)^{1/2}}{10^3 \sqrt{\kappa\tau(X^\star)}}$.
Since the distance remains bounded after each iteration, Lemma 6 hold for all $i$.

\section{$\mu$ECoG simulation details}
In section~\ref{sec:Neuro}, we presented a novel Neuroscience application for 
low rank matrix sensing and our accelerated Procrustes flow algorithm. 
We presented results for recovering individual neuron activities from 
stimulus evoked cortical surface electrical potentials ($\mu$ECoG).
Here, we give additional details and results related to this experiment.

For the simulation, we considered a spiking neural network model with $1000$ neurons
($200$ inhibitory and $800$ excitatory neurons).
We then simluted 20 seconds of spiking activities for these neurons and sampled these activities 
at a sampling rate of $200Hz$. 
The input stimulus occurred every $2$ seconds and lasted $0.3$secs.
Hence, the (unknown) neuronal activity matrix $X$ was of size $1000\times 4200$. 
The neurons are assumed to be uniformly distributed along the depth between 1-210$\mu$m from
the surface. 

The surface potentials $y$ (single $\mu$ECoG electrode recording) 
was then computed from these neuronal potentials using the distance
dependent lowpass filtering, where the cutoff frequency [$f_c(d)$] is defined as in sec.~\ref{sec:Neuro},
 the amplitude attenuated according to the distance, and then summing up the potentials.
 We chose the distance parameters to be
$\Delta_1=1,\Delta_2=0.25,h=10$. Note that, these operations (low pass filtering and attenuation)
can be combined into a linear transformation $\mathcal{A}(X)=\mbf{A}\tvec(X)$ on the neuronal activity matrix $X$.
 The matrix $\mbf{A}\in\RR^{m\times nm}$ is a banded matrix
assuming an FIR filter for  lowpass filtering. In our case, $m \cdot n=4.2\times10^6$.
The objective of the low rank matrix sensing model is to recover the simulated
neuronal activity matrix $X$ from the surface potentials $y$.

 \begin{figure}[tb!]
 \begin{center}
\includegraphics[width=1.0\textwidth,,trim={3.0cm 0cm  3.0cm 0cm},clip]{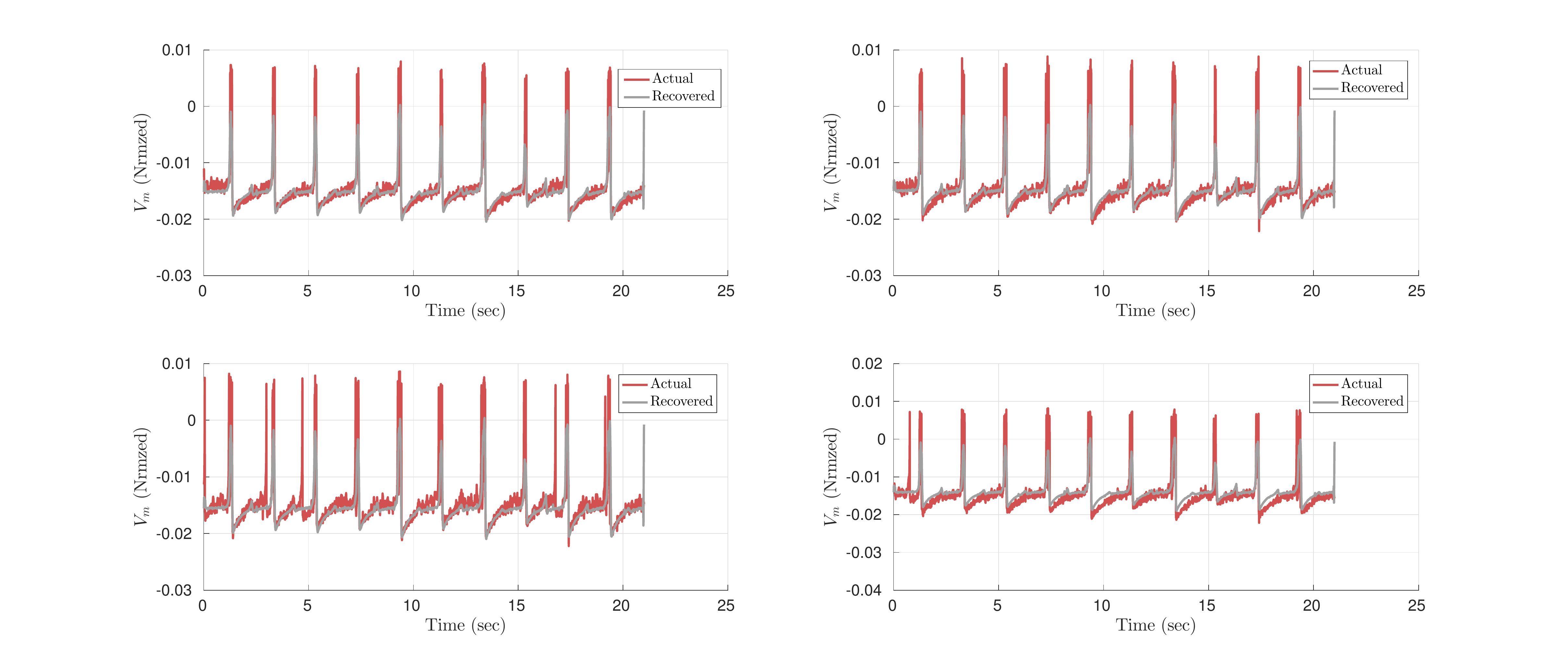}
\vskip -0.2in
 \caption{The actual membrane potentials during the 20secs simulation
 and the recovered potentials for the four neurons.
 }\label{fig:Neuro20}
 \end{center}
\end{figure}

 \begin{figure}[tb!]
 \begin{center}
\includegraphics[width=0.45\textwidth]{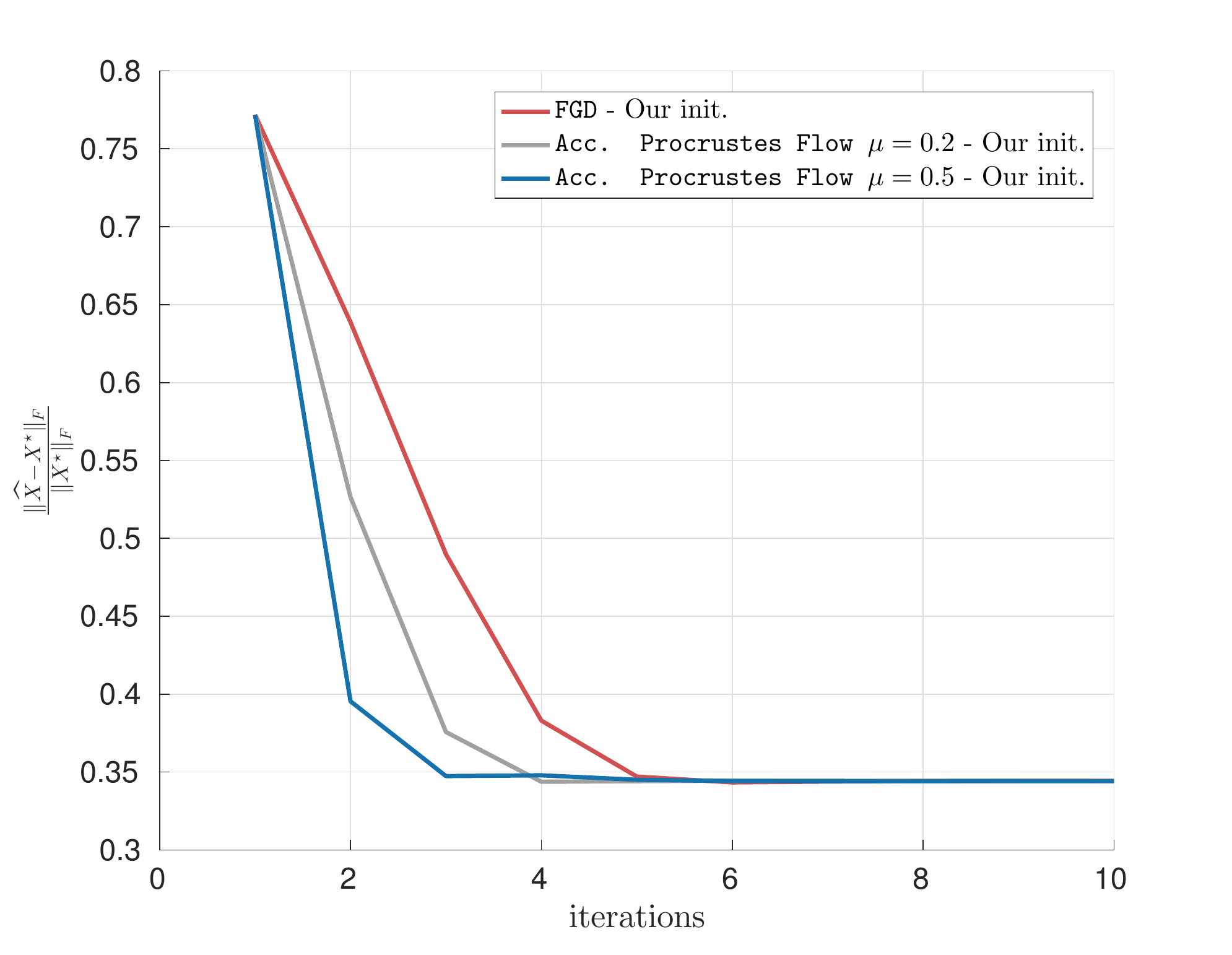}
\includegraphics[width=0.45\textwidth]{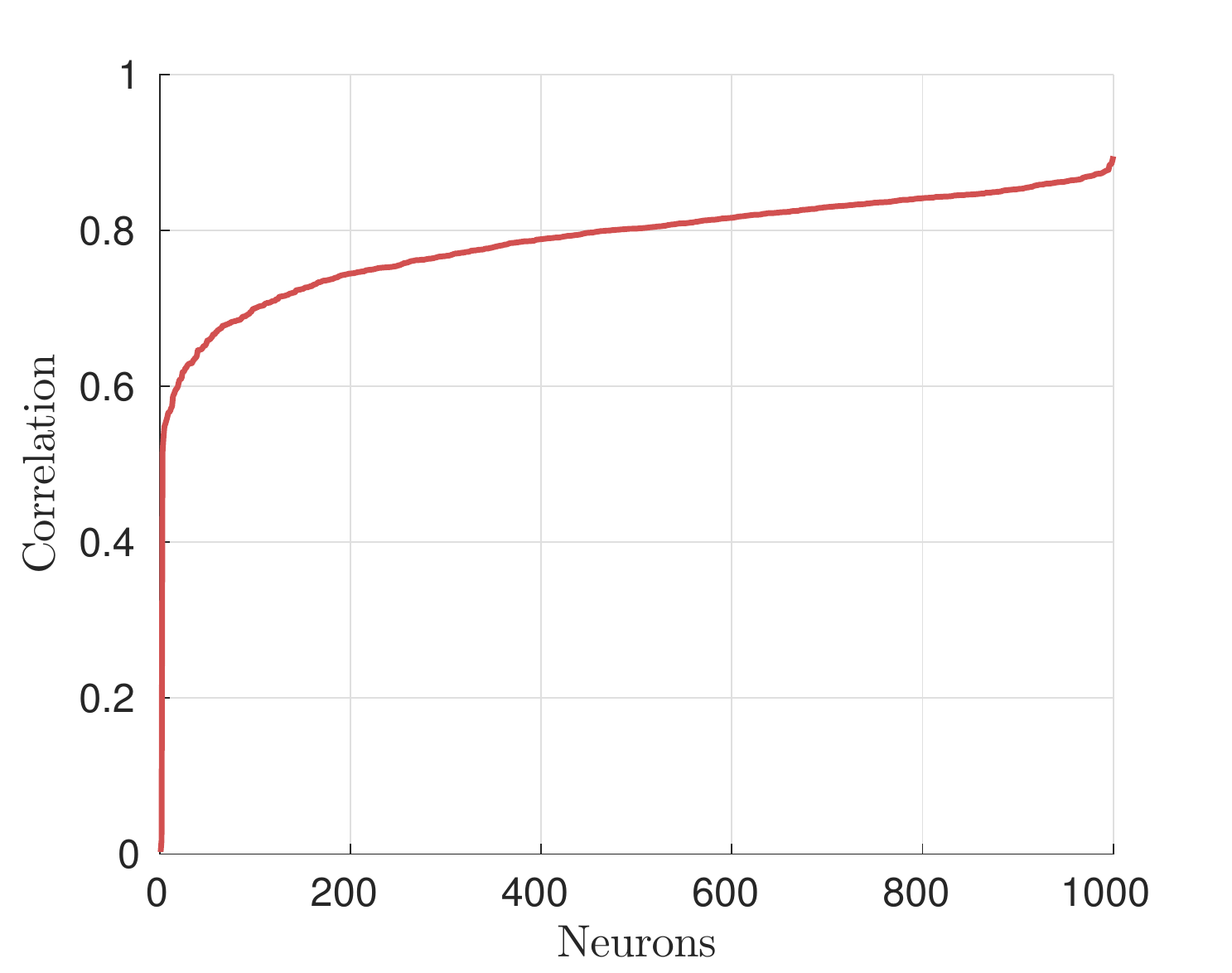}
\vskip -0.1in
 \caption{Neuronal activity recovery:  Convergence behavior vs. number of iterations (left),
 Correlation between the recovered and actual membrane potentials for the 1000 neurons (right).
 }\label{fig:NeuroConv}
 \end{center}
\end{figure}

The neuronal activity matrix $X$ is low rank  with $\texttt{rank}(X) \leq r$,
and can be written as $X = UV^\top$, for $U \in \mathbb{R}^{n \times r},
V\in \mathbb{R}^{m \times r}$. The factorized low rank matrix sensing problem becomes
\begin{equation}\label{eq:bifactobj}
\min_{U \in \mathbb{R}^{n \times r}, V\in \mathbb{R}^{m\times r}} ~\tfrac{1}{2} \|\mathcal{A}(UV^\top) - y\|_2^2.
\end{equation}
Rectangular version of the Procrustes flow algorithm has been studied in \cite{ tu2016low,park2016non, hsieh2017non} to solve the above problem. Here, we consider the following accelerated 
Procrustes flow:
\begin{align*}
U_{i+1} &= Z_{i} - \eta \mathcal{A}^\dagger \left(\mathcal{A}(Z_i W_i^\top) - y\right) \cdot W_i, \\
V_{i+1} &= W_{i} - \eta \mathcal{A}^\dagger \left(\mathcal{A}(Z_i W_i^\top) - y\right)^\top \cdot Z_i, \\
Z_{i+1} &= U_{i+1} + \mu \left(U_{i+1} - U_i\right),\\
W_{i+1} &= V_{i+1} + \mu \left(V_{i+1} - V_i\right).
\end{align*}
$Z_i,W_i$ are the auxiliary variables that accumulate the ``momentum" of
the variables $U,V$; the dimensions are apparent from the context. 
$\mu$ is the momentum parameter that weighs how the previous estimates $U_i,V_i$ will 
be mixed with 
the current estimate $U_{i+1},V_{i+1}$ to generate $Z_{i+1},W_{i+1}$.
The parameters $\eta$ and $\mu$ and the inital points are selected as in Algorithm~\ref{alg:algo1}.
The rank $r=10$ for our data.

Figure~\ref{fig:Neuro20} plots the recovered potentials (along with the actual membrane potentials) for the four neurons (4 rows of $\widehat{X}=\widehat{U}\widehat{V}^\top$ and $X$),
corresponding to the four neurons discussed in sec.~\ref{sec:Neuro} over 20 secs of the simulation.
The convergence behavior of factored gradient descent (rectangluar Procrustes flow) and our 
accelerated Procrustes flow (with $\mu=0.2$ and $\mu=0.5$) algorithms for the neural activity recovery problem 
 are given in the left plot of fig.~\ref{fig:NeuroConv}.
We also give the correlation between the recovered and actual membrane potentials for the 1000 neurons in
the right plot. We note that for most neurons ($\sim90\%$),  the recovered potentials are close to the actual simulated ones.

\end{document}